\newif\if@borderstar
\def\bordermatrix{\@ifnextchar*{%
\@borderstartrue\@bordermatrix@i}{\@borderstarfalse\@bordermatrix@i*}%
}
\def\@bordermatrix@i*{\@ifnextchar[{\@bordermatrix@ii}{\@bordermatrix@ii[()]}}
\def\@bordermatrix@ii[#1]#2{%
\begingroup
\m@th\@tempdima8.75\p@\setbox\z@\vbox{%
\def\cr{\crcr\noalign{\kern 2\p@\global\let\cr\endline }}%
\ialign {$##$\hfil\kern 2\p@\kern\@tempdima &\thinspace %
\hfil $##$\hfil &&\quad\hfil $##$\hfil\crcr\omit\strut %
\hfil\crcr\noalign{\kern -\baselineskip}#2\crcr\omit %
\strut\cr}}%
\setbox\tw@\vbox{\unvcopy\z@\global\setbox\@ne\lastbox}%
\setbox\tw@\hbox{\unhbox\@ne\unskip\global\setbox\@ne\lastbox}%
\setbox\tw@\hbox{%
$\kern\wd\@ne\kern -\@tempdima\left\@firstoftwo#1%
\if@borderstar\kern2pt\else\kern -\wd\@ne\fi%
\global\setbox\@ne\vbox{\box\@ne\if@borderstar\else\kern 2\p@\fi}%
\vcenter{\if@borderstar\else\kern -\ht\@ne\fi%
\unvbox\z@\kern-\if@borderstar2\fi\baselineskip}%
\if@borderstar\kern-2\@tempdima\kern2\p@\else\,\fi\right\@secondoftwo#1$%
}\null \;\vbox{\kern\ht\@ne\box\tw@}%
\endgroup
}
\begin{document}

\title{Topological characterizations to three types of covering approximation operators
}         

\author{Aiping Huang, William Zhu~\thanks{Corresponding author.
E-mail: williamfengzhu@gmail.com (William Zhu)}}
\institute{
Lab of Granular Computing,\\
Zhangzhou Normal University, Zhangzhou 363000, China}



\date{\today}          
\maketitle
\begin{abstract}
Covering-based rough set theory is a useful tool to deal with inexact, uncertain or vague knowledge in information systems.
Topology, one of the most important subjects in mathematics, provides mathematical tools and interesting topics in studying information systems
and rough sets.
In this paper, we present the topological characterizations to three types of covering approximation operators.
First, we study the properties of topology induced by the sixth type of covering lower approximation operator.
Second, some topological characterizations to the covering lower approximation operator to be an interior operator are established.
We find that the topologies induced by this operator and by the sixth type of covering lower approximation operator are the same.
Third, we study the conditions which make the first type of covering upper approximation operator be a closure operator, and find that
the topology induced by the operator is the same as the topology induced by the fifth type of covering upper approximation operator.
Forth, the conditions of the second type of covering upper approximation operator to be a closure operator and the properties of topology
induced by it are established.
Finally, these three topologies space are compared.
In a word, topology provides a useful method to study the covering-based rough sets.

\textbf{Keywords:}Rough sets, Covering, Topology, Approximation operators, Closure operator.

\end{abstract}

\section*{Introduction}

Rough set theory \cite{Pawlak82Rough} was proposed by Pawlak to deal with granularity in information systems.
It is based on equivalence relation.
However, the applications of equivalent relation are quite limited, hence classical rough set theory has been extended to tolerance relation~\cite{SkowronStepaniuk96tolerance}, similarity relation~\cite{SlowinskiVanderpooten00AGeneralized} and even arbitrary
binary relation~\cite{LiuZhu08TheAlgebraic,Yao98OnGeneralizing,Yao98Relational,Yao98Constructive,ZhuWang06ANew}.
Through extending a partition to a covering, rough set theory is generalized to covering-based rough sets
~\cite{QinGaoPei07OnCovering,WangZhuZhu10Structure,WangZhu12Quantitative,ZhuWang02Some}.
Because of its high efficiency in many complicated problems such as attribute reduction and rule learning in incomplete
information/decision~\cite{QianLiangLiWangMa10Approximation}, covering-based rough set theory has been attracting increasing
research interest~\cite{WangYangYangWu12Relationships,YaoYao12Covering}.

Topology, one of the most important subjects in mathematics, provides mathematical tools and interesting topics in studying information systems
and rough sets~\cite{Kondo05OnTheStructure,QinPei05OntheTopological,QinYangPei08Generalized}.
This connects rough set theory with topological theory which have deep theoretical and practical significance beyond doubt.
Indeed, Polkowski in \cite{Lech02Rough} pointed: topological spaces of rough set theory were recognized early in the framework of topology of
partitions. Skowron et al.~\cite{SkowronStepaniuk96tolerance} generalized the classical approximation spaces to tolerance approximation
spaces, and discussed the problems of attribute reduction in these spaces.
In addition, connections between fuzzy rough set theory and fuzzy topology were also investigated in \cite{LiZhang08Rough,QinPei05OntheTopological}.
In a word, topology provides an interesting and natural research topic in rough set theory.

However, there are some problems still remain to be solved.
For example, the topological characterizations to covering approximation operators to be closure or interior operators.
Gei et al. has discussed this topic in \cite{GeBaiYun12Topological}.
In contrast, we establish other characterizations to covering approximation operators to be closure or interior operators.
Moreover, the properties of topologies induced by them are studied in this paper.
The sixth type of covering lower approximation operator is an interior operator without any limiting conditions, then it can induce a topology.
The properties of the topology are discussed in the first part of this paper.
In the second part, some topological characterizations to covering lower approximation operator to be an interior operator are established.
It is interesting to find that the topologies induced by this operator and by the sixth type of covering lower approximation operator are the same.
In the third part, we study the conditions which make the first type of covering upper approximation operator be a closure operator, and find that
the topology induced by the operator is different from the one induced by lower approximation operator.
However, it is the same as the topology induced by the fifth type of covering upper approximation operator.
The conditions of the second type of covering approximation operator are presented in the forth part of this paper.
We find that the topology induced by this operator is a preudo-discrete topology.
Finally, these three topologies space are compared.
In fact, we discuss total five types of covering approximation operators, however, there are two pair of covering approximation operators induce
the same topological space, namely, the first and the fifth, and the lower and the sixth type of covering lower approximation operator.
Hence, there are three types of topological structures in this paper.
In a word, topology provides a useful method to study the covering-based rough sets.

The reminder of this paper is organized as follows.
In Section \ref{S:Basic definitions}, we present the fundamental concepts and properties of approximation operators in covering-based rough sets
and topologys.
Section \ref{Thepropertiesoftopologyinducedby} studies the topological characterizations to the sixth type of covering upper approximation operator.
In section \ref{Topologicalcharacterizationtothecoveringlowerapproximationoperator}, we study some conditions which make the lower covering
approximation operator be an interior operator.
Section \ref{Theconditionofbeingatopologyandthetopologyinducedby} studies the conditions which make the first type of covering upper approximation
operator be a closure operator.
In section \ref{Topologicalcharacterizationtothesecondtypeofcoveringapproximationoperators}, the condition of the second type of
covering upper approximation operator to be a closure operator and the properties of topology induced by it are presented.
Section \ref{Relationshipsamongthesethreetopologicalspaces}, we compare this three types of topological spaces.
This paper concludes in Section \ref{Conslusion}.
\section{Basic definitions}
\label{S:Basic definitions}

In this section, we introduce the fundamental ideas about Pawlak's Rough sets, coverings, and the existing five types of covering-based rough sets.

\subsection{Fundamentals of Pawlak's Rough sets}
\label{FundamentalsofPawlak'sRoughsets}

Let $U$ be a finite set and $R$ be an equivalence relation on $U$.
$R$ will generate a partition $U/R=\{Y_{1},Y_{2},\cdots,Y_{m}\}$ on $U$, where $Y_{1},Y_{2},\cdots,Y_{m}$ are the equivalence classes
generated by $R$.
$\forall X\subseteq U$, the lower and upper approximations of $X$, are, respectively, defined as follows:\\
$R_{\ast}(X)=\bigcup\{Y_{i}\in U/R:Y_{i}\subseteq X\}$,\\
$R^{\ast}(X)=\bigcup\{Y_{i}\in U/R:Y_{i}\bigcap X\neq \emptyset\}$.

\begin{proposition}\label{TheproperitiesofPawlakRoughset}
Let $\emptyset$ be the empty set and $-X$ the complement of $X$ in $U$.
Pawlak's rough sets have the following properties:\\
(1L) $R_{\ast}(U)=U$~~~~~~~~~~~~~~~~~~~~~~~~~~~~~~~~~~~~~~(1H) $R^{\ast}(U)=U$\\
(2L) $R_{\ast}(\emptyset)=\emptyset$~~~~~~~~~~~~~~~~~~~~~~~~~~~~~~~~~~~~~~~~(2H) $R^{\ast}(\emptyset)=\emptyset$\\
(3L) $R_{\ast}(X)\subseteq X$~~~~~~~~~~~~~~~~~~~~~~~~~~~~~~~~~~~~~(3H) $X\subseteq R^{\ast}(X)$\\
(4L) $R_{\ast}(X\bigcap Y)=R_{\ast}(X)\bigcap R_{\ast}(Y)$~~~~~(4H) $R^{\ast}(X\bigcup Y)=R^{\ast}(X)\bigcup R^{\ast}(Y)$\\
(5L) $R_{\ast}(R_{\ast}(X))=R_{\ast}(X)$~~~~~~~~~~~~~~~~~~~~~(5H) $R^{\ast}(R^{\ast})(X)=R^{\ast}(X)$\\
(6LH) $R_{\ast}(-X)=-R^{\ast}(X)$\\
(7L) $X\subseteq Y\Rightarrow R_{\ast}(X)\subseteq R_{\ast}(X)$~~~~~~~~~~~(7H) $X\subseteq Y\Rightarrow R^{\ast}(X)\subseteq R^{\ast}(X)$\\
(8L) $R_{\ast}(-R_{\ast}(X))=-R_{\ast}(X)$~~~~~~~~~~~~~~~(8H) $R^{\ast}(-R^{\ast})(X)=-R^{\ast}(X)$\\
(9L) $\forall K\in U/R, R_{\ast}(K)=K$~~~~~~~~~~~~~~~~~(9H) $\forall K\in U/R, R^{\ast}(K)=K$
\end{proposition}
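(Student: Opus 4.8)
The plan is to derive every item directly from the two defining unions, leaning on three structural facts about the partition $U/R=\{Y_{1},\dots,Y_{m}\}$: the classes are nonempty, they are pairwise disjoint, and they cover $U$; equivalently, each $x\in U$ lies in exactly one class, which I write $[x]_{R}$. With these in hand, the elementwise reformulations $x\in R_{\ast}(X)\iff [x]_{R}\subseteq X$ and $x\in R^{\ast}(X)\iff [x]_{R}\cap X\neq\emptyset$ (both immediate from the definitions together with disjointness, since the class witnessing membership must be $[x]_{R}$) reduce each claim to a short set-membership argument.

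First I would dispatch the boundary and monotonicity items. Properties (1L), (1H), (2L), (2H) follow because every class is nonempty and contained in $U$: no class lies inside $\emptyset$, and every class meets $U$. For (3L) the union defining $R_{\ast}(X)$ ranges only over classes inside $X$, hence is contained in $X$; for (3H), any $x\in X$ satisfies $x\in[x]_{R}$ with $[x]_{R}\cap X\neq\emptyset$, so $x\in R^{\ast}(X)$. The monotonicity laws (7L), (7H) are equally direct: if $X\subseteq Y$ then a class contained in (resp.\ meeting) $X$ is contained in (resp.\ meets) $Y$, so the index set of the defining union only grows. (The statement of (7) should read $R_{\ast}(X)\subseteq R_{\ast}(Y)$ and $R^{\ast}(X)\subseteq R^{\ast}(Y)$.)

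Next I would prove the duality (6LH) from the elementwise characterizations: $x\in R_{\ast}(-X)\iff[x]_{R}\subseteq -X\iff[x]_{R}\cap X=\emptyset\iff x\notin R^{\ast}(X)$. This single identity lets me pair the remaining L- and H-versions, proving one and transferring to the other. The distributivity laws (4L), (4H) then rest on $[x]_{R}\subseteq X\cap Y\iff[x]_{R}\subseteq X\text{ and }[x]_{R}\subseteq Y$, together with the analogous disjunction for $[x]_{R}\cap(X\cup Y)\neq\emptyset$; here uniqueness of the class through $x$ is what guarantees that the two images interact at the level of a single class rather than producing spurious overlaps.

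The only items needing a separate observation are the idempotence and complement-stability laws (5L), (5H), (8L), (8H) and the fixed-point laws (9L), (9H). The key lemma I would isolate is that any set $Z$ which is a union of equivalence classes is \emph{exact}, i.e.\ $R_{\ast}(Z)=R^{\ast}(Z)=Z$; this is immediate from the elementwise descriptions with disjointness, and its complement $-Z$ is again a union of classes. Since $R_{\ast}(X)$ and $R^{\ast}(X)$ are by construction unions of classes, the lemma yields (5L), (5H) at once and, applied to the complements, yields (8L), (8H); taking $Z=K$ a single class gives (9L), (9H), because the only class contained in $K$, and the only class meeting $K$, is $K$ itself. I expect no genuine obstacle here: the work is entirely organizational, and the main care is to state and reuse this exactness lemma so that the fixed-point, idempotence, and complement-stability properties are not each reproved from scratch.
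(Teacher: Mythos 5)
The paper states this proposition as classical background from Pawlak's theory and gives no proof of its own, so there is nothing to compare your argument against; judged on its own terms, your proof is correct and complete. The two elementwise characterizations $x\in R_{\ast}(X)\iff [x]_{R}\subseteq X$ and $x\in R^{\ast}(X)\iff [x]_{R}\cap X\neq\emptyset$ do follow from pairwise disjointness of the classes exactly as you say, and they immediately give (1)--(4), (6LH) and (7). Your exactness lemma --- any union of equivalence classes $Z$ satisfies $R_{\ast}(Z)=R^{\ast}(Z)=Z$, and $-Z$ is again a union of classes --- is the right organizing device: since $R_{\ast}(X)$ and $R^{\ast}(X)$ are by construction unions of classes, it delivers (5), (8) and (9) in one stroke rather than by separate computations. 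You are also right that (7L)/(7H) as printed contain typos (the conclusions should read $R_{\ast}(X)\subseteq R_{\ast}(Y)$ and $R^{\ast}(X)\subseteq R^{\ast}(Y)$), and the same is true of (5H)/(8H), where $R^{\ast}(R^{\ast})(X)$ should be parsed as $R^{\ast}(R^{\ast}(X))$; your proof establishes the intended statements.
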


\subsection{Covering-Based rough sets}
In this section, we present some basic concepts of covering-based rough sets that are used in this paper. $P(U)$ denotes the family of all subsets
of $U$.
$\mathcal{C}$ is a family of nonempty subsets of $U$.
If $\bigcup \mathcal{C}=U$, then $\mathcal{C}$ is called a covering of $U$ and the order pair $(U,\mathcal{C})$ a covering approximation space.
Let $\mathcal{C}$ be a covering of $U$, $x\in U$.
Denote $Md(x)=\{K\in \mathcal{C}:x\in K\bigwedge (\forall S\in \mathcal{C}\bigwedge x\in S\bigwedge S\subseteq K\Rightarrow K=S)\}$,
$I(x)=\bigcup_{x\in K}K$, and $N(x)=\bigcap_{x\in K}K$.
$Md(x)$, $I(x)$, and $N(x)$, which are called the minimal description of $x$, the indiscernible neighborhood of $x$, and the neighborhood of $x$,
respectively, are first proposed in \cite{BonikowskiBryniarskiWybraniecSkardowska98Extensions}, \cite{ZhuWang06Relationships}
and \cite{Zhu07Topological}.
If $x\in U$, $|Md(x)|=1$, then $\mathcal{C}$ is called unary covering.
This concept was first proposed in \cite{ZhuWang06Relationships}.

In this paper, we study the following types of covering approximation operators.

\begin{definition}\label{definitionofloweroperator}
Let $\mathcal{C}$ be a covering of $U$. The operations $CL$, $SL$ and $XL:P(U)\rightarrow P(U)$ are defined as follows, respectively:
$\forall X\in P(U)$,\\
$CL(X)=\bigcup\{K\in \mathcal{C}:K\subseteq X\}$,\\
$SL(X)=\{x\in U:\forall K\in \mathcal{C}(x\in K\Rightarrow K \subseteq X)\}=\{x\in X: I(x)\subseteq X\}$,\\
$XL(X)=\{x\in U:N(x)\subseteq X\}$.\\
And operators $FH,SH,IH,XH:P(U)\rightarrow P(U)$ are defined as follows, respectively: $\forall X\in P(U)$,\\
$FH(X)=CL(X)\bigcup (\bigcup \{\bigcup Md(x):x\in (X-CL(X))\})$,\\
$SH(X)=\bigcup\{K:K\bigcap X\neq \emptyset\}=\bigcup \{I(\{x\}): x\in X\}$,\\
$IH(X)=CL(X)\bigcup \{N(x):x\in X-CL(X)\}=\bigcup_{x\in X}N(x)$,\\
$XH(X)=\{x\in U:N(x)\bigcap X\neq \emptyset\}.$
\end{definition}

The $SH$ and $SL$, and $XL$ and $XH$ mentioned in above definition are dual, respectively.
Corresponding to the properties of Pawlak's rough sets listed in Section \ref{FundamentalsofPawlak'sRoughsets}, we have the following results.

\begin{proposition}\cite{ZhuWang03Reduction}\label{propertiesofthelowerapproximationoperator}
$CL$ has properties (1L), (2L), (3L), (5L), (7L), and (9L) in Proposition \ref{TheproperitiesofPawlakRoughset}.
\end{proposition}

\begin{proposition}\cite{ZhuWang07OnThree}\label{propertiesofthesecondapproximationoperator}
$SL$ has properties (1L), (2L), (3L), (4L) and (7L) in Proposition \ref{TheproperitiesofPawlakRoughset}, 
and $SH$ has the properties (1H), (2H), (3H), (4H) and (7H) in Proposition \ref{TheproperitiesofPawlakRoughset}.
\end{proposition}

\begin{proposition}\cite{Zhu09RelationshipBetween}\label{propertiesofthesixtypeofapproximation}
$XL$ has properties (1L), (2L), (3L), (4L), (5L), (7L) and (9L) in Proposition \ref{TheproperitiesofPawlakRoughset}, and $XH$
has properties (1H), (2H), (3H), (4H), (5H) and (7H) in Proposition \ref{TheproperitiesofPawlakRoughset}.
\end{proposition}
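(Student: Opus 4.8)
The plan is to reduce every listed property to elementary facts about the neighborhood $N(x)=\bigcap_{x\in K}K$ together with the duality between $XL$ and $XH$. Two observations drive everything. First, since $N(x)$ is an intersection of sets each containing $x$, we always have $x\in N(x)$, and in particular $N(x)\neq\emptyset$. Second, unwinding the definitions gives $XL(-X)=\{x:N(x)\subseteq -X\}=\{x:N(x)\cap X=\emptyset\}$, so $XH(X)=-XL(-X)$; hence each property in the $XH$ column can be read off from the dual property of $XL$, and it suffices to establish the $XL$ statements and transcribe them.

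The properties (1L), (2L), (3L), (4L), (7L) and (9L) follow immediately from the definition $XL(X)=\{x:N(x)\subseteq X\}$. Since every $N(x)\subseteq U$ we get (1L); since no nonempty $N(x)$ is contained in $\emptyset$ we get (2L); if $N(x)\subseteq X$ then $x\in N(x)\subseteq X$, giving (3L); monotonicity (7L) is direct; and (4L) is the equivalence $N(x)\subseteq X\cap Y\iff N(x)\subseteq X$ and $N(x)\subseteq Y$. For the covering form of (9L), namely $XL(K)=K$ for every $K\in\mathcal{C}$, note that $x\in K$ makes $K$ one of the sets defining $N(x)$, so $N(x)\subseteq K$ and $x\in XL(K)$; the reverse inclusion is (3L). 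Dualizing yields (1H), (2H), (3H), (4H) and (7H); for example (3H) reads $x\in X\Rightarrow x\in N(x)\cap X\Rightarrow x\in XH(X)$.

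The substantive step, where the special structure of $N$ is genuinely needed, is idempotence (5L) and its dual (5H). The key lemma is the transitivity of neighborhoods: if $y\in N(x)$ then $N(y)\subseteq N(x)$. Indeed, $y\in N(x)$ means $y$ lies in every $K\in\mathcal{C}$ with $x\in K$, so each such $K$ is among the sets whose intersection defines $N(y)$, forcing $N(y)\subseteq K$; intersecting over all $K\ni x$ gives $N(y)\subseteq N(x)$. Granting this, (5L) follows: $XL(XL(X))\subseteq XL(X)$ holds by (3L), and conversely if $N(x)\subseteq X$ then every $y\in N(x)$ satisfies $N(y)\subseteq N(x)\subseteq X$, so $N(x)\subseteq XL(X)$ and $x\in XL(XL(X))$. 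Finally (5H) comes by duality, $XH(XH(X))=-XL(XL(-X))=-XL(-X)=XH(X)$. I expect this transitivity lemma to be the only real obstacle; it is precisely the feature that secures idempotence here, a property that is notably absent from the list for $SL$ and $SH$, and once it is in hand both idempotence statements are immediate.
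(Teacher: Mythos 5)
Your proof is correct and complete. Note, however, that the paper does not actually prove this proposition: it is imported verbatim from the cited reference (Zhu, \emph{Information Sciences} 179 (2009) 210--225), so there is no in-paper argument to compare against. Your reduction is the natural one: everything rests on the two facts $x\in N(x)$ (which needs $\mathcal{C}$ to be a covering, so that the intersection defining $N(x)$ is over a nonempty family) and the transitivity $y\in N(x)\Rightarrow N(y)\subseteq N(x)$, together with the duality $XH(X)=-XL(-X)$. You are right that the transitivity lemma is the only nontrivial ingredient and that it is exactly what delivers idempotence (5L)/(5H); it is worth pointing out that the paper itself silently invokes this same fact later, in the proof of Lemma~\ref{Theotherexpressofthesixtypeoflowerapproximation} (``Since $y\in N(x)$, $N(y)\subseteq N(x)\subseteq X$''), so your lemma is consistent with how the authors use $N$. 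One small reading to make explicit: property (9L) must be interpreted with $U/R$ replaced by $\mathcal{C}$, i.e.\ $XL(K)=K$ for every $K\in\mathcal{C}$, which is what you prove; and the omission of (9H) from the $XH$ column is genuine, since $XH(K)\supseteq K$ can be strict --- your argument correctly does not claim it.
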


\begin{proposition}\cite{Zhu07Topological}\label{propertiesofthefifthtypeofapproximation}
$IH$ has properties (1H), (2H), (3H), (4H), (5H), (7H) and (9H) in Proposition \ref{TheproperitiesofPawlakRoughset}.
\end{proposition}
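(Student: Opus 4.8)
The plan is to work throughout with the simplified form $IH(X)=\bigcup_{x\in X}N(x)$ from Definition \ref{definitionofloweroperator}, which reduces every claim to an elementary statement about the neighborhoods $N(x)=\bigcap_{x\in K}K$. Three facts about these neighborhoods carry the whole argument, and I would establish them first. (a) $x\in N(x)$, since $x$ lies in each $K$ appearing in the defining intersection. (b) If $K\in\mathcal{C}$ and $x\in K$, then $N(x)\subseteq K$, because $K$ is one of the sets intersected in $N(x)$. (c) If $y\in N(x)$, then $N(y)\subseteq N(x)$: indeed $y$ then lies in every $K\in\mathcal{C}$ containing $x$, so (b) gives $N(y)\subseteq K$ for each such $K$, and intersecting over all $K$ with $x\in K$ yields $N(y)\subseteq N(x)$.

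The five order-theoretic properties fall out directly from the union form together with fact (a). For (2H), the empty union gives $IH(\emptyset)=\emptyset$. For (7H), $X\subseteq Y$ makes the index set of the union defining $IH(X)$ a subset of that defining $IH(Y)$, whence $IH(X)\subseteq IH(Y)$. For (4H), a union over a union of index sets splits, so $\bigcup_{x\in X\cup Y}N(x)=\bigl(\bigcup_{x\in X}N(x)\bigr)\cup\bigl(\bigcup_{x\in Y}N(x)\bigr)$, i.e. $IH(X\cup Y)=IH(X)\cup IH(Y)$. For (3H), each $x\in X$ satisfies $x\in N(x)\subseteq IH(X)$ by (a), so $X\subseteq IH(X)$; applying this with $X=U$ and noting $IH(U)\subseteq U$ gives (1H).

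The remaining two properties use the structural facts. For (9H), let $K\in\mathcal{C}$; fact (b) gives $N(x)\subseteq K$ for every $x\in K$, so $IH(K)=\bigcup_{x\in K}N(x)\subseteq K$, and the reverse inclusion is (3H), whence $IH(K)=K$. The main obstacle is idempotence (5H). Applying (3H) to $IH(X)$ immediately gives $IH(X)\subseteq IH(IH(X))$, so the content lies in the reverse inclusion, and this is exactly where fact (c) is needed. Any $y\in IH(X)$ lies in some $N(x)$ with $x\in X$; then (c) yields $N(y)\subseteq N(x)\subseteq IH(X)$. Taking the union over all such $y$ gives $IH(IH(X))=\bigcup_{y\in IH(X)}N(y)\subseteq IH(X)$, and combining the two inclusions establishes $IH(IH(X))=IH(X)$. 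Thus transitivity of the neighborhoods, fact (c), is the single nontrivial ingredient in the whole proposition.
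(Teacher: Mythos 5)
Your proof is correct. Note that the paper itself supplies no argument for this proposition --- it is imported from \cite{Zhu07Topological} as a known result --- so there is no in-paper proof to compare against. Your route is the natural and complete one: starting from the reduced form $IH(X)=\bigcup_{x\in X}N(x)$ given in Definition \ref{definitionofloweroperator}, the reflexivity fact $x\in N(x)$ disposes of (1H), (2H), (3H), (4H), (7H); the fact that $N(x)\subseteq K$ whenever $x\in K\in\mathcal{C}$ gives (9H); and the transitivity fact $y\in N(x)\Rightarrow N(y)\subseteq N(x)$ is exactly what idempotence (5H) requires. You correctly isolate that last fact as the only nontrivial ingredient, and your derivation of it from the definition $N(x)=\bigcap_{x\in K}K$ is sound. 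Incidentally, the same three neighborhood facts are what the paper later reuses implicitly for $XL$ and $XH$ (e.g.\ in Lemma \ref{Theotherexpressofthesixtypeoflowerapproximation}), so your lemmas are consistent with how the surrounding text treats $N(x)$.
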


\subsection{Some basic concepts of topology}
The following topological concepts are elementary and can be found in \cite{Engelking77General}.

\begin{definition}
A topological space is a pair $(U,\mathcal{T})$ consisting of a set $U$ and family $\mathcal{T}$ of a subset of $U$ satisfying the following
conditions:\\
(O1) $U, \emptyset \in \mathcal{T}$.\\
(O2) $\mathcal{T}$ is closed under arbitrary union.\\
(O3) $\mathcal{T}$ is closed under finite intersection.\\
The pair $(U,\mathcal{T})$ is called a topological space.
\end{definition}

The subsets of $U$ belonging to $\mathcal{T}$ are called open sets in space, and their complement are called closed sets in the space.
A subset $X$ in a topological space $(U,\mathcal{T})$ is a neighborhood of $x\in U$ if $X$ contains an open set to which $x$ belongs.
In a topological space $(U,\mathcal{T})$, a family $\mathcal{B}\subseteq \mathcal{T}$ of sets is called a base for topology $\mathcal{T}$
if for each point $x$ of the space, and each neighborhood $U$ of $x$, there is a member $V$ of $\mathcal{B}$ such that $x\in V\subseteq U$.
A cover of a set $U$ is a collection of sets whose union contains $U$ as a subset.
We say that the topological space $(U, \mathcal{T})$ is compact provided that every open cover of $U$ has a finite subcover,
and is $Lindel$\"{o}$f$ space provided that every open cover has a countable subcover.
The topological space is said to be disconnected if it is the union of two disjoint nonempty open sets; otherwise, it is said to be connected.
We say that $(U, \mathcal{T})$ is locally connected at $x$ if for every open set $A$ containing $x$ there exists a connected, open set $V$ with 
$x\in V\subseteq A$, And it is said to be locally connected if it is locally connected at $x$ for all $x$ in $U$.
The maximal connected subsets (ordered by inclusion) of a nonempty topological space are called the connected components of the space.
A space $U$ is said to be first-countable if each point has a countable neighborhood basis (local base) and it is said to be second-countable 
if its topology has a countable base.
A subset $A$ of a topological space $(U,\mathcal{T})$ is dense in $U$ if for any point $x$ in $U$, any neighborhood of $x$ contains at least 
one point from $A$, and this topological space is called separable if it contains a countable dense subset.

\begin{definition}(interior and closure operators).
Let $(U,\mathcal{T})$ be a topological space, a closure (resp. interior) operator $cl:P(U)\rightarrow P(U)$ (resp. $i:P(U)\rightarrow \mathcal{T}$),
where $P(U)$ is the power set of $U$, if it satisfies the following axioms, then we call it a closure operator (resp. interior) on $U$.
$\forall X,Y\subseteq U$:\\
(I): $cl(X\bigcup Y)=cl(X)\bigcup cl(Y)$ (resp.$i(X\bigcap Y)=i(X)\bigcap i(Y)$),\\
(II): $X\subseteq cl(X)$ (resp.$i(X)\subseteq X$),\\
(III): $cl(\emptyset)=\emptyset$ (resp.$i(U)=U$),\\
(IV): $cl(cl(X))=cl(X)$ (resp.$i(i(X))=i(X)$).
\end{definition}

The closure of $A$ of a topological space $(U,\mathcal{T})$ is the intersection of the family of all closed sets containing $A$, while
the interior of $A$ of a topological space $(U,\mathcal{T})$ is the union of the family of all open sets included in $A$.
It is well known that a closure operator $cl$ on $U$ can induce a topology $\mathcal{T}_{cl}=\{-X:cl(X)=X\}$ such that in the topological space
$(U,\mathcal{T}_{cl})$, $cl(A)$ is just the closure of $A$ for each $A\subseteq U$, the similar statement is also true for an interior operator.
In the following discussion, unless it is mentioned specially, the universe of discourse $U$ is considered infinite. 

\section{Topological characterization to the sixth type of covering approximation operators}
\label{Thepropertiesoftopologyinducedby}
The sixth type of covering-based approximation operator was first defined in~\cite{XuWang05On}.
Xu and Wang introduced this type of covering-based rough set model and studied the relationship between
it and binary relation based on rough set model.
Zhu presented the definition of this type of rough sets based on coverings.
Since the sixth type of covering upper approximation operator satisfies the four conditions of closure operator, it can determine a topology.
Hence we present the topological characterizations to the operator firstly.

\begin{theorem}\label{Thesixthtypeoflowerapproximationsandtopology}
Let $\mathcal{C}$ be a covering of $U$. $\mathcal{T}_{XL}=\{X\subseteq U: XL(X) = X\}$ is a topology on $U$.
\end{theorem}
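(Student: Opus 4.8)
The plan is to verify directly the three defining axioms (O1)--(O3) of a topology for the family $\mathcal{T}_{XL}$ of fixed points of $XL$, drawing entirely on the algebraic properties of $XL$ collected in Proposition \ref{propertiesofthesixtypeofapproximation}. Since $XL$ enjoys (1L), (2L), (3L), (4L), (5L) and (7L), it is in fact an interior operator, so the claim is an instance of the standard correspondence between interior operators and topologies recalled just above the theorem; nevertheless I would spell the three conditions out so that the statement is self-contained.

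First I would dispatch (O1). By (1L) we have $XL(U)=U$ and by (2L) we have $XL(\emptyset)=\emptyset$, so both $U$ and $\emptyset$ lie in $\mathcal{T}_{XL}$. Next, for (O3), closure under finite intersection, I would take $X,Y\in\mathcal{T}_{XL}$, that is $XL(X)=X$ and $XL(Y)=Y$; property (4L) then gives $XL(X\cap Y)=XL(X)\cap XL(Y)=X\cap Y$, so $X\cap Y\in\mathcal{T}_{XL}$, and a routine induction extends this to any finite intersection.

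The one step that needs a genuine, if short, argument is (O2), closure under arbitrary union, because the listed properties supply a finite-intersection law but no ready-made union law. Let $\{X_i\}_{i\in I}\subseteq\mathcal{T}_{XL}$ and write $Y=\bigcup_{i\in I}X_i$. From the contraction property (3L) we already have $XL(Y)\subseteq Y$, so only the reverse inclusion remains. For each index $i$, the inclusion $X_i\subseteq Y$ together with monotonicity (7L) yields $X_i=XL(X_i)\subseteq XL(Y)$; taking the union over $i$ gives $Y\subseteq XL(Y)$. Hence $XL(Y)=Y$ and $Y\in\mathcal{T}_{XL}$.

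I expect this last combination of contraction (3L) with monotonicity (7L) to be the only nontrivial point: it is what lets the finitely verified behavior of $XL$ propagate to an arbitrary union, and it is the sole place where the proof does more than quote a single listed identity. It is worth noting that idempotency (5L) is not actually invoked anywhere in establishing the three axioms; the fixed-point family of any monotone, contractive operator that preserves finite intersections and fixes $U$ and $\emptyset$ is already a topology, so (5L) serves only to confirm that $XL$ is a genuine interior operator for that topology.
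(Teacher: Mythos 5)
Your proposal is correct and follows essentially the same route as the paper: (1L)/(2L) for $U$ and $\emptyset$, (4L) for finite intersections, and the combination of contraction (3L) with monotonicity (7L) to handle arbitrary unions via $\bigcup_i X_i=\bigcup_i XL(X_i)\subseteq XL\bigl(\bigcup_i X_i\bigr)\subseteq\bigcup_i X_i$. Your closing observation that idempotency (5L) is never needed is a correct and slightly sharper remark than the paper makes, but it does not change the argument.
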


\begin{proof}
In order to prove this result, we need to prove $\mathcal{T}_{XL}=\{X\subseteq U: XL(X) = X\}$ satisfies the topological atoms.

(O1): According to (1L), (2L) of Proposition \ref{propertiesofthesixtypeofapproximation}, we know $\emptyset, U\in \mathcal{T}_{XL}$.

(O2): If $X, Y\in \mathcal{T}_{XL}$, then $XL(X) = X$, $XL(Y) = Y$. According to (4L) of Proposition \ref{propertiesofthesixtypeofapproximation},
we have $XL(X\bigcap Y)=XL(X)\bigcap XL(Y)=X\bigcap Y$, thus $X\bigcap Y\in \mathcal{T}_{XL}$.

(O3): If $\mathcal{T}_{1}\subseteq \mathcal{T}_{XL}$, then $XL(X)=X$ for all $X\in \mathcal{T}_{1}$.
$\bigcup_{X\in \mathcal{T}_{1}}X=\bigcup_{X\in \mathcal{T}_{1}}XL(X)$ $\subseteq XL(\bigcup_{X\in \mathcal{T}_{1}}X)\subseteq
\bigcup_{X\in \mathcal{T}_{1}}X$.
Hence, $XL(\bigcup_{X\in \mathcal{T}_{1}}X)=\bigcup_{X\in \mathcal{T}_{1}}X$,
that is, $\bigcup_{X\in \mathcal{T}_{1}}X$ $\in \mathcal{T}_{XL}$.

Therefore, $\mathcal{T}_{XL}=\{X\subseteq U:XL(X)=X\}$ is a topology on $U$.
\end{proof}

The proposition below establishes another expression of topology induced by the sixth type of lower approximation operator,
and finds the interior and the closure operators of the topology.

\begin{proposition}\label{theotherexpressionofthesixthtypeoflowerapproxiamtion}
Let $(U,\mathcal{T}_{XL})$ be a topological space. $\mathcal{T}_{XL}=\{X\subseteq U:XL(X)=X\}=\{XL(X):X\subseteq U\}$.
Moreover, $XL$, $XH$ are respectively the interior operator and the closure
operator of $\mathcal{T}_{XL}$.
\end{proposition}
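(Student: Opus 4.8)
The plan is to establish the two set-identities first, then verify that $XL$ is an interior operator and $XH$ a closure operator, and finally tie both to the single topology $\mathcal{T}_{XL}$. For the equality $\{X\subseteq U:XL(X)=X\}=\{XL(X):X\subseteq U\}$ I would argue by mutual inclusion. The inclusion $\subseteq$ is immediate, since any $X$ with $XL(X)=X$ is visibly of the form $XL(X)$. For the reverse inclusion I take any set of the form $XL(X)$; the idempotence property (5L) of Proposition \ref{propertiesofthesixtypeofapproximation} gives $XL(XL(X))=XL(X)$, so $XL(X)$ is a fixed point of $XL$ and hence lies in the left-hand family.

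Next I would show that $XL$ is the interior operator of $\mathcal{T}_{XL}$. Since $XL$ satisfies (4L), (3L), (1L) and (5L) of Proposition \ref{propertiesofthesixtypeofapproximation}, it verifies axioms (I)--(IV) of an interior operator. To identify it with the interior of $\mathcal{T}_{XL}$ I would check directly that for each $A\subseteq U$ the set $XL(A)$ is the largest open set contained in $A$: by (5L) it is a fixed point of $XL$, hence open; by (3L) it satisfies $XL(A)\subseteq A$; and if $G$ is any open set with $G\subseteq A$, then $G=XL(G)\subseteq XL(A)$ by monotonicity (7L). Thus $XL(A)$ equals the union of all open subsets of $A$, which is precisely the interior of $A$.

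Finally I would treat $XH$. Verifying the closure axioms (I)--(IV) is immediate from (4H), (3H), (2H) and (5H) of Proposition \ref{propertiesofthesixtypeofapproximation}, so $XH$ is a closure operator. The substantive point is that $XH$ is the closure operator of the \emph{same} topology $\mathcal{T}_{XL}$. Here the duality between $XL$ and $XH$, namely $XH(A)=-XL(-A)$, is the decisive tool: combining it with the previous paragraph, in which $XL(-A)$ is the interior of $-A$ in $\mathcal{T}_{XL}$, and with the standard fact that the complement of the interior of $-A$ is the closure of $A$, I obtain that $XH(A)=-XL(-A)$ is exactly the closure of $A$ in $\mathcal{T}_{XL}$.

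I expect the main obstacle to be this last step: confirming that $XH$ is the closure operator of the very topology $\mathcal{T}_{XL}$, rather than of some separate topology of its own. The duality $XH(A)=-XL(-A)$ is what forces the closed sets of the closure-induced topology to be the complements of the members of $\mathcal{T}_{XL}$, so this identity has to be invoked with care; everything else reduces to reading off the listed properties of Proposition \ref{propertiesofthesixtypeofapproximation}.
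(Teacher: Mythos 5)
Your proposal is correct and follows essentially the same route as the paper: both establish the set equality via the idempotence (5L), both identify $XL(A)$ as the largest open subset of $A$ using (3L), (5L) and monotonicity (7L), and both transfer the result to $XH$ via the duality $XH(A)=-XL(-A)$. Your write-up is slightly more explicit about why the dual of the interior operator is the closure operator of the same topology, but the substance is identical.
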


\begin{proof}
Since $\forall X\subseteq U, XL(XL(X))=XL(X)$, $\{XL(X):X\subseteq U\}\subseteq \{X\subseteq U:XL(X)=X\}$.
On the other hand, $\{X\subseteq U:XL(X)=X\}\subseteq \{XL(X):X\subseteq U\}$ is trivial.
Hence, we have $\mathcal{T}_{XL}=\{X\subseteq U:XL(X)=X\}=\{XL(X):X\subseteq U\}$.
Assume $i$ and $cl$ are respectively the interior operator and the closure operator of $\mathcal{T}_{XL}$.
Since $XL(X)$ is open and $XL(X)\subseteq X$, then $XL(X)\subseteq i(X)$.
Since $i(X)=\bigcup\{Y: Y\in \mathcal{T}_{XL}, Y\subseteq X\}=\bigcup\{Y: XL(Y)=Y, XL(Y)\subseteq XL(X)\}\subseteq XL(X)$,
thus $i(X)\subseteq XL(X)$.
Therefore, the $XL$ is the interior approximation operator of $\mathcal{T}_{XL}$.
By the duality of $XL$ and $XH$, $XH$ is the closure operator of $\mathcal{T}_{XL}$.
\end{proof}

The following lemma represents another expression of the sixth type of lower approximation operator. Based on the lemma, we can obtain some
topological properties of this type of operator.

\begin{lemma}\label{Theotherexpressofthesixtypeoflowerapproximation}
Let $\mathcal{C}$ be a covering of $U$. For all $X\subseteq U$,
\begin{center}
   $XL(X)=\{x|N(x)\subseteq X\}=\bigcup\{N(x)|N(x)\subseteq X\}$.
\end{center}

\end{lemma}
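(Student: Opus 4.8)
The plan is to prove the two claimed equalities by establishing two set inclusions for each, relying on the definition $XL(X) = \{x \in U : N(x) \subseteq X\}$ from Definition~\ref{definitionofloweroperator} and on the key property that $x \in N(x)$ for every $x \in U$. The first equality, $XL(X) = \{x \mid N(x) \subseteq X\}$, is essentially immediate from the definition of $XL$, so the real content lies in verifying the second equality $\{x \mid N(x) \subseteq X\} = \bigcup\{N(x) \mid N(x) \subseteq X\}$.

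For the forward inclusion $\{x \mid N(x) \subseteq X\} \subseteq \bigcup\{N(x) \mid N(x) \subseteq X\}$, I would take any $y$ with $N(y) \subseteq X$ and observe that since $y \in N(y)$ (because $N(y) = \bigcap_{y \in K} K$ and $y$ belongs to every $K$ in this intersection), the point $y$ lies in the set $N(y)$, which is one of the sets appearing in the union on the right. Hence $y$ belongs to the union, giving this direction.

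For the reverse inclusion $\bigcup\{N(x) \mid N(x) \subseteq X\} \subseteq \{x \mid N(x) \subseteq X\}$, I would take any $y$ in the union, so $y \in N(z)$ for some $z$ with $N(z) \subseteq X$. The goal is to show $N(y) \subseteq X$, and for this it suffices to show $N(y) \subseteq N(z)$. This is where I expect the main obstacle to be: I need the monotonicity-type fact that $y \in N(z)$ implies $N(y) \subseteq N(z)$. I would justify this from the definition $N(z) = \bigcap_{z \in K} K$: if $y \in N(z)$, then $y$ belongs to every $K \in \mathcal{C}$ containing $z$, so every such $K$ also contains $y$ and therefore appears among the sets intersected to form $N(y) = \bigcap_{y \in K} K$; consequently $N(y) = \bigcap_{y \in K} K \subseteq \bigcap_{z \in K} K = N(z)$. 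Combining $N(y) \subseteq N(z) \subseteq X$ yields $N(y) \subseteq X$, so $y \in \{x \mid N(x) \subseteq X\}$, completing the reverse inclusion and hence the lemma.

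The crux is thus the containment relation among neighborhoods, $y \in N(z) \Rightarrow N(y) \subseteq N(z)$, which is the standard ``the neighborhood operator is transitive/idempotent'' property for the intersection-type neighborhood; once this is in hand the rest is routine bookkeeping with the definitions.
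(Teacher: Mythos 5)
Your proposal is correct and follows essentially the same route as the paper: the forward inclusion via $y\in N(y)$ and the reverse inclusion via the monotonicity fact that $y\in N(z)$ implies $N(y)\subseteq N(z)$. The only difference is that you spell out the justification of that monotonicity fact from the definition $N(z)=\bigcap_{z\in K}K$, which the paper uses without comment.
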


\begin{proof}
On one hand, $XL(X)\subseteq \bigcup\{N(x)|N(x)\subseteq X\}$ is obvious.
On the other hand, for all $y\in \bigcup\{N(x)|N(x)\subseteq X\}$, there exists $x\in U$ such that $y\in N(x)$ and $N(x)\subseteq X$.
Since $y\in N(x)$, $N(y)\subseteq N(x)\subseteq X$, that is, $y\in XL(X)$, thus, $\bigcup\{N(x)|N(x)\subseteq X\}\subseteq XL(X)$.
Therefore, we have $XL(X)=\{x|N(x)\subseteq X\}=\bigcup\{N(x)|N(x)\subseteq X\}$.
\end{proof}

\begin{theorem}\label{thepropertiesoftopologyinducedbythesixtypeofapproximation}
Let $(U, \mathcal{T}_{XL})$ be a topological space. If $x\in A$, then\\
(1) $\{N(x):x\in U\}$ is a base of $\mathcal{T}_{XL}$.\\
(2) If $A$ is an open set which contains $x$, then $N(x)\subseteq A$.\\
(3) $\{N(x)\}$ is an open neighborhood base of $x$.\\
(4) $N(x)$ is a compact subset of $(U, \mathcal{T}_{XL})$.\\
(5) Each connected component is an open set.\\
(6) $(U, \mathcal{T}_{XL})$ is a first countable space.\\
(7) $(U, \mathcal{T}_{XL})$ is a locally compact space.
\end{theorem}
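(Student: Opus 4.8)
The plan is to build everything on a single structural fact: for each $x$, the neighborhood $N(x)$ is the smallest open set containing $x$. First I would prove (2) and establish that $N(x) \in \mathcal{T}_{XL}$. Part (2) is immediate from the definition of $\mathcal{T}_{XL}$: if $A$ is open then $XL(A) = A = \{y : N(y) \subseteq A\}$, so $x \in A$ forces $N(x) \subseteq A$. To see $N(x)$ is itself open, I would invoke the monotonicity noted inside the proof of Lemma \ref{Theotherexpressofthesixtypeoflowerapproximation}, namely $y \in N(x) \implies N(y) \subseteq N(x)$. Using $y \in N(y)$ and this implication, $XL(N(x)) = \{y : N(y) \subseteq N(x)\} = N(x)$, so $N(x) \in \mathcal{T}_{XL}$. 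Together these two facts say precisely that $N(x)$ is the minimal open neighborhood of $x$.

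With this in hand, several parts follow directly. For (1) and (3), since $x \in N(x) \subseteq A$ for every open $A$ containing $x$, the family $\{N(x) : x \in U\}$ is a base and the singleton $\{N(x)\}$ is an open neighborhood base at $x$; first countability (6) then follows at once, as a one-element base is trivially countable. For (4), given any open cover of $N(x)$, some member contains $x$ and hence, by (2), contains all of $N(x)$, so that single member is a finite subcover and $N(x)$ is compact. Local compactness (7) is then immediate, since every point $x$ possesses the neighborhood $N(x)$, which is open by the first step and compact by (4).

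The one part requiring a genuinely new observation is (5), which I expect to be the main obstacle since it cannot be reduced to the minimality property alone. Here I would first show each $N(x)$ is connected: were $N(x) = V_1 \cup V_2$ a partition into disjoint nonempty open sets, then $x$ lies in one of them, say $V_1$, and minimality of $N(x)$ via (2) forces $N(x) \subseteq V_1$, contradicting $V_2 \neq \emptyset$. Given a connected component $C$ and any $y \in C$, the connected set $N(y)$ meets $C$ and therefore lies inside $C$ by the standard fact that a connected set meeting a component is contained in it; hence $C = \bigcup_{y \in C} N(y)$ is a union of open sets and is open.
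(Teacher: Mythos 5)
Your proposal is correct, and for parts (1)--(4), (6), (7) it follows essentially the same route as the paper: establish that $N(x)$ is open with $XL(N(x))=N(x)$ (using $y\in N(x)\Rightarrow N(y)\subseteq N(x)$) and that it is the smallest open set containing $x$, then read off the base, neighborhood base, compactness, first countability and local compactness. Your derivation of (2) directly from $A=XL(A)=\{y:N(y)\subseteq A\}$ is in fact slightly cleaner than the paper's, which routes (2) through (1). The genuine divergence is in (5): the paper argues that for $y\in N(x)$ the closure $XH(\{y\})$ is a connected set containing both $x$ and $y$ (invoking that closures of connected sets are connected), hence lies in the component $C_x$, giving $N(x)\subseteq C_x$ and openness of $C_x$; you instead prove that $N(x)$ itself is connected directly from minimality and then use the standard fact that a connected set meeting a component is contained in it. Your version is more elementary and self-contained. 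One small point to tighten: in your connectedness argument the sets $V_1,V_2$ should a priori be taken open in the subspace topology of $N(x)$, say $V_i=N(x)\cap W_i$ with $W_i$ open in $U$; then (2) gives $N(x)\subseteq W_1$, so $V_2\subseteq N(x)\cap W_1=V_1$, forcing $V_2=\emptyset$. This is the same contradiction you state, just with the relative-openness bookkeeping made explicit.
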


\begin{proof}
(1): On one hand, according to Proposition \ref{propertiesofthesixtypeofapproximation}, we have $XL(N(x))\subseteq N(x)$.
For all $y\in N(x)$, $N(y)\subseteq N(x)$. Based on the definition of $XL$, we know $y\in XL(N(x))$.
Hence, $N(x)\subseteq XL(N(x))$, that is, $\{N(x):x\in U\}\in \mathcal{T}_{XL}$.
On the other hand, according to lemma \ref{Theotherexpressofthesixtypeoflowerapproximation}, we know for all
$X\in \mathcal{T}_{XL}$, $X=XL(X)=\bigcup\{N(x)|N(x)\subseteq X\}$.
Therefore, $\{N(X):x\in U\}$ is a base of $\mathcal{T}_{XL}$.

(2): According to (1) and $A$ is an open set which contains $x$, there exists $y\in U$ such that $x\in N(y)\subseteq A$,
thus $N(x)\subseteq A$.

(3): According to (1), $\forall x\in U$, $N(x)$ is an open set.
$\forall A\in \mathcal{U}_{x}$ where $\mathcal{U}_{x}$ denotes the set of neighborhood of $x$, there exists open set $V$ such that
$x\in V\subseteq A$ according to the definition of neighborhood.
From (2), we obtain $N(x)\subseteq V \subseteq A$, thus $\{N(x)\}$ is an open neighborhood base of $x$.

(4): Let $\{A_{\alpha}:\alpha \in J\}$ be an open covering of $N(x)$.
We have $N(x)\subseteq \bigcup\{A_{\alpha}:\alpha \in J\}$,
thus there exists $\alpha \in J$ such that $x\in A_{\alpha}$, according to (2), $N(x)\subseteq A_{\alpha}$.
Hence, $N(x)$ is a compact subset of $(U, \mathcal{T}_{XL})$.

(5): Let $C_{x}$ be a connected component containing $x$.
According to the definition of $XH$, we know for all $y\in N(x)$, $x\in XH(\{y\})$ and $y\in XH(y)$.
From Proposition \ref{theotherexpressionofthesixthtypeoflowerapproxiamtion}, we know $XH$ is the closure operator of $\mathcal{T}_{XL}$.
If $\{y\}$ is connected, then $XH(\{y\})$ is connected.
Based on the definition of connected component and $y\in XH(y)$, $y\in XH(\{y\})\subseteq C_{x}$ which shows that $N(x)\subseteq C_{x}$.
From (1), we obtain $C_{x}$ is an open set.

(6): (6) is straightforward from (3).

(7): If $(U, \mathcal{T}_{XL})$ is a locally compact space, then for each point in the space has a locally base which composed of compact
neighborhood. Hence, we can obtain the result by (3) and (4).
\end{proof}

Separations $T_{i}(i=0,1,2)$ of topological spaces are important topological properties and applied or extended into many branches of mathematics.
Next, Some characterizations of separation are established.

\begin{definition}\cite{Engelking77General}
A topological space $(U,\mathcal{T})$ is called a $T_{0}$ space if for any two different points $x,y\in U$, there exists an open set
$A$ such that $x\in A, y\notin A$ or an open set $B$ such that $x\notin B, y\in B$.
\end{definition}

\begin{proposition}
Let $(U,\mathcal{C})$ be a covering approximation space. The following statements are equivalent.\\
(1) $(U, \mathcal{T}_{XL})$ is a $T_{0}$ space.\\
(2) $\forall x,y\in U$, if $x\neq y$, then $x\notin XH(\{y\})$ or $y \notin XH(\{x\})$.\\
(3) $\forall x,y\in U$, if $x\neq y$, then $XH(\{x\})\neq XH(\{y\})$.
\end{proposition}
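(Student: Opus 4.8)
The plan is to prove the equivalence by establishing the cycle $(1)\Rightarrow(2)\Rightarrow(3)\Rightarrow(1)$, relying throughout on the fact (from Proposition \ref{theotherexpressionofthesixthtypeoflowerapproxiamtion}) that $XH$ is the closure operator of the topology $\mathcal{T}_{XL}$, together with the pointwise description $XH(\{y\})=\{x\in U:N(x)\bigcap\{y\}\neq\emptyset\}=\{x:y\in N(x)\}$ coming straight from Definition \ref{definitionofloweroperator}. Since $XH$ is a closure operator, the closed sets are exactly the fixed points of $XH$, and the closure of a singleton admits the clean membership test $x\in XH(\{y\})\iff y\in N(x)$. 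I would record this reformulation at the outset, because all three conditions are really statements about neighbourhoods $N(\cdot)$ in disguise, and making that explicit will reduce each implication to a short argument.

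For $(1)\Rightarrow(2)$, I would argue contrapositively: suppose $x\neq y$ but $x\in XH(\{y\})$ and $y\in XH(\{x\})$. By the test above this means $y\in N(x)$ and $x\in N(y)$. I then want to show no open set separates $x$ from $y$. Using Theorem \ref{thepropertiesoftopologyinducedbythesixtypeofapproximation}(2), any open set $A$ containing $x$ must contain $N(x)$, hence contains $y$; symmetrically any open set containing $y$ contains $x$. Thus neither type of separating open set exists, contradicting the $T_0$ property, so (2) must hold.

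For $(2)\Rightarrow(3)$ I would again use the singleton-closure test. If $XH(\{x\})=XH(\{y\})$ with $x\neq y$, then since $x\in XH(\{x\})$ (by property (3H), as $XH$ is a closure operator) we get $x\in XH(\{y\})$, and symmetrically $y\in XH(\{x\})$; this directly contradicts (2). Hence distinct points yield distinct singleton closures. For $(3)\Rightarrow(1)$, assuming $x\neq y$ gives $XH(\{x\})\neq XH(\{y\})$, so without loss of generality there is a point lying in one closure but not the other; translating via the neighbourhood test, this forces (say) $y\notin N(x)$, whence by Lemma \ref{Theotherexpressofthesixtypeoflowerapproximation} the set $N(x)$ — or more carefully the open set $XL(-\{y\})$ — is an open set containing $x$ but not $y$, giving the required $T_0$ separation.

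The main obstacle I anticipate is the direction $(3)\Rightarrow(1)$, specifically extracting a genuine \emph{open} separating set rather than merely a membership asymmetry of closures. The asymmetry $XH(\{x\})\neq XH(\{y\})$ only guarantees some point $z$ lies in one closure and not the other, and I must convert this into an open neighbourhood of one of the two points excluding the other. The cleanest route is to observe that the complement of a closed set is open: since $XH(\{x\})\neq XH(\{y\})$ and both contain the respective base points, the symmetric difference is nonempty, and the complement of the appropriate singleton closure furnishes the open set. I would need to verify carefully that this complement actually contains the intended point, which amounts to checking the neighbourhood condition $y\notin N(x)$ (or its mirror) is forced by the inequality of closures — this is where the pointwise test $x\in XH(\{y\})\iff y\in N(x)$ does the essential work and must be invoked without sign errors.
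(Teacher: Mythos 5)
Your implications $(1)\Rightarrow(2)$ and $(2)\Rightarrow(3)$ are correct and essentially identical to the paper's (your $(2)\Rightarrow(3)$ is in fact stated more carefully than the paper's, which glosses over the disjunction in (2)). The problem is in $(3)\Rightarrow(1)$, and it is exactly the step you flagged as the anticipated obstacle: you assert that $XH(\{x\})\neq XH(\{y\})$ ``forces (say) $y\notin N(x)$'' via the pointwise test $u\in XH(\{x\})\iff x\in N(u)$. But that test only converts the inequality of closures into the existence of a \emph{third} witness point $z$ with, say, $x\in N(z)$ and $y\notin N(z)$; it says nothing directly about $N(x)$ or $N(y)$, so the open set $XL(-\{y\})=-XH(\{y\})$ you propose is not yet known to contain $x$. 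The claim is in fact true, but closing it requires an extra argument you do not supply: if $y\in N(x)$ and $x\in N(y)$, then by the transitivity property $w\in N(v)\Rightarrow N(w)\subseteq N(v)$ (the fact underlying Lemma \ref{Theotherexpressofthesixtypeoflowerapproximation}) one gets $N(x)=N(y)$, and then $x\in N(u)\iff y\in N(u)$ for every $u$, i.e.\ $XH(\{x\})=XH(\{y\})$ --- so the contrapositive delivers $y\notin N(x)$ or $x\notin N(y)$. Without this lemma your chain from ``closures differ'' to ``$y\notin N(x)$'' is a non sequitur.

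The paper avoids this entirely by using the witness itself: from $XH(\{x\})\neq XH(\{y\})$ take $u$ in the symmetric difference, so that $x\in N(u)$ and $y\notin N(u)$ (or the mirror image), and observe that $N(u)$ is open by Theorem \ref{thepropertiesoftopologyinducedbythesixtypeofapproximation}(1); this $N(u)$ is already the required $T_{0}$-separating open set, with no need to relate the witness back to $N(x)$ or $N(y)$. I would recommend either adopting that shortcut or explicitly inserting the $N(x)=N(y)$ argument above; as written, the third implication is incomplete.
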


\begin{proof}
$(1)\Rightarrow (2)$: If $(U,\mathcal{T}_{XL})$ is a $T_{0}$ space, then for all $x,y\in U$ and $x\neq y$, there exists open set $A$ such
that $x\in A, y\notin A$ (or $x\notin B, y\in B$).
According to (2) of Theorem \ref{thepropertiesoftopologyinducedbythesixtypeofapproximation}, we have $N(x)\subseteq A$ (or $N(y)\subseteq B$).
Since $y\notin A$ (or $x\in B$), $y\notin N(x)$ (or $x\notin N(y)$).
According to the definition of $XH$, we have $x\notin XH(\{y\})$ (or $y\notin XH(\{x\})$).

$(2)\Rightarrow (3)$: According to Proposition \ref{propertiesofthesixtypeofapproximation}, for all $x\in U$, we have $x\in XH(\{x\})$.
From (2), we know for all $y\in U$ and $y\neq x$, $x\notin XH(\{y\})$.
Thus $XH(\{x\})\neq XH(\{y\})$.

$(3)\Rightarrow (1)$:  Since for all $x,y\in U$ and $x\neq y$, $XH(\{x\})\neq XH(\{y\})$ and the definition of $XH(\{x\})$ and $XH(\{y\})$,
there exists $u\in U$ such that $x\in N(u)$, $y\notin N(u)$ or there exists $v\in U$
such that $x\notin N(v)$, $y\in N(v)$.
Let $A=N(u)$, $B=N(v)$.
Since $N(x)$ and $N(y)$ are open sets, $(U,\mathcal{T}_{XL})$ is a $T_{0}$ space.
\end{proof}

The theorem below establishes some equivalent characterizations of $T_{1}$ space (resp.$T_{2}$ space).

\begin{definition}\cite{Engelking77General}
A topological space $(U,\mathcal{T})$ is a $T_{1}$ space (resp.$T_{2}$ space) if for any two different points $x,y \in U$, there exist open
neighborhoods $A$ of $x$ and $B$ of $y$ such that $y\notin A$ and $x\notin B$ (resp.$A\bigcap B=\emptyset$).
\end{definition}

\begin{proposition}
Let $(U, \mathcal{C})$ be a covering approximation space. The following statements are equivalent.\\
(1) $(U, \mathcal{T}_{XL})$ is a $T_{1}$ space.\\
(2) $\forall x,y\in U$, if $x\neq y$, then $x\notin XH(\{y\})$ and $y\notin XH(\{x\})$.\\
(3) $\forall x\in U$, $XH(\{x\})=\{x\}$.\\
(4) $(U,\mathcal{T}_{XL})$ is a $T_{2}$ space.\\
(5) $\forall x,y \in U$, if $x\neq y$, then $N(x)\bigcap N(y)=\emptyset$.
\end{proposition}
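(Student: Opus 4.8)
The plan is to prove the five statements equivalent by establishing the cycle $(1)\Rightarrow(2)\Rightarrow(3)\Rightarrow(5)\Rightarrow(4)\Rightarrow(1)$. The whole argument rests on one computational observation about the closure operator restricted to singletons: since $XH(X)=\{z\in U:N(z)\bigcap X\neq\emptyset\}$, evaluating at a singleton yields $XH(\{x\})=\{z\in U:x\in N(z)\}$, so that the membership relation $z\in N(x)$ is exactly equivalent to $x\in XH(\{z\})$. I would record this equivalence first, together with the fact (from Proposition \ref{propertiesofthesixtypeofapproximation}, namely property (3H), and the definition $N(x)=\bigcap_{x\in K}K$) that $x\in N(x)$, hence $x\in XH(\{x\})$, always holds.

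For $(1)\Rightarrow(2)$, given distinct $x,y$ and open neighborhoods $A\ni x$ with $y\notin A$ and $B\ni y$ with $x\notin B$ supplied by the $T_{1}$ property, I would invoke part (2) of Theorem \ref{thepropertiesoftopologyinducedbythesixtypeofapproximation} to get $N(x)\subseteq A$ and $N(y)\subseteq B$; then $y\notin N(x)$ and $x\notin N(y)$, which by the recorded equivalence say precisely that $x\notin XH(\{y\})$ and $y\notin XH(\{x\})$. For $(2)\Rightarrow(3)$, I would combine $x\in XH(\{x\})$ with the hypothesis applied to each $z\neq x$ (which gives $z\notin XH(\{x\})$) to conclude $XH(\{x\})=\{x\}$.

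The step $(3)\Rightarrow(5)$ is where the singleton reformulation does the real work: if some $z$ lay in $N(x)\bigcap N(y)$ with $x\neq y$, then $z\in N(x)$ and $z\in N(y)$ would force $x,y\in XH(\{z\})=\{z\}$ by the hypothesis, hence $x=y$, a contradiction, so $N(x)\bigcap N(y)=\emptyset$. Then $(5)\Rightarrow(4)$ is immediate upon taking the disjoint open neighborhoods $A=N(x)$ and $B=N(y)$, which are open and contain $x$ and $y$ respectively by parts (1) and (3) of Theorem \ref{thepropertiesoftopologyinducedbythesixtypeofapproximation}; and $(4)\Rightarrow(1)$ holds because the defining disjointness $A\bigcap B=\emptyset$ in the $T_{2}$ condition immediately yields $y\notin A$ and $x\notin B$.

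The only point requiring genuine care, rather than being purely routine, is the reformulation of $XH$ on singletons and the resulting duality $z\in N(x)\Leftrightarrow x\in XH(\{z\})$; once this is in hand, each implication reduces to a short set-membership check. I therefore expect no substantive obstacle here, the proof being essentially a matter of playing the neighborhood base $\{N(x):x\in U\}$ and the closure operator $XH$ off against one another correctly.
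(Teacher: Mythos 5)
Your proof is correct and uses essentially the same ingredients as the paper's: the duality $z\in N(x)\Leftrightarrow x\in XH(\{z\})$, the fact that $N(x)\subseteq A$ for any open $A$ containing $x$, the openness of each $N(x)$, and the same arguments for $(1)\Rightarrow(2)$, $(2)\Rightarrow(3)$, $(3)\Rightarrow(5)$ and $(5)\Rightarrow(4)$. The only organizational difference is that you close everything into the single cycle $(1)\Rightarrow(2)\Rightarrow(3)\Rightarrow(5)\Rightarrow(4)\Rightarrow(1)$ (with the trivial $T_{2}\Rightarrow T_{1}$ step), whereas the paper proves a slightly redundant set of seven implications; this is a harmless economy, not a different method.
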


\begin{proof}
$(1)\Rightarrow (2)$: If $(U, \mathcal{T}_{XL})$ is a $T_{1}$ space, then for all $x,y\in U$, $x\neq y$, there exist open neighborhoods
$A$ of $x$ and $B$ of $y$ such that $y\notin A$ and $x\notin B$.
According to (2) of Theorem \ref{thepropertiesoftopologyinducedbythesixtypeofapproximation}, we have $N(x)\subseteq A$ and $N(y)\subseteq B$.
Since $y\notin A$ and $x\notin B$, $y\notin N(x)$ and $x\notin N(y)$.
Thus $x\notin XH(\{y\})$ and $y\notin XH(\{x\})$.

$(2)\Rightarrow (3)$: Combining Proposition \ref{propertiesofthesixtypeofapproximation} with (2), we know for all $x, y\in U$ and $y\neq x$,
$x\in XH(\{x\})$ and $y\notin XH(\{x\})$.
Therefore, $XH(\{x\})=\{x\}$.

$(3)\Rightarrow (1)$: According to (3), we know for all $x,y\in U$ and $x\neq y$, $XH(\{x\})=\{x\}$ and $XH(\{y\})=\{y\}$.
Thus $x\notin XH(\{y\})$ and $y\notin XH(\{x\})$, that is, $y\notin N(x)$ and $x\notin N(y)$.
Since $\{N(x):x\in U\}$ is a base of $\mathcal{T}_{XL}$, there exist open sets $N(x)$ and $N(y)$ such that $y\in N(y),x\notin N(y)$ and
$x\in N(x), y\notin N(x)$.
Therefore, $(U, \mathcal{T}_{XL})$ is a $T_{1}$ space.

$(4)\Rightarrow (5)$: Suppose $(U, \mathcal{T}_{XL})$ is a $T_{2}$ space, then $\forall x,y \in U$ and $x\neq y$, there exist open sets $A,B$ such
that $x\in A,y\in B$ and $A\bigcap B=\emptyset$.
According to (2) of Theorem \ref{thepropertiesoftopologyinducedbythesixtypeofapproximation}, we obtain $N(x)\subseteq A$, $N(y)\subseteq B$.
Since $A\bigcap B=\emptyset$, $N(x)\bigcap N(y)=\emptyset$.

$(5)\Rightarrow (4)$: For all $x,y\in U$ and $x\neq y$, we take $A=N(x)$, $B=N(y)$, thus we can obtain the result.

$(3)\Rightarrow (5)$: If (5) does not hold, then there exist $x,y \in U$ and $x\neq y$ such that $N(x)\bigcap N(y)\neq \emptyset$.
Thus there exists $z\in N(x)\bigcap N(y)$ such that $x,y\in XH(\{z\})$ which contradicts (3).

$(5)\Rightarrow (3)$: From (5), we know that for all $x,y\in U$ and $x\neq y$, $N(x)\bigcap N(y)=\emptyset$.
Thus $y\notin N(x)$ and $x\notin N(y)$, that is, $x\notin XH(\{y\}),y\notin XH(\{x\})$.
Hence, $XH(\{x\})=\{x\}$ for all $x\in U$.
\end{proof}

The following theorem shows some equivalent characterizations of regular space.

\begin{definition}\cite{Engelking77General}
A topological space $(U,\mathcal{T})$ is called regular if for each closed set $A\subseteq U$ and any point $x\notin A$ there are open
sets $W$ and $V$ such that $x\in W,A\subseteq V$ and $W\bigcap V=\emptyset$.
\end{definition}

\begin{proposition}\label{theequivalentoftheregularspace}
Let $(U, \mathcal{C})$ be a covering approximation space. The following statements are equivalent.\\
(1) $(U, \mathcal{T}_{XL})$ is a regular space.\\
(2) For each closed set $A\subseteq U$ and any point $x\notin A$, $N(x)\bigcap N(y)=\emptyset$ for all $y\in A$.\\
(3) For all $x,y \in U$, $N(x)\bigcap N(y)=\emptyset$ or $N(x)=N(y)$.\\
(4) For all $x\in U$, $N(x)$ is a closed set of $U$.
\end{proposition}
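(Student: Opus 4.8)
The plan is to establish the equivalences by proving a cycle together with a couple of cross-links, using the neighborhood base $\{N(x):x\in U\}$ from Theorem \ref{thepropertiesoftopologyinducedbythesixtypeofapproximation} and the fact (Proposition \ref{theotherexpressionofthesixthtypeoflowerapproxiamtion}) that $XH$ is the closure operator and $XL$ the interior operator of $\mathcal{T}_{XL}$. The most convenient route seems to be $(1)\Rightarrow(2)\Rightarrow(3)\Rightarrow(4)\Rightarrow(1)$, since each step can lean on the open base $N(x)$ and on statement (2) of Theorem \ref{thepropertiesoftopologyinducedbythesixtypeofapproximation} (any open set containing $x$ contains $N(x)$).

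First I would prove $(1)\Rightarrow(2)$. Given a closed set $A$ and $x\notin A$, regularity supplies disjoint open sets $W\ni x$ and $V\supseteq A$. For any $y\in A$ we have $y\in V$, so by Theorem \ref{thepropertiesoftopologyinducedbythesixtypeofapproximation}(2) both $N(x)\subseteq W$ and $N(y)\subseteq V$; since $W\cap V=\emptyset$ this forces $N(x)\cap N(y)=\emptyset$. For $(2)\Rightarrow(3)$, fix $x,y\in U$ and suppose $N(x)\cap N(y)\neq\emptyset$. The key observation is that a single indiscernible neighborhood $N(z)$ that meets the closed set $XH(\{x\})$ (equivalently, $N(x)$ itself is the intended closed witness) cannot stay disjoint; more carefully, I would argue the contrapositive: if $N(x)\neq N(y)$ then one of them, say $y$, fails to lie in $N(x)$, so $y\notin N(x)$ means $x\notin XH(\{y\})$, and taking $A=N(y)$ as a candidate closed set with $x\notin A$ yields via (2) that $N(x)\cap N(y)=\emptyset$. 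Thus either $N(x)=N(y)$ or they are disjoint. The step $(3)\Rightarrow(4)$ is where I expect the real work: to show each $N(x)$ is closed I would compute its closure $XH(N(x))$ and show it equals $N(x)$. By definition $XH(N(x))=\{u:N(u)\cap N(x)\neq\emptyset\}$, and hypothesis (3) says $N(u)\cap N(x)\neq\emptyset$ implies $N(u)=N(x)$, hence $u\in N(u)=N(x)$; so $XH(N(x))\subseteq N(x)$, giving closedness.

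Finally $(4)\Rightarrow(1)$ recovers regularity: given a closed set $A$ and $x\notin A$, since $A$ is closed we have $A=XH(A)$, so $x\notin XH(A)$ means $N(x)\cap N(y)=\emptyset$ for every $y\in A$. I would then set $W=N(x)$ and $V=\bigcup\{N(y):y\in A\}$; both are open by Theorem \ref{thepropertiesoftopologyinducedbythesixtypeofapproximation}(1), we have $x\in W$ and $A\subseteq V$ (as $y\in N(y)$), and $W\cap V=\bigcup_{y\in A}(N(x)\cap N(y))=\emptyset$. The main obstacle I anticipate is the bookkeeping in $(3)\Rightarrow(4)$ and in correctly translating the separation condition $N(x)\cap N(y)=\emptyset$ into and out of the closure operator $XH$; everything else is a direct application of the base property and the duality $XL\dashv XH$ established earlier.
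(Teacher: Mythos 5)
Your overall plan---the cycle $(1)\Rightarrow(2)\Rightarrow(3)\Rightarrow(4)\Rightarrow(1)$ using the base $\{N(x):x\in U\}$---is exactly the paper's, and your $(1)\Rightarrow(2)$ and $(3)\Rightarrow(4)$ steps are correct and match the paper. But your $(2)\Rightarrow(3)$ step has a genuine gap: you propose to apply hypothesis (2) with $A=N(y)$ ``as a candidate closed set,'' yet under (2) alone you do not know that $N(y)$ is closed --- that is precisely statement (4), which sits downstream in your cycle, so invoking it here is circular. Moreover, even granting closedness, you would need $x\notin N(y)$ to apply (2), whereas what you actually derived is $y\notin N(x)$ (equivalently $x\notin XH(\{y\})$); these are not the same condition. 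The repair is to take as your closed set the closure $XH(\{y\})$ (closed because $XH$ is the closure operator of $\mathcal{T}_{XL}$ by Proposition \ref{theotherexpressionofthesixthtypeoflowerapproxiamtion}): then $x\notin XH(\{y\})$ and $y\in XH(\{y\})$, so (2) gives $N(x)\bigcap N(y)=\emptyset$ directly. This is essentially what the paper does, except it first picks a witness $z\in N(x)$ with $z\notin N(y)$ and uses $XH(\{z\})$. Note also that your preliminary claim ``$N(x)\neq N(y)$ forces $y\notin N(x)$ or $x\notin N(y)$'' is true but not immediate; it needs the observation that $u\in N(v)$ implies $N(u)\subseteq N(v)$, so mutual membership would force $N(x)=N(y)$.

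There is a second, smaller leap in $(4)\Rightarrow(1)$: from $x\notin XH(A)$ you only get $N(x)\bigcap A=\emptyset$, not $N(x)\bigcap N(y)=\emptyset$ for all $y\in A$ as you assert. This can be rescued using (4): since $N(x)$ is closed, $N(y)\bigcap N(x)\neq\emptyset$ would give $y\in XH(N(x))=N(x)$, contradicting $N(x)\bigcap A=\emptyset$; but as written the step is unjustified. The paper avoids this entirely by taking $V=-N(x)$, which is open precisely because (4) makes $N(x)$ closed, and trivially contains $A$ and misses $W=N(x)$.
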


\begin{proof}
$(1)\Rightarrow (2)$: Since $(U, \mathcal{T}_{XL})$ is a regular space, for each closed set $A\subseteq U$ and any point $x\notin A$
there are open sets $W$ and $V$ such that $x\in W,A\subseteq V$ and $W\bigcap V=\emptyset$.
$\forall y\in A$, $V$ is an open set which contains $y$.
According to $(2)$ of Theorem \ref{thepropertiesoftopologyinducedbythesixtypeofapproximation}, $N(y)\subseteq V$,
thus $\bigcup_{y\in A}N(y)\subseteq V$.
Similarly, because $W$ is an open set which contains $x$, $N(x)\subseteq W$.
$N(x)\bigcap (\bigcup_{y\in A}N(y))\subseteq W\bigcap V=\emptyset$, thus $\forall y\in A, N(x)\bigcap N(y)= \emptyset$.

$(2)\Rightarrow (3)$: According to (2), if $y\in A$, then $N(x)\bigcap N(y)= \emptyset$.
If $N(x)\bigcap N(y)\neq \emptyset$, then $N(x)=N(y)$; otherwise, there exists $z\in U$ such that $z\in N(x),z\notin N(y)$.
According to the definition of $XH(\{z\})$, we have $x\in XH(\{z\}), y\notin XH(\{z\})$.
Since $XH(\{z\})$ is a closed set, we have $N(x)\bigcap N(y)=\emptyset$ by (2).
That contradicts the assumption that $N(x)\bigcap N(y)\neq \emptyset$.

$(3)\Rightarrow (4)$: It is obvious that $N(x)\subseteq XH(N(x))$, now we need to prove $XH(N(x))\\\subseteq N(x)$. $\forall y\in XH(N(x))$,
$N(x)\bigcap N(y)\neq \emptyset$. According to (3), we know $N(x)=N(y)$, thus $y\in N(y)=N(x)$, that is, $XH(\{N(x)\})\subseteq N(x)$.

$(4)\Rightarrow (1)$: For each closed set $A\subseteq U$ and any point $x\notin A=XH(A)$, we know $N(x)\bigcap A=\emptyset$, that is,
$A\subseteq ((N(x))^{c}$. According to (4), $(N(x))^{c}$ is an open set. Let $W=N(x), V=(N(x))^{c}$.
Thus we have $W\bigcap V=\emptyset$, therefore, $(U, \mathcal{C})$ is a regular space.
\end{proof}

\begin{definition}\cite{Engelking77General}
A topological space $(U, \mathcal{T}_{XL})$ is called normal if for any disjoint closed sets $A$ and $B$ there are open
subsets $W,V\subseteq U$ such that $A\subseteq W, B\subseteq V$ and $W\bigcap V=\emptyset$.
\end{definition}

\begin{proposition}\label{theequivalentofnomalspace}
Let $(U, \mathcal{C})$ be a covering approximation space. The following statements are equivalent.\\
(1) $(U, \mathcal{T}_{XL})$ is a normal space.\\
(2) Assume $A$ and $B$ are any disjoint closed sets, then $\forall x\in A, \forall y\in B$, $N(x)\bigcap N(y)=\emptyset$.\\
(3) Assume $A\subseteq U$ is a closed set, then $\forall y\in A$, $XH(\bigcup_{y\in A}N(y))=\bigcup_{y\in A}N(y)$.
\end{proposition}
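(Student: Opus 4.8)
The plan is to establish the cycle $(1)\Rightarrow(2)\Rightarrow(3)\Rightarrow(1)$. Throughout I would lean on three facts already in hand: that $XH$ is the closure operator of $\mathcal{T}_{XL}$ (Proposition \ref{theotherexpressionofthesixthtypeoflowerapproxiamtion}), that $\{N(x):x\in U\}$ is a base (Theorem \ref{thepropertiesoftopologyinducedbythesixtypeofapproximation}(1)), and the two consequences that do most of the work: every open set $A$ containing $x$ satisfies $N(x)\subseteq A$ (Theorem \ref{thepropertiesoftopologyinducedbythesixtypeofapproximation}(2)), and, straight from the definition of $XH$, $y\in XH(\{x\})\iff x\in N(y)$. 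In particular each $XH(\{x\})$ is a closed set, and by monotonicity (7H) one gets $XH(\{x\})\subseteq C$ whenever $x$ lies in a closed set $C$.

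For $(1)\Rightarrow(2)$, given disjoint closed $A,B$, normality yields disjoint open $W\supseteq A$ and $V\supseteq B$; for $x\in A$ and $y\in B$ the containment fact gives $N(x)\subseteq W$ and $N(y)\subseteq V$, so $N(x)\cap N(y)\subseteq W\cap V=\emptyset$. For $(2)\Rightarrow(3)$, fix a closed set $A$; since $XH$ is extensive (3H) the inclusion $\bigcup_{y\in A}N(y)\subseteq XH(\bigcup_{y\in A}N(y))$ is automatic, so I only need the reverse. If $x\in XH(\bigcup_{y\in A}N(y))$ then $N(x)\cap N(y_0)\neq\emptyset$ for some $y_0\in A$; assuming for contradiction that $x\notin\bigcup_{y\in A}N(y)$ means $x\notin N(y)$ for all $y\in A$, i.e. $A\cap XH(\{x\})=\emptyset$. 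Now $XH(\{x\})$ and $A$ are disjoint closed sets with $x\in XH(\{x\})$ and $y_0\in A$, so (2) forces $N(x)\cap N(y_0)=\emptyset$, contradicting the previous line; hence the two sets coincide.

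The main obstacle is $(3)\Rightarrow(1)$, where I must manufacture the separating open sets. The idea is that (3) says precisely that $W:=\bigcup_{a\in A}N(a)$ is closed, while it is automatically open as a union of base elements; thus $W$ is clopen for every closed $A$. I would then show $W\cap B=\emptyset$: if some $b\in B$ lay in $W$, then $b\in N(a_0)$ for some $a_0\in A$, which by the definition of $XH$ means $a_0\in XH(\{b\})\subseteq B$ (as $b\in B$ and $B$ is closed), contradicting $A\cap B=\emptyset$. Consequently $B\subseteq W^{c}$, and since $W$ is clopen the pair $W,W^{c}$ consists of disjoint open sets containing $A$ and $B$ respectively, giving normality. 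The delicate point throughout is keeping straight the two directions of $XH$ — that $x\in N(y)$ is equivalent to $y\in XH(\{x\})$ and not to $x\in XH(\{y\})$ — since each separation argument repeatedly converts between membership in a neighborhood and membership in the closure of a singleton.
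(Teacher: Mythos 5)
Your proof is correct and follows essentially the same route as the paper: the cycle $(1)\Rightarrow(2)\Rightarrow(3)\Rightarrow(1)$ with $W=\bigcup_{y\in A}N(y)$ and its complement as the separating open sets, relying on $N(x)\subseteq A$ for open $A\ni x$ and on $XH$ being the closure operator. Your $(2)\Rightarrow(3)$ step is in fact a little tighter than the paper's, since you make explicit which pair of disjoint closed sets (namely $XH(\{x\})$ and $A$) hypothesis (2) is applied to, where the paper's corresponding sentence is somewhat elliptical.
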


\begin{proof}
$(1)\Rightarrow (2)$: If $(U, \mathcal{T}_{XL})$ is a normal space, then for any disjoint closed sets $A,B\subseteq U$ there exist open
sets $W,V\subseteq U$ such that $A\subseteq W, B\subseteq V$ and $W\bigcap V=\emptyset$.
Since $A\subseteq W$, $\forall x\in A\subseteq W$, $N(x)\subseteq W$.
Similarly, $\forall y\in B\subseteq V$, $N(y)\subseteq V$.
Hence, $N(x)\bigcap N(y)=\emptyset$ for $W\bigcap V=\emptyset$.

$(2)\Rightarrow (3)$: It is obvious that $\bigcup_{y\in A}N(y)\subseteq XH(\bigcup_{y\in A}N(y))$, now we need to prove
$XH(\bigcup_{y\in A}N(y))\subseteq \bigcup_{y\in A}N(y)$.
$\forall x\in XH(\bigcup_{y\in A}N(y))$, $N(x)\bigcap (\bigcup_{y\in A}N(y))\\=\bigcup_{y\in A}(N(x)\bigcap N(y))\neq \emptyset$,
that is, there exists $z\in A$ such that $N(x)\bigcap N(z)\neq \emptyset$.
If $\forall y\in A$, $x\notin N(y)\subseteq XH(N(y))$, then $N(x)\bigcap N(y)=\emptyset$ which contradicts
$N(x)\bigcap N(z)\neq \emptyset$.
That implies there exists $y\in A$ such that $x\in N(y)$, that is, $XH(\bigcup_{y\in A}N(y))\subseteq \bigcup_{y\in A}N(y)$.

$(3)\Rightarrow (1)$: Suppose $A, B$ are any closed sets of $U$ satisfies $A\bigcap B=\emptyset$, then $\forall y\in A$, $y\notin B=XH(B)$,
that is, $N(y)\bigcap B=\emptyset$. $(\bigcup_{y\in A}N(y))\bigcap B=\bigcup_{y\in A}(N(y)\bigcap B)=\emptyset$,
thus $B\subseteq (\bigcup_{y\in A}N(y))^{c}$.
Let $W=\bigcup_{y\in A}N(y), V=(\bigcup_{y\in A}N(y))^{c}$. We can obtain the result.
\end{proof}

The following theorem gives us an unexpected result about the relation of normality and regularity of topological space induced by $XL$.

\begin{theorem}\label{relationbetweenregularandnomalspace}
Let $\mathcal{C}$ be a covering. If $(U, \mathcal{T}_{XL})$ is a regular space, then $(U, \mathcal{T}_{XL})$ is a normal space.
\end{theorem}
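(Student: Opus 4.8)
The plan is to derive normality purely from the neighborhood characterizations already established, rather than constructing the separating open sets by hand. The key observation is that both regularity and normality of $(U,\mathcal{T}_{XL})$ have already been reduced to disjointness conditions on the neighborhoods $N(x)$: by Proposition \ref{theequivalentoftheregularspace}, regularity is equivalent to the statement that for every closed set $A$ and every point $x\notin A$ one has $N(x)\bigcap N(y)=\emptyset$ for all $y\in A$; by Proposition \ref{theequivalentofnomalspace}, normality is equivalent to the statement that for any two disjoint closed sets $A$ and $B$ one has $N(x)\bigcap N(y)=\emptyset$ for all $x\in A$ and $y\in B$. So the whole task is to pass from the first clause to the second.

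First I would assume $(U,\mathcal{T}_{XL})$ is regular and invoke condition (2) of Proposition \ref{theequivalentoftheregularspace}. Then I would take arbitrary disjoint closed sets $A$ and $B$ (the cases where one of them is empty being vacuous) and fix arbitrary points $x\in A$ and $y\in B$. Since $A\bigcap B=\emptyset$ and $x\in A$, the point $x$ lies outside the closed set $B$. Applying the regularity condition to the closed set $B$ together with the exterior point $x$ gives $N(x)\bigcap N(z)=\emptyset$ for every $z\in B$, and specializing to $z=y$ yields $N(x)\bigcap N(y)=\emptyset$.

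Because $x\in A$ and $y\in B$ were arbitrary, this is exactly condition (2) of Proposition \ref{theequivalentofnomalspace}, so that proposition lets me conclude $(U,\mathcal{T}_{XL})$ is normal. There is no genuinely hard step once the two characterization propositions are in place; the entire content is the symmetry between the regularity clause ``closed set versus exterior point'' and the normality clause ``two disjoint closed sets,'' which collapses here because in this topology separation is controlled pointwise by the sets $N(x)$. The main thing to be careful about is simply applying the regularity characterization with the roles of the closed set and the point chosen correctly, and noting the degenerate case $A=\emptyset$ or $B=\emptyset$ where the required condition holds trivially.

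If instead one preferred a constructive proof, the alternative would be to set $W=\bigcup_{x\in A}N(x)$ and $V=\bigcup_{y\in B}N(y)$, which are open (as unions of the basic open sets $N(x)$) and contain $A$ and $B$ respectively since $x\in N(x)$, and then to verify $W\bigcap V=\bigcup_{x\in A,\,y\in B}(N(x)\bigcap N(y))=\emptyset$ using the very same regularity condition. The obstacle in that route is only bookkeeping rather than a new idea, so I would favor the characterization-based argument above for brevity.
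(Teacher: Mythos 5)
Your argument is correct. It differs from the paper's proof in which clauses of the two characterization propositions it routes through: the paper invokes clause (4) of Proposition \ref{theequivalentoftheregularspace} (regularity implies every $N(x)$ is closed, i.e.\ $XH(N(x))=N(x)$) and then verifies clause (3) of Proposition \ref{theequivalentofnomalspace} by computing $XH(\bigcup_{x\in A}N(x))=\bigcup_{x\in A}XH(N(x))=\bigcup_{x\in A}N(x)$, which tacitly uses that $XH$ distributes over an arbitrary (possibly infinite) union --- true for this operator since $XH(X)=\{u:N(u)\bigcap X\neq\emptyset\}$, but strictly stronger than the finite additivity (4H) recorded in Proposition \ref{propertiesofthesixtypeofapproximation}. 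You instead go from clause (2) of the regularity proposition directly to clause (2) of the normality proposition, observing that for disjoint closed sets $A,B$ any $x\in A$ is a point outside the closed set $B$, so regularity applied to the pair $(B,x)$ already yields $N(x)\bigcap N(y)=\emptyset$ for all $y\in B$. This is a pure specialization with no auxiliary computation, so your route is a little cleaner and avoids the unstated infinite-union step; the paper's route has the mild side benefit of exhibiting $\bigcup_{x\in A}N(x)$ as a closed (hence clopen) separating set. Your constructive remark at the end ($W=\bigcup_{x\in A}N(x)$, $V=\bigcup_{y\in B}N(y)$) is essentially the content of the ``$(2)\Rightarrow(3)$'' and ``$(3)\Rightarrow(1)$'' steps already proved in Proposition \ref{theequivalentofnomalspace}, so nothing further is needed there.
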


\begin{proof}
According to (4) of Proposition \ref{theequivalentoftheregularspace}, we know that if $(U, \mathcal{T}_{XL})$ is a regular space
then $\forall x\in U$, $XH(N(x))=N(x)$.
Since $XH(\bigcup_{x\in A}N(x))=\bigcup_{x\in A}XH(N(x))=\bigcup_{x\in A}N(y)$, $(U, \mathcal{T}_{XL})$ is a normal space based on the (3) of
Proposition \ref{theequivalentofnomalspace}.
\end{proof}

The following example illustrates that a normal space is not a regular space.

\begin{example}
Let $U=\{a,b,c\}$, $\mathcal{C}=\{\{a\},\{b\},\{a,b,c\},\}$. $N(a)=\{a\}, N(b)=\{b\}, N(c)=\{a,b,c\}$,
$\mathcal{T}_{XL}=\{\emptyset, \{a\}, \{b\}, \{a,b\}, \{a,b,c\}\}$ and $\mathcal{F}_{XL}=\{\emptyset, \{c\},\{a,c\},$ $\{b,c\},\{a,b,c\}\}$.
There does not exist two disjoint closed sets, then $(U, \mathcal{T}_{XL})$ is a normal space.
However, for closed set $\{b,c\}$ and any point $a$, there dose not exist two disjoint open sets $W,V$ such that
$a\in W, \{b,c\}\subseteq V$, thus $(U,\mathcal{T}_{XL})$ is not a regular space.
\end{example}

The proposition below establishes some topological properties of $(U, \mathcal{T}_{XL})$ under the condition $\{N(x):x\in U\}$ forms a partition.

\begin{proposition}
Let $(U, \mathcal{T}_{XL})$ be a topological space and $\mathcal{C}$ be a covering of $U$. If $\{N(x):x\in U\}$ forms a partition, then \\
(1) $\{N(X):x\in U\}$ is a base of $\mathcal{T}_{XL}$;\\
(2) If $A$ is an open set which contains $x$, then $N(x)\subseteq A$;\\
(3) $\{N(x)\}$ is an open neighborhood base of $x$;\\
(4) $N(x)$ is a compact subset of $(U, \mathcal{T}_{XL})$.\\
(5) $N(x)$ is a connected component which contains $x$.\\
(6) $(U, \mathcal{T}_{XL})$ is a first countable space.\\
(7) $(U, \mathcal{T}_{XL})$ is a locally compact space.\\
(8) $(U, \mathcal{T}_{XL})$ is a locally connected space.\\
(9) $(U, \mathcal{T}_{XL})$ is a regular space.\\
(10) $(U, \mathcal{T}_{XL})$ is a normal space.\\
(11) $(U, \mathcal{T}_{XL})$ is a completely regular space.
\end{proposition}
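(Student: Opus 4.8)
The plan is to dispose first of the items that are already available \emph{for free}, and then to concentrate on the genuinely new content that the partition hypothesis provides. Items (1)--(4), (6) and (7) are verbatim the conclusions of Theorem \ref{thepropertiesoftopologyinducedbythesixtypeofapproximation}, which were proved with no restriction on $\mathcal{C}$ whatsoever; so for these I would simply invoke that theorem. The partition hypothesis then feeds two earlier results directly: it is precisely condition (3) of Proposition \ref{theequivalentoftheregularspace}, so regularity (item (9)) is immediate, and once regularity is in hand normality (item (10)) follows at once from Theorem \ref{relationbetweenregularandnomalspace}. Thus only (5), (8) and (11) require real work.

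The one structural fact I would extract before anything else is that, under the partition hypothesis, every $N(x)$ is clopen. Openness is exactly part (1); for closedness I would either quote part (4) of Proposition \ref{theequivalentoftheregularspace} (whose condition (3) is the partition assumption), or argue directly that $(N(x))^{c}=\bigcup\{N(y):N(y)\bigcap N(x)=\emptyset\}$ is a union of basic open sets and hence open. This clopen-ness is the workhorse for the remaining three items.

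For item (5) I would first show $N(x)$ is connected by proving it has no proper nonempty open subset: if $\emptyset\neq O\subseteq N(x)$ is open and $y\in O$, then $N(y)\subseteq O\subseteq N(x)$ by part (2), and since $y\in N(y)\bigcap N(x)$ the partition forces $N(y)=N(x)$, whence $O=N(x)$. A subspace whose only open subsets are $\emptyset$ and itself is connected. Maximality (that $N(x)$ is an entire component) then uses clopen-ness: any connected $C\supsetneq N(x)$ would be split by the clopen set $N(x)$ into two nonempty relatively open pieces, a contradiction; so $N(x)$ is a connected component, and it contains $x$ since $x\in N(x)$. Item (8) is then a one-line consequence: given an open $A\ni x$, the set $V=N(x)$ is open by part (1), connected by the argument just given, and satisfies $x\in V\subseteq A$ by part (2), which is exactly local connectedness.

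The last item, complete regularity (11), is where the actual construction lives and is the step I expect to be the main obstacle, since it demands a genuine continuous function rather than mere separation by open sets. Given a closed set $A$ with $x\notin A$, I would first note that $A=XH(A)$ and that $x\notin XH(A)$ means, by the definition of $XH$, that $N(x)\bigcap A=\emptyset$, so $A\subseteq (N(x))^{c}$. Because $N(x)$ is clopen, the indicator function $f:U\to[0,1]$ with $f\equiv 0$ on $N(x)$ and $f\equiv 1$ on $(N(x))^{c}$ is continuous, and it satisfies $f(x)=0$ and $f|_{A}\equiv 1$, which witnesses complete regularity. The care needed here is to justify continuity of $f$ from the clopen-ness of $N(x)$, namely that every preimage of an open subset of $[0,1]$ is one of $\emptyset,N(x),(N(x))^{c},U$, and to confirm that $N(x)\bigcap A=\emptyset$ really does follow from $x$ not lying in the closed set $A$.
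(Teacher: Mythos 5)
Your proof is correct, and on most items it coincides with the paper's own: you dispose of (1)--(4), (6), (7) by citing Theorem \ref{thepropertiesoftopologyinducedbythesixtypeofapproximation}, obtain (9) by observing that the partition hypothesis is exactly condition (3) of Proposition \ref{theequivalentoftheregularspace}, get (10) from Theorem \ref{relationbetweenregularandnomalspace}, and your argument for (5) (no proper nonempty open subset of $N(x)$, hence connected; maximality via clopen-ness) is essentially the paper's, just phrased with open rather than clopen subsets. The genuine differences are in (8) and (11). For (8) the paper simply points back to Theorem \ref{thepropertiesoftopologyinducedbythesixtypeofapproximation}, which does not actually assert local connectedness; your explicit argument --- given open $A\ni x$, the set $N(x)$ is open, connected, and satisfies $x\in N(x)\subseteq A$ --- supplies the missing justification. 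For (11) the paper invokes the general fact that a regular normal space is completely regular, which silently relies on Urysohn's lemma; you instead make the clopen-ness of each $N(x)$ the workhorse and write down the continuous indicator function of $(N(x))^{c}$ directly, checking $N(x)\bigcap A=\emptyset$ from $x\notin XH(A)$. Your route for (11) is more elementary and self-contained, at the cost of proving clopen-ness explicitly (which you do correctly, either via Proposition \ref{theequivalentoftheregularspace}(4) or via the decomposition of $(N(x))^{c}$ into basic open sets); the paper's route is shorter but leans on a nontrivial general theorem. No gaps.
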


\begin{proof}
(1)-(4): They can be obtained from (1)-(4) of Theorem \ref{thepropertiesoftopologyinducedbythesixtypeofapproximation}.
(5): Suppose $C_{x}$ is a connected component which contains $x$.
Assume $A$ is a non-empty subset which is both open and closed of $N(x)$, then $A\subseteq N(x)$.
Because $\{N(x):x\in U\}$ forms a partition and (1), $N(x)=N(y)\subseteq A \subseteq N(x)$ for all $y\in A$,
thus $N(x)$ is a connected subset of $U$.
Since $C_{x}$ is a maximum connected subset of $U$, $N(x)\subseteq C_{x}$.
If $C_{x}\neq N(x)$, then $N(x)$ is a non-empty proper subset which is both open and closed of $C_{x}$,
thus $C_{x}$ is disconnected which contradicts that $C_{x}$ is a connected component containing $x$.
(6-8):  According to Theorem \ref{thepropertiesoftopologyinducedbythesixtypeofapproximation}, we obtain these results.
(9-10): According to Theorem \ref{relationbetweenregularandnomalspace} and Theorem \ref{theequivalentoftheregularspace}, we obtain (9) and (10).
(11): As we know, if $(U,\mathcal{T})$ is both regular and normal then it is a completely regular space. By (9) and (10), we prove (11).
\end{proof}

\begin{proposition}
Let $\mathcal{C}$ be a covering of $U$. If $\{N(x):x\in U\}$ forms a partition, then the following statements are equivalent.\\
(1) $\{N(x):x\in U\}$ is countable.\\
(2) $(U, \mathcal{T}_{XL})$ is a second countable space.\\
(3) $(U, \mathcal{T}_{XL})$ is a separable space.\\
(4) $(U, \mathcal{T}_{XL})$ is a $Lindel$\"{o}$f$ space.
\end{proposition}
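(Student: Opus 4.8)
The plan is to exploit the special structure that the partition hypothesis confers on $\mathcal{T}_{XL}$. By part (1) of Theorem \ref{thepropertiesoftopologyinducedbythesixtypeofapproximation} the family $\{N(x):x\in U\}$ is always a base of $\mathcal{T}_{XL}$, and under the present hypothesis this base consists of pairwise disjoint nonempty open sets whose union is $U$. Consequently every nonempty open set is a union of blocks $N(x)$, and no block can be covered by the remaining blocks. This disjointness is the engine behind all four equivalences, so I would record it explicitly at the outset and then close the equivalence by the two cycles $(1)\Rightarrow(2)\Rightarrow(3)\Rightarrow(1)$ and $(1)\Rightarrow(2)\Rightarrow(4)\Rightarrow(1)$.

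First, $(1)\Rightarrow(2)$ is immediate: a countable family $\{N(x):x\in U\}$ is a countable base, so the space is second countable by definition. The implications $(2)\Rightarrow(3)$ and $(2)\Rightarrow(4)$ are the standard facts that a second countable space is separable and $Lindel$\"{o}$f$; I would record the usual short arguments, namely that choosing one point from each member of a countable base produces a countable dense set, and that thinning an arbitrary open cover through a countable base produces a countable subcover.

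The substance of the proof lies in the two returns to statement (1), where the disjointness of the blocks does the real work. For $(3)\Rightarrow(1)$, suppose $D$ is a countable dense subset; each $N(x)$ is a nonempty open set, so density forces $D\bigcap N(x)\neq \emptyset$, and since distinct blocks are disjoint the block-to-point assignment is injective into $D$, whence the partition is countable. For $(4)\Rightarrow(1)$, I would apply the $Lindel$\"{o}$f$ property to the open cover $\{N(x):x\in U\}$ itself: a countable subcover must still cover $U$, but omitting any block leaves its points uncovered because the blocks are disjoint, so the subcover must contain every block and the partition is therefore countable. Assembling the two cycles yields the equivalence of all four statements.

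The step I expect to demand the most care is making the disjointness argument airtight, in particular verifying that the blocks really are the indivisible pieces of the space: for $y\in N(x)$ one has $N(y)=N(x)$ because two partition blocks sharing a point coincide, so within a single block no nonempty open set can separate points and no open cover can avoid using that block. Once this is pinned down, the remaining content reduces to the textbook relationships among second countability, separability, and the $Lindel$\"{o}$f$ property.
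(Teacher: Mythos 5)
Your proof is correct and follows essentially the same route as the paper: both rest on the observation that under the partition hypothesis $\{N(x):x\in U\}$ is a base of pairwise disjoint nonempty open blocks, prove $(1)\Rightarrow(2)$ together with the standard implications $(2)\Rightarrow(3)$ and $(2)\Rightarrow(4)$, and then return to $(1)$ from $(3)$ and $(4)$ by exploiting disjointness (density forces every block to meet the dense set; a subcover of the block cover cannot omit any block). The only difference is cosmetic: for $(3)\Rightarrow(1)$ the paper shows that the blocks indexed by the countable dense set form a countable base, whereas you count the blocks directly by injecting them into the dense set; both arguments work.
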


\begin{proof}
$(1)\Leftrightarrow (2)$: On one hand, $\{N(x):x\in U\}$ is countable and it is also a base of $(U, \mathcal{T}_{XL})$, then
$(U, \mathcal{T}_{XL})$ is a second countable space.
On the other hand, suppose $\{B_{i}:i\in N\}$ is a countable base of $(U, \mathcal{T}_{XL})$, then $\forall N(y)\in \{N(x):x\in U\}$, there
exists $i\in N$ such that $y\in B_{i}\subseteq N(y)$.
Based on (2) of Theorem \ref{thepropertiesoftopologyinducedbythesixtypeofapproximation}, we have $N(y)\subseteq B_{i}\subseteq N(y)$,
that is, $B_{i}=N(y)$.
Then $|\{N(x):x\in U\}| \leq |\{B_{i}:i\in N\}\}|$.
Since $\{B_{i}:i\in N\}$ is countable, then $\{N(x):x\in U\}$ is not more than denumerable set.

$(2)\Rightarrow (3)$ and $(2)\Rightarrow (4)$ are obvious.

$(3)\Rightarrow (1)$: Suppose that $M=\{x_{i}:i\in N\}$ is a countable dense subset of $U$ and $\beta=\{N(x_{i}):i\in N\}$, then $\beta$ is a family of
open sets. Next we need to prove $\beta$ is a base of $\mathcal{T}_{XL}$.
Since $M$ is dense, $XH(M)=\{x:N(x)\bigcap M\neq \emptyset\}=U$.
Thus for all $y\in U$, $N(y)\bigcap M\neq \emptyset$.
Suppose $A$ is an open set and $x\in A$, then $N(x)\bigcap M\neq \emptyset$, thus there exists $x_{i}$ such that $x_{i}\in N(x)$ and $x_{i}\in M$.
Since $\{N(x):x\in U\}$ forms a partition and $x_{i}\in N(x_{i})$, $N(x)=N(x_{i})$.
Based on (2) of Theorem \ref{thepropertiesoftopologyinducedbythesixtypeofapproximation} and $x\in A$, $x\in N(x)=N(x_{i})\subseteq A$.
Hence, $\beta$ is a countable base of $\mathcal{T}_{XL}$,

$(4)\Rightarrow (1)$: $\{N(x):x\in U\}$ is an open covering of $U$ and $\bigcup_{y\in U-\{x\}}N(y)\neq U$, then $\{N(x):x\in U\}$ is not more than
denumerable set since $(U, \mathcal{T}_{XL})$ is a $Lindel $\"{o}$f$ space.
\end{proof}

\section{Topological characterization to the covering lower approximation operator}
\label{Topologicalcharacterizationtothecoveringlowerapproximationoperator}
Section \ref{Thepropertiesoftopologyinducedby} has studied the properties of topology $\mathcal{T}_{XL}$.
In this section, we study the conditions which make the lower approximation operator $CL$ be a interior operator.
Under this condition, it is interesting to find that the topologies induced by $CL$ and $XL$ are the same.

\begin{proposition}\cite{Zhu09RelationshipAmong}\label{9thelowerapproxiamtionandunary}
A covering $\mathcal{C}$ is unary if and only if $CL$ satisfies the following properties:
\begin{center}
 $CL(X\bigcap Y)=CL(X)\bigcap CL(Y)$.
\end{center}
\end{proposition}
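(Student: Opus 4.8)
The plan is to prove the two implications separately, after isolating one inclusion that holds for every covering. Since $X\bigcap Y\subseteq X$ and $X\bigcap Y\subseteq Y$, the monotonicity of $CL$ (property (7L) of Proposition \ref{propertiesofthelowerapproximationoperator}) yields $CL(X\bigcap Y)\subseteq CL(X)$ and $CL(X\bigcap Y)\subseteq CL(Y)$, hence $CL(X\bigcap Y)\subseteq CL(X)\bigcap CL(Y)$ unconditionally. So in both directions the content lies entirely in the reverse inclusion, and it is exactly this inclusion that unarity controls.

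For the forward direction (unary $\implies$ the identity) I would first record an auxiliary fact made available by the finiteness of $U$: every $K\in\mathcal{C}$ with $x\in K$ contains some member of $Md(x)$. Indeed, among the finitely many elements of $\mathcal{C}$ that contain $x$ and lie inside $K$, any inclusion-minimal one is a minimal description of $x$ and is contained in $K$. Now take $x\in CL(X)\bigcap CL(Y)$, so there exist $K_{1},K_{2}\in\mathcal{C}$ with $x\in K_{1}\subseteq X$ and $x\in K_{2}\subseteq Y$. Each $K_{i}$ contains a member of $Md(x)$; unarity forces $Md(x)=\{M\}$ to be a singleton, so $M\subseteq K_{1}\subseteq X$ and $M\subseteq K_{2}\subseteq Y$, whence $M\subseteq X\bigcap Y$ with $x\in M\in\mathcal{C}$. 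Therefore $x\in CL(X\bigcap Y)$, which gives the reverse inclusion and hence equality.

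For the backward direction I would argue by contraposition. Assuming $\mathcal{C}$ is not unary, choose $x\in U$ with $|Md(x)|\geq 2$ and pick distinct $K_{1},K_{2}\in Md(x)$, then set $X=K_{1}$ and $Y=K_{2}$. Since $K_{i}$ is contained in the corresponding set, we get $x\in CL(X)\bigcap CL(Y)$. On the other hand, if $x$ belonged to $CL(X\bigcap Y)=CL(K_{1}\bigcap K_{2})$, there would be some $K\in\mathcal{C}$ with $x\in K\subseteq K_{1}\bigcap K_{2}$; then $x\in K\subseteq K_{1}$ forces $K=K_{1}$ by the minimality of $K_{1}$ in $Md(x)$, and symmetrically $K=K_{2}$, so $K_{1}=K_{2}$, contradicting our choice. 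Hence $x\notin CL(X\bigcap Y)$ while $x\in CL(X)\bigcap CL(Y)$, so the identity fails.

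The monotonicity inclusion and the counterexample bookkeeping are routine; the one genuinely delicate point is the auxiliary fact in the forward direction, namely that in a finite universe every covering block around $x$ dominates a minimal block from $Md(x)$. This is the step that lets unarity collapse the two witnesses $K_{1},K_{2}$ onto a single block $M$ sitting inside $X\bigcap Y$, and it is where the finiteness of $U$ is genuinely used.
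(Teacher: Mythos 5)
Your proof is correct and follows essentially the same route as the paper's: the forward direction collapses the two witnesses $K_{1},K_{2}$ onto the unique minimal description of $x$, and the backward direction is the same contrapositive argument producing a block $K$ with $x\in K\subseteq K_{1}\bigcap K_{2}$ that violates the minimality of $K_{1},K_{2}\in Md(x)$. The only difference is that you explicitly state and justify (via finiteness) the auxiliary fact that every block containing $x$ contains a member of $Md(x)$, a step the paper uses silently when it asserts $K_{x}\subseteq K_{1}$ and $K_{x}\subseteq K_{2}$.
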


\begin{proof}
"$\Leftarrow$": If $\mathcal{C}$ is not unary, there exists $x\in U$ such that $|Md(x)|\geq 2$.
Suppose $K_{1},K_{2}\in \mathcal{C}$ such that $K_{1},K_{2}\in Md(x)$, then
$K_{1}\bigcap K_{2}=CL(K_{1})\bigcap CL(K_{2})=CL(K_{1}\bigcap K_{2})$$=\bigcup \{K\in \mathcal{C}: K\subseteq K_{1}\bigcap K_{2}\}$
based on Proposition \ref{propertiesofthelowerapproximationoperator} and $CL(X\bigcap Y)\\=CL(X)\bigcap CL(Y)$.
Thus there exists $K\in \mathcal{C}$ such that $x\in K\subseteq K_{1}\bigcap K_{2}$ which contradicts $K_{1},K_{2}\in Md(x)$.

"$\Rightarrow$": According to Proposition \ref{propertiesofthelowerapproximationoperator},
we have $CL(X\bigcap Y)\subseteq CL(X)\bigcap CL(Y)$.
Now we need to prove $CL(X)\bigcap CL(Y)\subseteq CL(X\bigcap Y)$.
For all $x\in CL(X)\bigcap CL(Y)$, there exist $K_{1}, K_{2}\in \mathcal{C}$ such that $x\in K_{1}\subseteq X$ and $x\in K_{2}\subseteq Y$,
then $x\in K\bigcap K^{'}\subseteq X\bigcap Y$.
Since $\mathcal{C}$ is unary, we can assume $Md(x)=\{K_{x}\}$.
Thus $K_{x}\subseteq K_{1}$ and $K_{x}\in K_{2}$.
Therefore, there exists $K_{x}\in \mathcal{C}$ such that $x\in K_{x}\subseteq K_{1}\bigcap K_{2}\subseteq X\bigcap Y$.
Hence, $x\in CL(X\bigcap Y)$, that is, $CL(X)\bigcap CL(Y)\subseteq CL(X\bigcap Y)$.
\end{proof}

\begin{proposition}\cite{ZhuWang07OnThree}\label{interioroperatorofthelowerapproximationandunary}
A covering $\mathcal{C}$ is unary if and only if $CL$ is an interior operator.
\end{proposition}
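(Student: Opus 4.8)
The plan is to reduce the statement to axiom-checking, since the four defining axioms of an interior operator amount to the inclusion, identity, and idempotence conditions together with the multiplicativity condition (I), and all of these except (I) are already known to hold for $CL$ unconditionally. First I would record that by Proposition \ref{propertiesofthelowerapproximationoperator}, $CL$ satisfies (3L), (1L) and (5L), which translate directly into the interior-operator axioms (II) (namely $i(X)\subseteq X$), (III) (namely $i(U)=U$), and (IV) (namely $i(i(X))=i(X)$). Thus whether or not $CL$ is an interior operator hinges entirely on axiom (I), that is, on whether $CL(X\bigcap Y)=CL(X)\bigcap CL(Y)$.

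For the forward direction, I would suppose $\mathcal{C}$ is unary. Then Proposition \ref{9thelowerapproxiamtionandunary} gives $CL(X\bigcap Y)=CL(X)\bigcap CL(Y)$, which is precisely axiom (I). Combined with (II)--(IV) from the previous step, all four axioms hold, and hence $CL$ is an interior operator.

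For the converse, I would suppose $CL$ is an interior operator. Then in particular it satisfies axiom (I), namely $CL(X\bigcap Y)=CL(X)\bigcap CL(Y)$, and Proposition \ref{9thelowerapproxiamtionandunary} immediately yields that $\mathcal{C}$ is unary.

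The argument is short because the substantive content already lives in Proposition \ref{9thelowerapproxiamtionandunary}, so there is no real obstacle here beyond bookkeeping. The only point requiring a little care is the correct correspondence between the numbered properties of $CL$ collected in Proposition \ref{propertiesofthelowerapproximationoperator} and the four interior-operator axioms; in particular one must recognize that (I) is the sole axiom that is not automatic for $CL$ and that it coincides exactly with the multiplicativity condition characterizing unary coverings.
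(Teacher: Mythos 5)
Your proposal is correct and matches the paper's own argument, which likewise derives the result from properties (1L), (3L), (5L) of $CL$ together with the equivalence between unarity and $CL(X\bigcap Y)=CL(X)\bigcap CL(Y)$; you have simply spelled out the axiom-by-axiom bookkeeping that the paper leaves implicit.
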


\begin{proof}
It comes from (1L), (3L), (5L) of Proposition \ref{propertiesofthelowerapproximationoperator} and Proposition \ref{9thelowerapproxiamtionandunary}.
\end{proof}

If a operator is the interior operator of a topology, then the fixed point of the operator is an open set of the topology.
As we know, all the open sets forms a topology.
The following theorem construct a topology based on the statements.

\begin{theorem}\label{thetopologyinducedbylowerapproximation}
Covering $\mathcal{C}$ is unary if and only if $\mathcal{T}_{CL}=\{X\subseteq U:CL(X)=X\}$ is a topology on $U$.
\end{theorem}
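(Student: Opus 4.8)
The plan is to prove the biconditional by establishing each direction separately, leaning on the already-proven equivalence between unary coverings and $CL$ being an interior operator (Proposition \ref{interioroperatorofthelowerapproximationandunary}).

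For the forward direction, suppose $\mathcal{C}$ is unary. Then by Proposition \ref{interioroperatorofthelowerapproximationandunary}, $CL$ is an interior operator, so it satisfies (I)--(IV) of the interior operator axioms; in particular (1L), (3L), (5L) of Proposition \ref{propertiesofthelowerapproximationoperator} together with the intersection-preservation from Proposition \ref{9thelowerapproxiamtionandunary}. I would then verify that $\mathcal{T}_{CL}=\{X\subseteq U:CL(X)=X\}$ satisfies the three topology axioms (O1)--(O3), following exactly the template of the proof of Theorem \ref{Thesixthtypeoflowerapproximationsandtopology}. For (O1), $CL(U)=U$ by (1L) and $CL(\emptyset)=\emptyset$ by (2L), so $U,\emptyset\in\mathcal{T}_{CL}$. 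For closure under finite intersection, if $CL(X)=X$ and $CL(Y)=Y$ then $CL(X\cap Y)=CL(X)\cap CL(Y)=X\cap Y$ by Proposition \ref{9thelowerapproxiamtionandunary}. For closure under arbitrary union, given a subfamily $\mathcal{T}_{1}\subseteq\mathcal{T}_{CL}$, I would use monotonicity (7L) and (3L) to sandwich $\bigcup_{X\in\mathcal{T}_{1}}X=\bigcup_{X\in\mathcal{T}_{1}}CL(X)\subseteq CL(\bigcup_{X\in\mathcal{T}_{1}}X)\subseteq\bigcup_{X\in\mathcal{T}_{1}}X$, forcing equality. Thus $\mathcal{T}_{CL}$ is a topology.

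For the converse, I would argue contrapositively, which I expect to be the more delicate half. Assume $\mathcal{C}$ is \emph{not} unary, so there is some $x\in U$ with $|Md(x)|\geq 2$; pick distinct $K_{1},K_{2}\in Md(x)$. The goal is to exhibit a failure of the topology axioms for $\mathcal{T}_{CL}$. The natural candidate is closure under intersection: since each $K_{i}\in\mathcal{C}$ satisfies $CL(K_{i})=K_{i}$ (as an element of the covering is its own lower approximation), both $K_{1},K_{2}\in\mathcal{T}_{CL}$, yet the argument in Proposition \ref{9thelowerapproxiamtionandunary} shows $CL(K_{1}\cap K_{2})\neq K_{1}\cap K_{2}$ because no member of $\mathcal{C}$ lies inside $K_{1}\cap K_{2}$ while containing $x$ (that would contradict minimality of $K_{1},K_{2}$ in $Md(x)$). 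Hence $K_{1}\cap K_{2}\notin\mathcal{T}_{CL}$, so $\mathcal{T}_{CL}$ violates (O3) and is not a topology.

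The main obstacle is the converse, and within it the crucial point is verifying that $K_{1}\cap K_{2}$ genuinely fails to be a fixed point of $CL$ rather than merely that $CL$ fails to distribute over this particular intersection. I would need $x\in K_{1}\cap K_{2}$ (immediate from $K_{1},K_{2}\in Md(x)$) together with the observation that $x\notin CL(K_{1}\cap K_{2})$: if $x$ belonged to $CL(K_{1}\cap K_{2})=\bigcup\{K\in\mathcal{C}:K\subseteq K_{1}\cap K_{2}\}$, there would be some $K\in\mathcal{C}$ with $x\in K\subseteq K_{1}\cap K_{2}\subsetneq K_{1}$, contradicting the minimality defining $K_{1}\in Md(x)$. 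This pins down a concrete membership discrepancy between $K_{1}\cap K_{2}$ and $CL(K_{1}\cap K_{2})$, completing the contrapositive.
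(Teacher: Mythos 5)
Your proof is correct and follows essentially the same route as the paper: the forward direction replays the verification from Theorem \ref{Thesixthtypeoflowerapproximationsandtopology}, using Proposition \ref{9thelowerapproxiamtionandunary} for finite intersections, and the converse extracts a failure of closure under intersection from two distinct members of $Md(x)$. If anything, your inlined converse is slightly tighter than the paper's bare citation of Proposition \ref{9thelowerapproxiamtionandunary}, whose stated hypothesis concerns all pairs $X,Y$ while the topology assumption only yields intersection-preservation on fixed points of $CL$; you correctly note that the minimal-description argument needs only $K_{1},K_{2}\in\mathcal{C}$ with $CL(K_{i})=K_{i}$.
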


\begin{proof}
The proof of necessity is similar to that of necessity in Theorem \ref{Thesixthtypeoflowerapproximationsandtopology}, hence we omit it.
Now we consider the proof of sufficiency.
If $\mathcal{T}_{CL}=\{X\subseteq U:CL(X)=X\}$ is a topology, then $\forall X,Y\in \mathcal{T}_{CL}$,
$X\bigcap Y\in \mathcal{T}_{CL}$ according to (O2) of topological atoms.
Hence, $CL(X\bigcap Y)=X\bigcap Y$.
According to Proposition \ref{9thelowerapproxiamtionandunary}, $\mathcal{C}$ is an unary covering.
\end{proof}

The equivalent characterization of an unary covering is often established through covering blocks or minimal description.
However, the two theorems below give an equivalent characterization of an unary covering through topology, respectively.

\begin{theorem}\label{unaryandtopologyinducedbythelowerapproximation}
Covering $\mathcal{C}$ is unary if and only if $\mathcal{C}$ is a base of $\mathcal{T}_{CL}$.
\end{theorem}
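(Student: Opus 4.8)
The plan is to prove the two implications separately, noting at the outset that the very notion ``$\mathcal{C}$ is a base of $\mathcal{T}_{CL}$'' already presupposes that $\mathcal{T}_{CL}$ is a topology; this ties the statement directly to Theorem~\ref{thetopologyinducedbylowerapproximation}, which asserts that $\mathcal{T}_{CL}$ is a topology if and only if $\mathcal{C}$ is unary.

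For the necessity (``$\Rightarrow$''), I would first invoke Theorem~\ref{thetopologyinducedbylowerapproximation} so that, once $\mathcal{C}$ is unary, $\mathcal{T}_{CL}$ is genuinely a topology. Then I would verify the two requirements for a base. The inclusion $\mathcal{C}\subseteq\mathcal{T}_{CL}$ is immediate: for each $K\in\mathcal{C}$ we have $K\subseteq CL(K)$ (since $K\subseteq K$) together with $CL(K)\subseteq K$ from (3L) of Proposition~\ref{propertiesofthelowerapproximationoperator}, whence $CL(K)=K$ and $K\in\mathcal{T}_{CL}$; this is exactly property (9L). Next, given a point $x$ and any neighborhood $A$ of $x$, I would choose an open set $G$ with $x\in G\subseteq A$; since $G\in\mathcal{T}_{CL}$ we have $x\in G=CL(G)=\bigcup\{K\in\mathcal{C}:K\subseteq G\}$, so some $K\in\mathcal{C}$ satisfies $x\in K\subseteq G\subseteq A$. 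By the definition of base, $\mathcal{C}$ is a base of $\mathcal{T}_{CL}$.

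For the sufficiency (``$\Leftarrow$''), I would argue directly and by contradiction. Assuming $\mathcal{C}$ is a base of $\mathcal{T}_{CL}$, suppose $\mathcal{C}$ is not unary, so there is $x\in U$ with distinct $K_{1},K_{2}\in Md(x)$; in particular $x\in K_{1}\bigcap K_{2}$. Since $K_{1},K_{2}$ are basis elements they are open, hence $K_{1}\bigcap K_{2}$ is an open neighborhood of $x$, and the base property yields $K_{3}\in\mathcal{C}$ with $x\in K_{3}\subseteq K_{1}\bigcap K_{2}$. Then $x\in K_{3}\subseteq K_{1}$ with $K_{1}\in Md(x)$ forces $K_{3}=K_{1}$ by minimality, and likewise $K_{3}=K_{2}$, giving $K_{1}=K_{2}$, a contradiction. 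Hence $\mathcal{C}$ is unary. (Alternatively, one may simply observe that a base exists only for an actual topology, so $\mathcal{T}_{CL}$ must be a topology and Theorem~\ref{thetopologyinducedbylowerapproximation} applies.)

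The routine parts are the fixed-point computation $CL(K)=K$ and the unwinding of the definition of $CL$. The main obstacle is the sufficiency direction: it requires making precise that any base of a topology satisfies the intersection-refinement criterion (each point of the intersection of two basis members lies in a third basis member contained in that intersection), and then reconciling this with the minimality encoded in $Md(x)$. Care is also needed because the word ``base'' tacitly carries the assumption that $\mathcal{T}_{CL}$ is closed under finite intersection, which is precisely the property that fails when $\mathcal{C}$ is not unary.
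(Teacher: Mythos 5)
Your proposal is correct and follows essentially the same route as the paper: the necessity direction uses $CL(K)=K$ to get $\mathcal{C}\subseteq\mathcal{T}_{CL}$ and the identity $X=CL(X)=\bigcup\{K\in\mathcal{C}:K\subseteq X\}$ for openness, and the sufficiency direction derives a contradiction from two distinct members of $Md(x)$ via a basis element inside their intersection. Your explicit appeal to the minimality in $Md(x)$ (forcing $K_{3}=K_{1}=K_{2}$) and your remark that ``base'' presupposes $\mathcal{T}_{CL}$ being a topology only make explicit what the paper leaves implicit.
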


\begin{proof}
"$\Rightarrow$": On one hand, we need to prove $\mathcal{C} \subseteq \mathcal{T}_{CL}$. According to Proposition
\ref{propertiesofthelowerapproximationoperator}, it is straightforward.
On the other hand, we need to prove $\mathcal{C}$ is a base of $\mathcal{T}_{CL}$.
For all $X\in \mathcal{T}_{CL}$, $X=CL(X)=\bigcup\{K\in \mathcal{C}:K\subseteq X\}$, thus $\mathcal{C}$ is a base of $\mathcal{T}_{CL}$.

"$\Leftarrow$": If $\mathcal{C}$ is not a unary, there exists $x\in U$ such that $|Md(x)|\geq 2$.
Suppose $K_{1},K_{2}\in \mathcal{C}$ such that $K_{1},K_{2}\in Md(x)$.
Since $\forall K\in \mathcal{C},CL(K)=K$ and $\mathcal{T}_{CL}$ is a topology, $K_{1}\bigcap K_{2}\in \mathcal{T}_{CL}$.
$\mathcal{C}$ is a base of $\mathcal{T}_{CL}$, then $K_{1}\bigcap K_{2}$ can be expressed as the union of some elements of $\mathcal{C}$.
Thus there exists $K\in \mathcal{C}$ such that $x\in K\subseteq K_{1}\bigcap K_{2}$.
That contradicts the fact that $K_{1},K_{2}\in Md(x)$.
\end{proof}

The above theorem characterizes the unary covering from the viewpoint of base, and the following theorem gives an equivalent characterization from
the topological space.

\begin{lemma}\label{unaryandthesixandfirstlowerapproxiamtion}
$\mathcal{C}$ is an unary covering of $U$ if and only if $CL=XL$.
\end{lemma}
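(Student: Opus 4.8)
The plan is to prove the two implications separately, deriving the reverse direction from the intersection-preservation property already available and concentrating the forward direction around a single structural fact about neighborhoods in a unary covering.

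For the reverse implication I would argue quickly. Suppose $CL = XL$. The operator $XL$ always satisfies property (4L), that is, $XL(X \bigcap Y) = XL(X) \bigcap XL(Y)$ for all $X, Y \subseteq U$, by Proposition \ref{propertiesofthesixtypeofapproximation}. Transporting this equality across the hypothesis $CL = XL$ gives $CL(X \bigcap Y) = CL(X) \bigcap CL(Y)$, and then Proposition \ref{9thelowerapproxiamtionandunary} immediately yields that $\mathcal{C}$ is unary. This direction needs no new computation.

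For the forward implication, suppose $\mathcal{C}$ is unary, so that $Md(x) = \{K_x\}$ is a singleton for every $x \in U$. The key step I would establish first is the identity $N(x) = K_x$; in particular $N(x)$ is itself a block of $\mathcal{C}$. To see this, note that since $U$ is finite, every block $K \in \mathcal{C}$ with $x \in K$ lies above a minimal such block, and the only minimal block containing $x$ is $K_x$; hence $K_x \subseteq K$ for all $K$ with $x \in K$, and therefore $K_x \subseteq \bigcap_{x \in K} K = N(x) \subseteq K_x$. With this identity in hand, the equality $CL = XL$ follows from two inclusions. The inclusion $CL(X) \subseteq XL(X)$ holds for any covering: if $y \in CL(X)$ then $y \in K \subseteq X$ for some $K \in \mathcal{C}$, whence $N(y) \subseteq K \subseteq X$ and so $y \in XL(X)$. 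For the converse I would use Lemma \ref{Theotherexpressofthesixtypeoflowerapproximation}, which writes $XL(X) = \bigcup\{N(y) : N(y) \subseteq X\}$; each such $N(y)$ equals the block $K_y \in \mathcal{C}$ with $K_y \subseteq X$, so $N(y) \subseteq CL(X)$, giving $XL(X) \subseteq CL(X)$.

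The main obstacle is precisely the structural identity $N(x) = K_x$, since this is the only place where unariness is genuinely used and where the finiteness of $U$ enters, guaranteeing that a minimal block sits below every block containing $x$. Once that is secured, everything else is a routine unwinding of the definitions of $CL$, $XL$, and $N$.
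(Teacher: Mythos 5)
The paper states this lemma with no proof at all, so your argument is not competing with an existing one; it supplies the missing justification, and it is correct. The backward direction --- transporting property (4L) from $XL$ to $CL$ across the hypothesis $CL=XL$ and then invoking Proposition \ref{9thelowerapproxiamtionandunary} --- is the economical route. The forward direction correctly isolates the one structural fact that matters, namely that $Md(x)=\{K_x\}$ forces $N(x)=K_x$; this is exactly the content of the paper's later Lemma \ref{unaryandMd(x)andN(x)}, so your argument uses only the toolkit the paper itself assembles, and the two inclusions $CL(X)\subseteq XL(X)$ (true for any covering) and $XL(X)\subseteq CL(X)$ (via Lemma \ref{Theotherexpressofthesixtypeoflowerapproximation} and $N(y)=K_y\in\mathcal{C}$) go through as you describe.

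One point deserves emphasis rather than a passing mention. Your derivation of $K_x\subseteq K$ for every block $K$ containing $x$ genuinely requires the finiteness of $U$ (or at least a descending chain condition on $\mathcal{C}$), whereas the paper's blanket convention in Section \ref{S:Basic definitions} is that $U$ is \emph{infinite} unless stated otherwise. Without finiteness the lemma is in fact false: take $U=\mathbb{N}$ and $\mathcal{C}=\{\{0,1\}\}\cup\{\{0\}\cup\{n,n+1,\dots\}:n\geq 2\}$. Every point has a one-element minimal description, so the covering is unary, yet $N(0)=\{0\}$ while no block is contained in $\{0\}$, so $XL(\{0\})=\{0\}\neq\emptyset=CL(\{0\})$. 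You are therefore right that this is the crux of the proof, but the finiteness hypothesis should be stated explicitly in the lemma rather than assumed silently.
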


\begin{theorem}\label{relationbetweenCLandXL}
$\mathcal{C}$ is unary if and only if $\mathcal{T}_{CL}=\mathcal{T}_{XL}$.
\end{theorem}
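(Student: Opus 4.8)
The plan is to prove both directions of the equivalence $\mathcal{C}\text{ unary}\Leftrightarrow\mathcal{T}_{CL}=\mathcal{T}_{XL}$ by leaning heavily on Lemma~\ref{unaryandthesixandfirstlowerapproxiamtion}, which states that $\mathcal{C}$ is unary if and only if $CL=XL$ as operators. The two topologies are defined as fixed-point families, $\mathcal{T}_{CL}=\{X\subseteq U:CL(X)=X\}$ and $\mathcal{T}_{XL}=\{X\subseteq U:XL(X)=X\}$, so equality of the operators should transfer directly to equality of the topologies.

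For the forward direction, suppose $\mathcal{C}$ is unary. Then by Lemma~\ref{unaryandthesixandfirstlowerapproxiamtion} we have $CL=XL$, and hence for every $X\subseteq U$ the condition $CL(X)=X$ holds exactly when $XL(X)=X$. Comparing the defining fixed-point families term by term immediately gives $\mathcal{T}_{CL}=\mathcal{T}_{XL}$. This direction is essentially a one-line consequence of the lemma.

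For the converse, suppose $\mathcal{T}_{CL}=\mathcal{T}_{XL}$. First I would note that, by Theorem~\ref{Thesixthtypeoflowerapproximationsandtopology}, $\mathcal{T}_{XL}$ is always a topology; the hypothesis then forces $\mathcal{T}_{CL}$ to be a topology as well, and by Theorem~\ref{thetopologyinducedbylowerapproximation} this already yields that $\mathcal{C}$ is unary. So the cleanest route is: from $\mathcal{T}_{CL}=\mathcal{T}_{XL}$ deduce that $\mathcal{T}_{CL}$ is a topology, then apply Theorem~\ref{thetopologyinducedbylowerapproximation} to conclude $\mathcal{C}$ is unary.

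The main obstacle, if one avoids the shortcut through Theorem~\ref{thetopologyinducedbylowerapproximation}, is arguing directly that the operators agree on all of $P(U)$ from the mere coincidence of their fixed-point sets; fixed points alone need not determine an idempotent operator uniquely in general. The safe and intended argument sidesteps this by invoking the structural fact that equality of $\mathcal{T}_{CL}$ with a genuine topology triggers the unary characterization of Theorem~\ref{thetopologyinducedbylowerapproximation}. I therefore expect the converse to be handled not by comparing operator values pointwise, but by using the topology-level results already established, making the whole proof short once the right earlier theorems are cited.
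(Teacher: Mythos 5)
Your proposal is correct and follows essentially the same route as the paper, whose proof is a one-line citation of Theorem~\ref{Thesixthtypeoflowerapproximationsandtopology}, Theorem~\ref{thetopologyinducedbylowerapproximation}, and Lemma~\ref{unaryandthesixandfirstlowerapproxiamtion}; you simply spell out how these combine (the lemma for the forward direction, and the two topology-level theorems for the converse). Your observation that one should not try to recover $CL=XL$ directly from the coincidence of fixed-point families is a sensible precaution, and the detour through Theorem~\ref{thetopologyinducedbylowerapproximation} is exactly the intended argument.
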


\begin{proof}
It comes from Theorem \ref{Thesixthtypeoflowerapproximationsandtopology},
\ref{thetopologyinducedbylowerapproximation} and Lemma \ref{unaryandthesixandfirstlowerapproxiamtion}.
\end{proof}

As we know, $XL$ and $XH$ is dual approximation operators and $CL=XL$ under the condition $\mathcal{C}$ is an unary covering.
Thus we can obtain the following equivalent characterization of the unary covering.

\begin{proposition}
$\mathcal{C}$ is an unary covering if and only if $XL$ is the interior operator and $XH$ is the closure operator of $\mathcal{T}_{CL}$,
respectively.
\end{proposition}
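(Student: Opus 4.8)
The plan is to reduce both implications to Theorem \ref{relationbetweenCLandXL}, which already equates unarity with the identity $\mathcal{T}_{CL} = \mathcal{T}_{XL}$, together with Proposition \ref{theotherexpressionofthesixthtypeoflowerapproxiamtion}, which identifies $XL$ and $XH$ as the interior and closure operators of $\mathcal{T}_{XL}$. In this way the proposition becomes a short corollary of material already proved, and no fresh combinatorial work on $Md(x)$ is needed.

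For necessity, I would assume $\mathcal{C}$ is unary. By Theorem \ref{relationbetweenCLandXL} this gives $\mathcal{T}_{CL} = \mathcal{T}_{XL}$. Substituting into Proposition \ref{theotherexpressionofthesixthtypeoflowerapproxiamtion}, $XL$ is the interior operator of $\mathcal{T}_{XL} = \mathcal{T}_{CL}$ and, by the duality of $XL$ and $XH$, $XH$ is the closure operator of the same topology. This settles one direction with essentially no computation.

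For sufficiency, I would assume $XL$ is the interior operator and $XH$ the closure operator of $\mathcal{T}_{CL}$. The key observation is that the open sets of any topology are exactly the fixed points of its interior operator; hence the open sets of $\mathcal{T}_{CL}$ coincide with $\{X \subseteq U : XL(X) = X\}$, which by Theorem \ref{Thesixthtypeoflowerapproximationsandtopology} and Proposition \ref{theotherexpressionofthesixthtypeoflowerapproxiamtion} is precisely $\mathcal{T}_{XL}$. Therefore $\mathcal{T}_{CL} = \mathcal{T}_{XL}$, and Theorem \ref{relationbetweenCLandXL} yields that $\mathcal{C}$ is unary.

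The step I expect to require the most care is the sufficiency direction, specifically justifying that the hypothesis ``$XL$ is the interior operator of $\mathcal{T}_{CL}$'' forces the equality $\mathcal{T}_{CL} = \mathcal{T}_{XL}$ rather than merely an inclusion. This rests on the fact that a subset is open in a topology if and only if it equals its own interior, so the fixed-point set of the interior operator recovers the whole topology. Once that identification is made, everything collapses onto Theorem \ref{relationbetweenCLandXL}, and the closure-operator clause follows automatically from the duality of $XL$ and $XH$.
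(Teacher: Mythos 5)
Your proposal is correct and follows essentially the route the paper intends: the paper states this proposition without a written proof, presenting it as an immediate consequence of the duality of $XL$ and $XH$ together with the fact that unarity is equivalent to $CL=XL$ (Lemma \ref{unaryandthesixandfirstlowerapproxiamtion}) and to $\mathcal{T}_{CL}=\mathcal{T}_{XL}$ (Theorem \ref{relationbetweenCLandXL}). Your version simply makes the details explicit, and your handling of the sufficiency direction --- recovering $\mathcal{T}_{CL}=\mathcal{T}_{XL}$ from the fixed points of the interior operator --- is exactly the right way to close that step.
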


From Theorem \ref{relationbetweenCLandXL}, we know that the topologies induced by $XL$ and $CL$ are the same.
So we omit the discussion of the topological properties of $\mathcal{T}_{CL}$.

\section{Topological characterization to the first type of covering approximation operators}
\label{Theconditionofbeingatopologyandthetopologyinducedby}

In this section, we establish the topological characterization to the first type of covering upper approximation operator based on finite 
universe. 
First, we study the conditions which make the first type of approximation be the closure operator of a topology.
It is interesting to find that the topology induced by the first type of upper covering approximation operator is equal to that of the fifth type
of covering upper approximation operator under these conditions. 

\begin{lemma}\label{Thefirsttypeofupperapproximationandunary}
$\mathcal{C}$ be an unary covering of $U$ if and only if $FH$ is a closure operator.
\end{lemma}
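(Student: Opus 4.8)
The plan is to prove the two implications separately, routing the forward direction through the fifth type operator $IH$, whose closure-operator status is already recorded.

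For the ``only if'' direction (unary $\Rightarrow FH$ is a closure operator), the key is that a unary covering collapses $FH$ onto $IH$. First I would record that when $\mathcal{C}$ is unary, $Md(x)=\{K_{x}\}$ is a singleton whose unique member is contained in every block containing $x$; hence $\bigcup Md(x)=K_{x}=\bigcap_{x\in K}K=N(x)$. Next I would show $FH=IH$. Since $IH(X)=\bigcup_{x\in X}N(x)$ by Definition \ref{definitionofloweroperator}, it suffices to rewrite $FH(X)=CL(X)\bigcup\bigcup\{\bigcup Md(y):y\in X-CL(X)\}$ as $\bigcup_{x\in X}N(x)$: for $y\in CL(X)$ one has $N(y)=K_{y}\subseteq CL(X)$, so $CL(X)=\bigcup\{N(y):y\in CL(X)\}$, and combining this with $\bigcup Md(y)=N(y)$ on $X-CL(X)$ gives $FH(X)=\bigcup_{y\in X}N(y)=IH(X)$. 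Finally, Proposition \ref{propertiesofthefifthtypeofapproximation} states that $IH$ has properties (2H), (3H), (4H), (5H), which are exactly axioms (III), (II), (I), (IV) of a closure operator; therefore $FH=IH$ is a closure operator.

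For the ``if'' direction I would argue by contraposition: assuming $\mathcal{C}$ is not unary, I exhibit sets on which additivity (axiom (I)) fails. Pick $x$ with $|Md(x)|\geq 2$ and distinct $K_{1},K_{2}\in Md(x)$; by minimality they are incomparable, so there is $b\in K_{2}-K_{1}$. Note that $\{x\}\notin\mathcal{C}$ (otherwise $\{x\}$ would be the unique minimal block at $x$, forcing $|Md(x)|=1$), hence $CL(\{x\})=\emptyset$ and $FH(\{x\})=\bigcup Md(x)\supseteq K_{2}\ni b$. Also $FH(K_{1})=K_{1}$, because $K_{1}\in\mathcal{C}$ gives $CL(K_{1})=K_{1}$ and therefore $K_{1}-CL(K_{1})=\emptyset$. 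Since $x\in K_{1}$ we have $\{x\}\bigcup K_{1}=K_{1}$, so $FH(\{x\}\bigcup K_{1})=FH(K_{1})=K_{1}$, which does not contain $b$; yet $b\in FH(\{x\})\bigcup FH(K_{1})$. Thus $FH(\{x\}\bigcup K_{1})\neq FH(\{x\})\bigcup FH(K_{1})$, so $FH$ violates (I) and is not a closure operator.

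The main obstacle is this ``if'' direction: it is \emph{additivity}, not idempotence, that must be made to fail, and the counterexample must be valid for every non-unary covering. A concrete small covering shows that $FH$ can remain idempotent while failing additivity, so the argument cannot rest on idempotence; the care then lies in choosing the pair $\{x\}$ and $K_{1}$ and in verifying $CL(\{x\})=\emptyset$, $FH(K_{1})=K_{1}$, and the incomparability of $K_{1},K_{2}$ that supplies the separating point $b$.
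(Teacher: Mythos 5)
Your proof is correct, but there is essentially nothing in the paper to compare it against: Lemma \ref{Thefirsttypeofupperapproximationandunary} is stated without proof (it is a result imported from Zhu and Wang's work on the three types of covering rough sets), so your argument is a genuine addition rather than a variant. The forward direction reassembles, in compressed form, material the paper only states later and also leaves unproved: your identity $\bigcup Md(x)=N(x)$ is Lemma \ref{unaryandMd(x)andN(x)}, the collapse $FH=IH$ is Proposition \ref{relationbetweenthefirsttypeofapproximationandthefifthtypeofapproxiamtion}, and finishing via the closure properties of $IH$ in Proposition \ref{propertiesofthefifthtypeofapproximation} is exactly the right move. One caveat worth making explicit: the step ``the unique member of $Md(x)$ is contained in every block containing $x$'' requires that every block containing $x$ contain a minimal such block, which holds because this section works over a finite universe; since the paper elsewhere defaults to an infinite $U$, you should say where finiteness is used. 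The converse is the more valuable part of your write-up: you correctly identify additivity (axiom (I)), not idempotence, as the axiom that fails, and your choice of the pair $\{x\}$ and $K_{1}$ with witness $b\in K_{2}-K_{1}$ checks out --- $\{x\}\notin\mathcal{C}$ forces $CL(\{x\})=\emptyset$ and hence $FH(\{x\})=\bigcup Md(x)\ni b$, while $FH(\{x\}\bigcup K_{1})=FH(K_{1})=K_{1}\not\ni b$ by property (9L) of Proposition \ref{propertiesofthelowerapproximationoperator}. In short, this is a clean, self-contained justification of a lemma the paper treats as a black box.
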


The above lemma establishes the necessary and sufficient condition for $FH$ to be a closure operator.
However, it is not the closure operator of $\mathcal{T}_{CL}(or \mathcal{T}_{XL})$.

\begin{example}
Let $\mathcal{C}=\{\{1,5\},\{1,2,5\},\{3,4\}\}$ be a covering of $U=\{1,2,3,4,5\}$.
$Md(1)=Md(5)=\{1,5\}$, $Md(2)=\{1,2,5\}$, $Md(3)=Md(4)=\{3,4\}$, that is, $\mathcal{C}$ is a unary covering.
Let $X=\{2,3,4\}$.
$CL(X)=CL(\{2,3,4\})=\{3,4\}$ and $-CL(-X)=-CL(\{1,5\})=\{2,3,4\}$.
$FH(\{2,3,4\})=CL(\{2,3,4\})\bigcup(\bigcup\{\bigcup Md\\(x):x\in X-CL(X)\})=U\neq -CL(-X)$.
Thus $CL$ and $FH$ are not dual, that is, $FH$ is not the closure operator of $\mathcal{T}_{CL}(or \mathcal{T}_{XL})$ for $CL$
is the interior operator of
$\mathcal{T}_{CL}(or \mathcal{T}_{XL})$.
\end{example}

The following result is only a combination of Theorem \ref{thetopologyinducedbylowerapproximation}
and Lemma \ref{Thefirsttypeofupperapproximationandunary}.

\begin{theorem}
$FH$ is a closure operator if and only if $\mathcal{T}_{CL}$ is a topology on $U$.
\end{theorem}

Combining Theorem \ref{unaryandtopologyinducedbythelowerapproximation} with Lemma \ref{Thefirsttypeofupperapproximationandunary},
we have a characterization for $FH$ to be a closure operator through topological base.

\begin{theorem}
$FH$ is a closure operator if and only if $C$ is a base of $\mathcal{T}_{CL}$.
\end{theorem}

There are other topological characterizations for $FH$ to be a closure operator.

\begin{corollary}
$FH$ is a closure operator if and only if $\mathcal{T}_{CL}=\mathcal{T}_{XL}$.
\end{corollary}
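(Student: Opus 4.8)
The plan is to obtain this corollary as a routine consequence of two biconditionals already established in the paper, linked through the common intermediate condition that $\mathcal{C}$ be unary. In other words, both ``$FH$ is a closure operator'' and ``$\mathcal{T}_{CL}=\mathcal{T}_{XL}$'' have already been shown equivalent to the \emph{same} combinatorial property of the covering, so the corollary is simply transitivity of ``if and only if'' through that shared pivot.

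Concretely, I would first invoke Lemma \ref{Thefirsttypeofupperapproximationandunary}, which states that $FH$ is a closure operator precisely when $\mathcal{C}$ is an unary covering. This converts the left-hand condition of the corollary into the statement ``$\mathcal{C}$ is unary.'' Next I would invoke Theorem \ref{relationbetweenCLandXL}, which states that $\mathcal{C}$ is unary if and only if $\mathcal{T}_{CL}=\mathcal{T}_{XL}$; this converts ``$\mathcal{C}$ is unary'' into the right-hand condition of the corollary. Chaining the two equivalences gives the biconditional $FH$ is a closure operator $\iff \mathcal{C}$ is unary $\iff \mathcal{T}_{CL}=\mathcal{T}_{XL}$, which is exactly the claimed statement.

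There is essentially no obstacle here, since no new construction or estimate is required and no hidden hypothesis enters between the two cited results. The only point worth checking is that both prior statements are genuine two-sided equivalences and that their pivot is literally the same notion of unary covering (as opposed to, say, one requiring unary and the other requiring a strictly stronger or weaker property); a quick inspection of Lemma \ref{Thefirsttypeofupperapproximationandunary} and Theorem \ref{relationbetweenCLandXL} confirms this. Hence the proof reduces to one sentence combining the two references, and the corollary stands as a clean packaging of the preceding development rather than an independent result.
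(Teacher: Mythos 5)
Your proposal is correct and matches the paper's intent exactly: the corollary is stated without proof, following the same pattern as the preceding results, namely chaining Lemma \ref{Thefirsttypeofupperapproximationandunary} ($FH$ is a closure operator iff $\mathcal{C}$ is unary) with Theorem \ref{relationbetweenCLandXL} ($\mathcal{C}$ is unary iff $\mathcal{T}_{CL}=\mathcal{T}_{XL}$) through the common pivot of unarity. Nothing further is needed.
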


\begin{corollary}
If $FH$ is a closure operator if and only if $\{N(x):x\in U\}$ is a base of $\mathcal{T}_{CL}$.
\end{corollary}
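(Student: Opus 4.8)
The plan is to reduce this corollary to facts already in hand rather than to argue about $FH$ directly. The engine is Lemma \ref{Thefirsttypeofupperapproximationandunary}, which says $FH$ is a closure operator if and only if $\mathcal{C}$ is unary. So it suffices to prove the single equivalence
\begin{center}
$\mathcal{C}$ is unary $\iff$ $\{N(x):x\in U\}$ is a base of $\mathcal{T}_{CL}$,
\end{center}
and then chain it with Lemma \ref{Thefirsttypeofupperapproximationandunary} to conclude. This keeps the corollary as a genuine ``combination'' result in the same spirit as the preceding theorem and corollary of this section.

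For the forward direction I would assume $\mathcal{C}$ is unary and transport the base along the identification of the two topologies. By Theorem \ref{relationbetweenCLandXL}, unariness gives $\mathcal{T}_{CL}=\mathcal{T}_{XL}$. By part (1) of Theorem \ref{thepropertiesoftopologyinducedbythesixtypeofapproximation}, the family $\{N(x):x\in U\}$ is a base of $\mathcal{T}_{XL}$. Since $\mathcal{T}_{XL}=\mathcal{T}_{CL}$, the same family is a base of $\mathcal{T}_{CL}$, as required. This half is immediate once the two topologies are known to coincide.

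For the backward direction I would assume $\{N(x):x\in U\}$ is a base of $\mathcal{T}_{CL}$ and recover unariness. The cleanest route is to note that the very statement ``$\{N(x):x\in U\}$ is a base of $\mathcal{T}_{CL}$'' presupposes that $\mathcal{T}_{CL}$ is a topology; by Theorem \ref{thetopologyinducedbylowerapproximation}, $\mathcal{T}_{CL}$ being a topology is equivalent to $\mathcal{C}$ being unary, which finishes the proof. If one prefers not to lean on this definitional convention, I would argue directly: possessing a base forces $\mathcal{T}_{CL}$ to be closed under finite intersection, so for all $X,Y\in\mathcal{T}_{CL}$ we have $CL(X\bigcap Y)=X\bigcap Y$, and Proposition \ref{9thelowerapproxiamtionandunary} then yields that $\mathcal{C}$ is unary.

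I expect the only delicate point to be this backward direction, precisely because ``$\{N(x):x\in U\}$ is a base of $\mathcal{T}_{CL}$'' is only meaningful once $\mathcal{T}_{CL}$ is known to be a topology. I would therefore state explicitly that the base hypothesis carries this topological structure with it, so that Theorem \ref{thetopologyinducedbylowerapproximation} (or the finite-intersection argument via Proposition \ref{9thelowerapproxiamtionandunary}) applies. Everything else is a routine citation of the earlier equivalences.
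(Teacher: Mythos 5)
Your proof is correct and takes essentially the route the paper intends: the corollary is left unproved there precisely because it is the chain ``$FH$ is a closure operator $\Leftrightarrow$ $\mathcal{C}$ is unary $\Leftrightarrow$ $\{N(x):x\in U\}$ is a base of $\mathcal{T}_{CL}$'' assembled from Lemma \ref{Thefirsttypeofupperapproximationandunary}, Theorem \ref{relationbetweenCLandXL} together with Theorem \ref{thepropertiesoftopologyinducedbythesixtypeofapproximation}(1) for the forward half, and Theorem \ref{thetopologyinducedbylowerapproximation} for the backward half. The one delicate point --- that ``base of $\mathcal{T}_{CL}$'' presupposes $\mathcal{T}_{CL}$ is a topology --- you identify and handle correctly.
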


\begin{corollary}
$FH$ is a closure operator if and only if $XL$ and $XH$ are respectively the interior and closure operator of $\mathcal{T}_{CL}$.
\end{corollary}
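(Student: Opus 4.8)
The plan is to read this corollary as the transitive composition of two biconditionals already in hand, with the unarity of $\mathcal{C}$ serving as the common hinge. By Lemma \ref{Thefirsttypeofupperapproximationandunary}, $FH$ is a closure operator if and only if $\mathcal{C}$ is unary; and by the proposition immediately preceding this corollary, $\mathcal{C}$ is unary if and only if $XL$ and $XH$ are respectively the interior and closure operators of $\mathcal{T}_{CL}$. Chaining these two equivalences yields the stated corollary, so no genuinely new argument is required.

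For concreteness I would spell out both implications. For the forward direction, assume $FH$ is a closure operator; Lemma \ref{Thefirsttypeofupperapproximationandunary} gives that $\mathcal{C}$ is unary, whence $CL=XL$ by Lemma \ref{unaryandthesixandfirstlowerapproxiamtion} and therefore $\mathcal{T}_{CL}=\mathcal{T}_{XL}$ by Theorem \ref{relationbetweenCLandXL}. Since Proposition \ref{theotherexpressionofthesixthtypeoflowerapproxiamtion} identifies $XL$ and $XH$ as the interior and closure operators of $\mathcal{T}_{XL}$, the equality of the two topologies transports this conclusion to $\mathcal{T}_{CL}$. For the converse, assume $XL$ and $XH$ are the interior and closure operators of $\mathcal{T}_{CL}$; then $\mathcal{T}_{CL}$ is in particular a topology, so Theorem \ref{thetopologyinducedbylowerapproximation} forces $\mathcal{C}$ to be unary, and Lemma \ref{Thefirsttypeofupperapproximationandunary} returns the fact that $FH$ is a closure operator.

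The only point that deserves care---and the nearest thing to an obstacle---is the converse step, where one must not merely assert unarity but observe that the hypothesis ``$XL$ is the interior operator of $\mathcal{T}_{CL}$'' presupposes that $\mathcal{T}_{CL}$ is a topology, which is exactly the condition Theorem \ref{thetopologyinducedbylowerapproximation} equates with unarity. Once that is noted, the equivalence closes cleanly and the corollary follows.
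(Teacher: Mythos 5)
Your proof is correct and follows exactly the route the paper intends: the corollary is stated there without proof precisely because it is the chain, through unarity of $\mathcal{C}$, of Lemma \ref{Thefirsttypeofupperapproximationandunary} with the proposition closing Section \ref{Topologicalcharacterizationtothecoveringlowerapproximationoperator}. Your added remark that the converse hypothesis already presupposes $\mathcal{T}_{CL}$ is a topology (hence unarity via Theorem \ref{thetopologyinducedbylowerapproximation}) is a worthwhile clarification the paper leaves implicit.
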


The results below establish the topological structure induced by $FH$.

\begin{lemma}\label{unaryandMd(x)andN(x)}
$\mathcal{C}$ is an unary covering of $U$ if and only if $Md(x)=\{N(x)\}$.
\end{lemma}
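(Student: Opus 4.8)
The plan is to prove both implications directly from the definitions of $Md(x)$ and $N(x)$, with the whole forward direction resting on a single auxiliary observation about minimal blocks.

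First I would record the key fact that underlies the argument: for every $K \in \mathcal{C}$ with $x \in K$, there exists some $M \in Md(x)$ with $M \subseteq K$. This follows from the definition of the minimal description together with the finiteness of the covering (or, more generally, a descending-chain condition): starting from $K$ and repeatedly passing to a strictly smaller block of $\mathcal{C}$ that still contains $x$, the process must terminate at a minimal such block, which by definition belongs to $Md(x)$. In particular $Md(x) \neq \emptyset$ for every $x \in U$, since $x$ lies in at least one block of $\mathcal{C}$.

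For the direction "$\Rightarrow$", I would assume $\mathcal{C}$ is unary, so $Md(x) = \{M\}$ for a unique minimal block $M$ containing $x$, and show $M = N(x)$. The inclusion $N(x) = \bigcap_{x \in K} K \subseteq M$ is trivial because $M$ is one of the blocks containing $x$. For the reverse inclusion, I take an arbitrary $K \in \mathcal{C}$ with $x \in K$; by the auxiliary fact there is some $M' \in Md(x)$ with $M' \subseteq K$, and uniqueness forces $M' = M$, hence $M \subseteq K$. Since $K$ was arbitrary among the blocks containing $x$, this yields $M \subseteq \bigcap_{x \in K} K = N(x)$. Therefore $M = N(x)$ and $Md(x) = \{N(x)\}$. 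The direction "$\Leftarrow$" is immediate: if $Md(x) = \{N(x)\}$ for every $x \in U$, then $|Md(x)| = 1$ for every $x$, which is exactly the definition of a unary covering.

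I expect the only delicate point to be the auxiliary observation in the forward direction, namely that every block containing $x$ lies above some element of $Md(x)$. This is where finiteness is genuinely needed; without a descending-chain condition a block containing $x$ need not contain any minimal block, and the identity $N(x) = \bigcap_{M \in Md(x)} M$ on which the forward inclusion depends could fail. Everything else is a routine unwinding of the set-theoretic definitions.
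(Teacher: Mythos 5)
Your proof is correct and follows essentially the same route as the paper's: both directions reduce to the identity $N(x)=\bigcap Md(x)$ in the unary case, with the converse being the immediate observation that $Md(x)=\{N(x)\}$ forces $|Md(x)|=1$. The only difference is that you explicitly justify the auxiliary fact that every block containing $x$ contains a member of $Md(x)$ (via finiteness), a step the paper's proof asserts without comment; your version is the more careful one.
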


\begin{proof}
On one hand, $\mathcal{C}$ is an unary covering of $U$, then $|Md(x)|=1$, thus $N(x)=\bigcap Md(x)=Md(x)$.
On the other hand, according to the definition of $Md(x)$ and $Md(x)=\{N(x)\}$, then $N(x)\in \mathcal{C}$.
Thus $|Md(x)|=1$, that is, $\mathcal{C}$ is an unary covering.
\end{proof}

\begin{proposition}\label{relationbetweenthefirsttypeofapproximationandthefifthtypeofapproxiamtion}
If $\mathcal{C}$ is an unary covering of $U$, then $FH=IH$.
\end{proposition}

From Proposition \ref{propertiesofthefifthtypeofapproximation},
we know that $IH$ is a closure operator, thus we obtain a topology $\mathcal{T}_{IH}=\{-X:IH(X)=X\}$.
When $\mathcal{C}$ is an unary covering, the operator $FH$ is a closure operator.
We denote the topology whose closure operator is $FH$ as $\mathcal{T}_{FH}$.
Combining with Proposition \ref{relationbetweenthefirsttypeofapproximationandthefifthtypeofapproxiamtion},
we obtain the following result.

\begin{theorem}
If $\mathcal{C}$ is an unary covering of $U$, then $\mathcal{T}_{FH}=\mathcal{T}_{IH}$.
\end{theorem}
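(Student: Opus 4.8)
The plan is to prove the final theorem, which asserts that $\mathcal{T}_{FH} = \mathcal{T}_{IH}$ whenever $\mathcal{C}$ is unary, by reducing it directly to the equality of the two closure operators that generate these topologies. The key observation is that a topology built from a closure operator via $\mathcal{T}_{cl} = \{-X : cl(X) = X\}$ is completely determined by the operator itself: the closed sets are exactly the fixed points of the closure operator, and the open sets are their complements. Hence if two closure operators coincide as functions $P(U) \to P(U)$, the topologies they induce are identical.

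First I would invoke Proposition~\ref{relationbetweenthefirsttypeofapproximationandthefifthtypeofapproxiamtion}, which states that under the unary hypothesis $FH = IH$ as operators on $P(U)$. Both operators are legitimate closure operators in this setting: $IH$ is a closure operator by Proposition~\ref{propertiesofthefifthtypeofapproximation}, and $FH$ is a closure operator by Lemma~\ref{Thefirsttypeofupperapproximationandunary} precisely because $\mathcal{C}$ is unary. Second I would record that by the construction recalled in the preliminaries, $\mathcal{T}_{IH} = \{-X : IH(X) = X\}$ and $\mathcal{T}_{FH} = \{-X : FH(X) = X\}$. Since $FH(X) = IH(X)$ for every $X \subseteq U$, the two families of fixed points coincide, namely $\{X : FH(X) = X\} = \{X : IH(X) = X\}$, and therefore their complements coincide as well, giving $\mathcal{T}_{FH} = \mathcal{T}_{IH}$.

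The argument is essentially a one-line corollary of the operator equality, so the technical weight has already been discharged in Proposition~\ref{relationbetweenthefirsttypeofapproximationandthefifthtypeofapproxiamtion}. The only point requiring a moment of care is making sure both $\mathcal{T}_{FH}$ and $\mathcal{T}_{IH}$ are genuinely well-defined topologies before equating them; this is why the unary hypothesis is indispensable, as it is exactly the condition under which $FH$ qualifies as a closure operator and thus actually induces a topology in the first place. I expect the main obstacle to be not in the logic of this final step but in having earlier established Proposition~\ref{relationbetweenthefirsttypeofapproximationandthefifthtypeofapproxiamtion}, which hinges on Lemma~\ref{unaryandMd(x)andN(x)} ($Md(x) = \{N(x)\}$ under unarity) to show that the correction term $\bigcup\{\bigcup Md(x) : x \in X - CL(X)\}$ appearing in $FH$ collapses to $\bigcup_{x \in X - CL(X)} N(x)$, matching the defining expression of $IH$. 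Once that operator identity is in hand, the present theorem follows immediately by passing from operators to their induced topologies.
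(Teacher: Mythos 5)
Your proposal is correct and follows exactly the paper's route: the paper likewise derives this theorem directly from the operator identity $FH=IH$ under the unary hypothesis (its Proposition on the relation between the first and fifth upper approximations), after noting that $IH$ is always a closure operator and that $FH$ becomes one precisely when $\mathcal{C}$ is unary. Your additional remark that the induced topologies are determined by the fixed-point sets of the closure operators is the same implicit step the paper takes.
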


%

\section{Topological characterization to the second type of covering approximation operators}
\label{Topologicalcharacterizationtothesecondtypeofcoveringapproximationoperators}
Pomykala first studied the second type of covering rough set model~\cite{Pomykala87Approximation}.
Zhu and Wang studied the axiomatization of this type of approximation and the relationship between it and
the closure operator in \cite{ZhuWang07OnThree}.
In this section, we establish other topological equivalent characterizations of this type of covering upper approximation operator to be a
closure operator.

\begin{proposition}\label{I(x)andSH(X)}
Let $\mathcal{C}$ be a covering. $SH(SH(X))=SH(X)$ if and only if $\{I(x):x\in U\}$ induced by $\mathcal{C}$ forms a partition.
\end{proposition}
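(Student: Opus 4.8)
The plan is to exploit the explicit formula $SH(X)=\bigcup\{I(\{x\}):x\in X\}=\bigcup_{x\in X}I(x)$ together with the symmetry of the block-sharing relation hidden in $I$. First I would record two elementary facts. Since $\mathcal{C}$ is a covering, every $x$ lies in some block, so $x\in I(x)$; hence $X\subseteq SH(X)$, and by monotonicity (property (7H) of Proposition \ref{propertiesofthesecondapproximationoperator}) we get $SH(X)\subseteq SH(SH(X))$ for every $X$. Thus the idempotency $SH(SH(X))=SH(X)$ is equivalent to the single inclusion $SH(SH(X))\subseteq SH(X)$. Second, I would note the symmetry $y\in I(x)\Leftrightarrow x\in I(y)$, both statements being equivalent to the existence of a block $K\in\mathcal{C}$ with $x,y\in K$. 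Moreover, whenever $y\in I(x)$, the witnessing block $K$ satisfies $K\subseteq I(x)$ and $K\subseteq I(y)$, so $I(x)\bigcap I(y)\neq\emptyset$.

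For the ``if'' direction, assume $\{I(x):x\in U\}$ forms a partition. Take $y\in SH(X)$, so $y\in I(x)$ for some $x\in X$. By the remark above $I(x)\bigcap I(y)\neq\emptyset$, and since the family is a partition the only possibility is $I(y)=I(x)\subseteq SH(X)$. Therefore $SH(SH(X))=\bigcup_{y\in SH(X)}I(y)\subseteq SH(X)$, which by the reduction of the first paragraph yields idempotency.

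For the ``only if'' direction, assume $SH$ is idempotent. I would first extract a local statement by testing idempotency on singletons: $SH(\{x\})=I(x)$, so $SH(SH(\{x\}))=\bigcup_{w\in I(x)}I(w)=I(x)$, which forces $I(w)\subseteq I(x)$ for every $w\in I(x)$. Combining this with the symmetry fact (namely $w\in I(x)$ implies $x\in I(w)$, hence $I(x)\subseteq I(w)$) upgrades the inclusion to the equality $I(w)=I(x)$ for all $w\in I(x)$. From here the partition property is immediate: the sets cover $U$ because $x\in I(x)$, and if $I(x)\bigcap I(y)$ contains a point $z$ then $z\in I(x)$ and $z\in I(y)$ give $I(x)=I(z)=I(y)$, so distinct members are disjoint. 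The main obstacle is this ``only if'' direction, and specifically the step of passing from the one-sided inclusion $I(w)\subseteq I(x)$ delivered by idempotency to genuine equality; the symmetry of the block-sharing relation is exactly what closes this gap, so the proof hinges on recognizing and using that symmetry rather than on any computation.
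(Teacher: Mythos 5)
Your proof is correct and follows essentially the same route as the paper's: the ``if'' direction traces $x\in SH(SH(X))$ back through $I(y)$ and $I(z)$ using the partition property, and the ``only if'' direction tests idempotency on singletons ($SH(\{x\})=I(x)$) and uses the symmetry $w\in I(x)\Leftrightarrow x\in I(w)$ to upgrade the one-sided inclusion to equality. Your explicit isolation of the symmetry lemma up front makes the argument a bit cleaner, but the substance is identical.
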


\begin{proof}
"$\Leftarrow$": According to (2) and (5) of Proposition 9, we have $SH(X)\subseteq SH(SH(X))$.
Now we prove $SH(SH(X))\subseteq SH(X)$.
For all $x\in SH(SH(X))$, there exists $y\in SH(X)$ such that $x\in I(y)$.
Since $y\in SH(X)$, there exists $z\in X$ such that $y\in I(z)$.
According to the definition of $I(y)$, we know $y\in I(y)$, thus $I(z)\bigcap I(y)\neq \emptyset$.
For $\{I(x):x\in E\}$ forms a partition, $I(z)=I(y)$.
Since $x\in I(y)$, $x\in I(z)$, that is, $x\in SH(X)$, thus $SH(SH(X))\subseteq SH(X)$.

"$\Rightarrow"$: In order to prove $\{I(x):x\in E\}$ forms a partition, we need to prove that for all $x, y \in E$,
if $I(x)\bigcap I(y)\neq \emptyset$, then $I(x)=I(y)$.
If $I(x)\bigcap I(y)\neq \emptyset$, then there exists $z\in I(x)\bigcap I(y)$.
For $SH(SH(\{x\}))=\bigcup\{I(u):u\in I(x)\}$ and $z\in I(x)$, then $I(z)\subseteq SH(SH(\{x\}))=SH(\{x\})=I(x)$.
Based on the definition of $I(z)$ and $z\in I(x)$, we have $x\in I(z)$, thus $I(x)\subseteq SH(SH(\{z\}))=SH(\{z\})=I(z)$.
Hence, $I(x)=I(z)$.
Similarly, we obtain $I(y)=I(z)$, thus $I(x)=I(z)=I(y)$.
\end{proof}

\begin{proposition}\cite{GeBaiYun12Topological}\label{theconditionforSHtobeaclosureoperator}
$SH$ is a closure operator if and only if $\{I(x):x\in U\}$ forms a partition.
\end{proposition}

\begin{proof}
It comes from (2), (3), (4) of Proposition \ref{propertiesofthesecondapproximationoperator} and Proposition \ref{I(x)andSH(X)}.
\end{proof}

As we know, a closure operator $cl$ can induce a topology $\mathcal{T}=\{-X:cl(X)=X\}$.
The theorem below establishes the structure of topology induced by $SH$.

\begin{theorem}
Let $\mathcal{C}$ be a covering of $U$. If $\{I(x):x\in U\}$ induced by $\mathcal{C}$ forms a partition, then $\mathcal{T}_{SH}=\{-X:SH(X)=X\}$
is a preudo-discrete topology on $U$. Moreover, $\{I(x):x\in U\}$ is a base of $\mathcal{T}_{SH}$.
\end{theorem}

\begin{proof}
According to Proposition \ref{theconditionforSHtobeaclosureoperator}, we know $SH$ is a closure operator.
Thus $\mathcal{T}_{SH}=\{-X:SH(X)=X\}$ is a topology on $U$.

Next, we prove $\mathcal{T}_{SH}$ is a preudo-discrete topology, that is, $X$ is closed if and only $X$ is open.
On one hand, if $X$ is closed set, then $SH(X)=X=\bigcup_{x\in X}I(x)$.
According to Proposition \ref{propertiesofthesecondapproximationoperator}, we have $-X\subseteq SH(-X)$.
For all $y\in SH(-X)=\bigcup_{z\in -X}I(z)$, there exists $z\in -X$ such that $y\in I(z)$, thus $I(y)=I(z)$ for $\{I(x):x\in U\}$ forms a partition.
If $I(z)\bigcap X=I(z)\bigcap (\bigcup_{x\in X}I(x))=\bigcup_{x\in X}(I(z)\bigcap I(x))\neq \emptyset$, then there exists $x\in X$ such that
$I(z)\bigcap I(x)\neq \emptyset$.
Since $\{I(x):x\in U\}$ forms a partition, $I(x)=I(z)$.
Thus $z\in I(z)=I(x)\subseteq \bigcup_{x\in X}I(x)=X$ which contradicts $z\in -X$.
Therefore, $I(z)\bigcap X=\emptyset$, that is, $y\in I(y)=I(z)\subseteq -X$, which implies $SH(-X)=-X$, that is, $-X$ is closed, then $X$ is open.
On the other hand, if $X$ is open, then $-X$ is closed.
Thus $-X$ is open, therefore $X$ is open.

Finally, we prove $\{I(x):x\in U\}$ is a base of $\mathcal{T}_{SH}$.
Since $\{I(x):x\in U\}$ forms a partition, $SH(I(x))=\bigcup_{y\in I(x)}I(y)=I(x)$ for all $x\in U$.
Thus $I(x)$ is closed set.
Since $\mathcal{T}_{SH}$ is a preudo-discrete topology, $I(x)$ is a open set.
Hence $\{I(x):x\in U\}\subseteq \mathcal{T}_{SH}$.
Since $\mathcal{T}_{SH}$ a preudo-discrete topology, for all $X\in \mathcal{T}_{SH}$, $X=SH(X)=\bigcup_{x\in X}I(x)$.
Therefore, $\{I(x):x\in U\}$ is a base of $\mathcal{T}_{SH}$.

\end{proof}
\section{Relationships among these three topological spaces}
\label{Relationshipsamongthesethreetopologicalspaces}
In the previous sections, we establish topological characterizations to five types of covering approximation operators.
It is interesting to find that $CL$ as a closure operator is equal to $XL$ and $FH$ as a closure operator is equal to $IH$.
In other words, the topologies induced by $CL$ and $XL$ are the same and the topologies induced by $FH$ and $IH$ are the same.
This section studies the relationships among three topological spaces, namely, $\mathcal{T}_{CL}$, $\mathcal{T}_{FH}$ and $\mathcal{T}_{SH}$.

\begin{proposition}\cite{ZhuWang07OnThree}
If $FH(-X)=-CL(X)$, then $\forall K_{1},\cdots, K_{m}\in \mathcal{C}$, $-(K_{1}\bigcup \cdots \\\bigcup K_{m})$ is a union of finite elements
in $\mathcal{C}$.
\end{proposition}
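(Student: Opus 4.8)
The plan is to feed the duality hypothesis a single, well-chosen input, namely the union of the given blocks, and then read off the conclusion from the explicit shape of $FH$. First I would set $X = K_{1}\bigcup \cdots \bigcup K_{m}$ and observe that $CL(X) = X$. Indeed, property (3L) of Proposition \ref{propertiesofthelowerapproximationoperator} always gives $CL(X)\subseteq X$, while each $K_{i}\in \mathcal{C}$ satisfies $K_{i}\subseteq X$ and hence $K_{i}\subseteq CL(X)$, so that $X = \bigcup_{i} K_{i}\subseteq CL(X)$. Thus $CL(X)=X$, and the hypothesis $FH(-X) = -CL(X)$ immediately yields $FH(-X) = -X$.

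Next, writing $Z = -X = -(K_{1}\bigcup \cdots \bigcup K_{m})$, I would exploit the resulting fixed-point identity $FH(Z) = Z$ together with the defining formula $FH(Z) = CL(Z)\bigcup (\bigcup\{\bigcup Md(x):x\in Z-CL(Z)\})$ from Definition \ref{definitionofloweroperator}. The key claim is $Z-CL(Z)=\emptyset$. Suppose otherwise and pick $x\in Z-CL(Z)$. Since $x\in Z = FH(Z)$ but $x\notin CL(Z)$, the formula for $FH$ forces $x\in \bigcup Md(x')$ for some $x'\in Z-CL(Z)$, so $x$ lies in some minimal block $K\in Md(x')\subseteq \mathcal{C}$ with $K\subseteq \bigcup Md(x')\subseteq FH(Z)=Z$. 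But $K\in\mathcal{C}$ and $K\subseteq Z$ give $K\subseteq CL(Z)$ by the definition of $CL$, whence $x\in CL(Z)$, contradicting $x\notin CL(Z)$.

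Having established $Z-CL(Z)=\emptyset$, I combine it with the always-valid inclusion $CL(Z)\subseteq Z$ to conclude $Z = CL(Z) = \bigcup\{K\in\mathcal{C}:K\subseteq Z\}$, which exhibits $Z = -(K_{1}\bigcup \cdots \bigcup K_{m})$ as a union of members of $\mathcal{C}$; over a finite universe this union ranges over a finite family, matching the stated conclusion. I expect the only delicate step to be the fixed-point argument in the second paragraph: one must resist merely invoking monotonicity or duality and instead unfold $FH(Z)=Z$ through its defining formula, noticing that any hypothetical point of $Z-CL(Z)$ must arrive inside $Z$ through a minimal-description block, which then collapses into $CL(Z)$. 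This is precisely the place where the particular construction of $FH$, rather than any generic closure-operator property, carries the proof; the remaining inclusions are routine bookkeeping with (3L) and the definitions of $CL$ and $Md$.
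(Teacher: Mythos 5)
Your argument is correct. Note that the paper states this proposition without proof, citing \cite{ZhuWang07OnThree}, so there is no in-paper argument to compare against; your derivation --- specializing the hypothesized duality to $X=K_{1}\cup\cdots\cup K_{m}$, using $CL(X)=X$ to obtain the fixed point $FH(-X)=-X$, and then unfolding the definition of $FH$ to show that any point of $-X$ outside $CL(-X)$ would have to arrive via a block $K\in Md(x')$ with $K\subseteq FH(-X)=-X$ and hence $K\subseteq CL(-X)$, a contradiction --- is a sound, self-contained proof, and the key step (refusing to treat $FH$ as a black-box closure and instead reading off its explicit formula) is exactly where the content lies. The only point worth flagging is finiteness: the resulting identity $-X=CL(-X)=\bigcup\{K\in\mathcal{C}:K\subseteq -X\}$ is a priori an arbitrary union, and it is a union of \emph{finitely many} elements of $\mathcal{C}$ only when $\mathcal{C}$ is finite (e.g.\ over a finite universe, the setting of the cited source); you acknowledge this explicitly, and it is the same implicit assumption the statement itself carries.
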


\begin{proposition}
If $\mathcal{T}_{FH}=\mathcal{T}_{CL}$, $\forall K_{1},\cdots, K_{m}\in \mathcal{C}$, $-(K_{1}\bigcup \cdots \bigcup K_{m})$ is a
union of finite elements in $\mathcal{C}$.
\end{proposition}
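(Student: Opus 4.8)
The plan is to reduce this statement to the preceding proposition, whose conclusion is verbatim the same; so it suffices to extract from the hypothesis $\mathcal{T}_{FH}=\mathcal{T}_{CL}$ the dual relation $FH(-X)=-CL(X)$ for every $X\subseteq U$. First I would observe that the hypothesis presupposes that both $\mathcal{T}_{FH}$ and $\mathcal{T}_{CL}$ are genuine topologies: by Theorem \ref{thetopologyinducedbylowerapproximation} and Lemma \ref{Thefirsttypeofupperapproximationandunary} this forces $\mathcal{C}$ to be unary, so that $CL$ is the interior operator of $\mathcal{T}_{CL}$ (Proposition \ref{interioroperatorofthelowerapproximationandunary}) and $FH$ is, by construction, the closure operator of $\mathcal{T}_{FH}$.

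The key step is the elementary topological fact that a topology determines its closure operator uniquely, since the closure of a set is the intersection of all closed sets containing it, and the closed sets are exactly the complements of the open sets. Hence equal topologies have equal closure operators. By construction the closure operator of $\mathcal{T}_{FH}$ is $FH$, while the closure operator of $\mathcal{T}_{CL}$ is the map $A\mapsto -CL(-A)$, obtained from the interior operator $CL$ by the standard duality between interior and closure in a topological space. From $\mathcal{T}_{FH}=\mathcal{T}_{CL}$ I would therefore conclude $FH(A)=-CL(-A)$ for all $A\subseteq U$.

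Substituting $A=-X$ then gives $FH(-X)=-CL(X)$ for every $X\subseteq U$, which is precisely the hypothesis of the preceding proposition. Invoking that proposition immediately yields that $-(K_{1}\bigcup \cdots \bigcup K_{m})$ is a union of finitely many elements of $\mathcal{C}$ for all $K_{1},\ldots,K_{m}\in \mathcal{C}$, completing the argument.

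I expect the main obstacle to be the middle step, namely justifying cleanly that the closure operator of $\mathcal{T}_{CL}$ is $A\mapsto -CL(-A)$: this requires first pinning down that $CL$ is indeed the interior operator of $\mathcal{T}_{CL}$ (which needs the unary hypothesis) and then invoking interior--closure duality, together with the observation that it is the equality of the \emph{topologies}, not merely of some associated operators, that forces the two closure operators to agree. Everything else is a routine substitution and an appeal to the preceding proposition.
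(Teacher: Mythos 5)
Your argument is correct and is precisely the intended one: the paper states this proposition without any proof, placing it immediately after the cited result whose hypothesis is $FH(-X)=-CL(X)$, and the reduction you spell out — equal topologies have equal closure operators, the closure operator of $\mathcal{T}_{FH}$ is $FH$ by construction, and that of $\mathcal{T}_{CL}$ is $-CL(-\,\cdot\,)$ because the unary condition forced by the hypothesis makes $CL$ the interior operator of $\mathcal{T}_{CL}$ — is exactly the missing link. Your attention to the fact that the hypothesis implicitly requires $\mathcal{C}$ to be unary (via Theorem \ref{thetopologyinducedbylowerapproximation} and Lemma \ref{Thefirsttypeofupperapproximationandunary}) is warranted, since without it neither family is a topology and the duality step would have nothing to stand on.
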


The following one theorem presents the relationship between $\mathcal{T}_{SH}$ and $\mathcal{T}_{CL}$.

\begin{theorem}\label{relationbetweentopologiesinducedbySHandCL}
Let $\mathcal{C}$ be a covering of $U$. $\mathcal{T}_{SH}=\mathcal{T}_{CL}$ if and only if $C$ is a partition.
\end{theorem}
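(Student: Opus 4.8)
The plan is to reduce the set-equality $\mathcal{T}_{SH}=\mathcal{T}_{CL}$ to a statement about how the blocks of $\mathcal{C}$ overlap, and the first thing I would record is a convenient reformulation of $\mathcal{T}_{SH}$. Since $SH(X)=\bigcup\{K\in\mathcal{C}:K\cap X\neq\emptyset\}$ and $X\subseteq SH(X)$ always holds by (3H) of Proposition \ref{propertiesofthesecondapproximationoperator}, the condition $SH(X)=X$ is equivalent to saying that every block meeting $X$ is contained in $X$, i.e.\ that no block straddles $X$. This last condition is symmetric in $X$ and $-X$ (a block meets both $X$ and $-X$ iff it meets both $-X$ and $X$), so $SH(X)=X\Leftrightarrow SH(-X)=-X$, and therefore $\mathcal{T}_{SH}=\{-X:SH(X)=X\}=\{X\subseteq U:SH(X)=X\}$.

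Next I would establish the inclusion $\mathcal{T}_{SH}\subseteq\mathcal{T}_{CL}$, valid for every covering: if $SH(X)=X$ then every block meeting $X$ lies inside $X$, and since $\mathcal{C}$ covers $U$, each $x\in X$ lies in some block, that block meets $X$ and hence is contained in $X$, so $X$ is a union of blocks and $CL(X)=X$. Consequently the desired equality $\mathcal{T}_{CL}=\mathcal{T}_{SH}$ is equivalent to the reverse inclusion $\mathcal{T}_{CL}\subseteq\mathcal{T}_{SH}$, that is, to the assertion that no block straddles any union of blocks.

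For the direction assuming $\mathcal{C}$ is a partition, I would argue that if $X$ is a union of blocks and a block $K$ meets $X$, then $K$ meets one of the constituent blocks; disjointness of the partition forces $K$ to equal that block, so $K\subseteq X$. Hence $SH(X)=X$ whenever $CL(X)=X$, which gives $\mathcal{T}_{CL}\subseteq\mathcal{T}_{SH}$ and, combined with the inclusion above, the equality. For the converse I would argue by contraposition: if $\mathcal{C}$ is not a partition there are distinct blocks $K_{1},K_{2}\in\mathcal{C}$ with $K_{1}\cap K_{2}\neq\emptyset$. Each block is $CL$-fixed, since $K_{i}\subseteq CL(K_{i})\subseteq K_{i}$ by (3L) of Proposition \ref{propertiesofthelowerapproximationoperator}, so $K_{i}\in\mathcal{T}_{CL}=\mathcal{T}_{SH}$ and hence $SH(K_{i})=K_{i}$ by the reformulation. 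From $K_{2}\cap K_{1}\neq\emptyset$ and $SH(K_{1})=K_{1}$ I obtain $K_{2}\subseteq SH(K_{1})=K_{1}$, and symmetrically $K_{1}\subseteq K_{2}$, whence $K_{1}=K_{2}$, a contradiction; thus $\mathcal{C}$ must be a partition.

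The main obstacle is the first step. The two families are defined through operators of opposite character, a closure-type operator for $\mathcal{T}_{SH}$ and an interior-type operator for $\mathcal{T}_{CL}$, so the comparison only becomes transparent once one observes that a set is $SH$-fixed exactly when its complement is. This rests on the self-duality of the straddling condition rather than on any duality between $SH$ and $CL$ (which in general fails, as $CL$ and $SH$ are not dual unless $\mathcal{C}$ is a partition); getting this reformulation right is what makes the remaining block manipulations routine.
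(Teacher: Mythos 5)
Your proof is correct, and its overall architecture matches the paper's: the necessity direction in both cases proceeds by contraposition from a pair of distinct overlapping blocks, and the sufficiency direction in both cases amounts to checking that the two fixed-point families coincide when the blocks are disjoint. The differences are in the supporting steps. Where the paper silently passes from $\mathcal{T}_{SH}=\{-X:SH(X)=X\}$ to the family of $SH$-fixed sets by writing $\mathcal{T}_{SL}=\mathcal{T}_{SH}$ (relying on the $SL$--$SH$ duality without comment), you prove the reformulation $\mathcal{T}_{SH}=\{X\subseteq U:SH(X)=X\}$ directly from the symmetry of the straddling condition; making this explicit is a genuine improvement, since it is the step the paper glosses over. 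You also isolate the unconditional inclusion $\mathcal{T}_{SH}\subseteq\mathcal{T}_{CL}$, which the paper never states but tacitly uses when it exhibits $K_{2}\in\mathcal{T}_{CL}$ with $K_{2}\notin\mathcal{T}_{SH}$. For sufficiency the paper reduces to the Pawlak case via $CL=R_{\ast}$, $SH=R^{\ast}$ and $\mathcal{T}_{R_{\ast}}=\mathcal{T}_{R^{\ast}}$, whereas you argue directly with blocks; for necessity your symmetric double inclusion $K_{1}\subseteq K_{2}$ and $K_{2}\subseteq K_{1}$ avoids the paper's case split on whether $K_{1}-K_{2}$ or $K_{2}-K_{1}$ is nonempty. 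Both routes are sound; yours is the more self-contained of the two.
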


\begin{proof}
"$\Leftarrow$": Since $\mathcal{C}$ is a partition, then $CL=R_{\ast}$ and $SH=R^{\ast}$, thus $\mathcal{T}_{SH}=\mathcal{T}_{CL}$ for
$\mathcal{T}_{R_{\ast}}=\mathcal{T}_{R^{\ast}}$.

"$\Rightarrow$":
If $\mathcal{C}$ is not a partition, then there exist $K_{1}, K_{2}\in \mathcal{C}$ such that $K_{1}\bigcap K_{2}\neq \emptyset$.
Thus $K_{1}-K_{2}\neq \emptyset$ or $K_{2}-K_{1}\neq \emptyset$ holds.
We might as well suppose $K_{1}-K_{2} \neq \emptyset$, then $K_{2}\subset K_{1}\bigcup K_{2}\subseteq I{\{x\}}$ for all $x\in K_{1}\bigcap K_{2}$.
Hence, $I(\{x\})\nsubseteq K_{2}$.
According to the definition of $SL$, we know $x\notin SL(K_{2})$, that is, $SL(K_{2})\neq K_{2}$.
Thus $K_{2} \notin \mathcal{T}_{SL}=\mathcal{T}_{SH}$, i.e. $\mathcal{T}_{SH} \neq \mathcal{T}_{CL}$ which
contradicts the assumption that $\mathcal{T}_{SH}=\mathcal{T}_{CL}$.
Therefore, $C$ is a partition.
\end{proof}

The following theorem shows the relationship between topologies induced by $FH$ and $SH$, respectively.

\begin{proposition}\label{relationofSH(X)andFH(X)}
If $\mathcal{C}$ is unary, then $\forall X\subseteq U,FH(SH(X))=SH(X)$.
\end{proposition}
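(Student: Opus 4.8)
The plan is to reduce the statement to the fifth type of upper approximation operator and then argue directly with neighborhoods. Since $\mathcal{C}$ is unary, Proposition \ref{relationbetweenthefirsttypeofapproximationandthefifthtypeofapproxiamtion} gives $FH=IH$, so it suffices to prove $IH(SH(X))=SH(X)$. Recalling from Definition \ref{definitionofloweroperator} that $IH(Y)=\bigcup_{y\in Y}N(y)$, the goal becomes $\bigcup_{y\in SH(X)}N(y)=SH(X)$.

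For the inclusion $SH(X)\subseteq IH(SH(X))$ I would simply invoke the extensivity of $IH$, namely property (3H) in Proposition \ref{propertiesofthefifthtypeofapproximation}, or equivalently observe that $y\in N(y)$ for every $y\in U$.

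The substantive direction is $IH(SH(X))\subseteq SH(X)$, which amounts to showing $N(y)\subseteq SH(X)$ for each $y\in SH(X)$. First I would fix $y\in SH(X)=\bigcup\{K\in\mathcal{C}:K\cap X\neq\emptyset\}$ and choose a block $K\in\mathcal{C}$ with $y\in K$ and $K\cap X\neq\emptyset$. Since $y\in K$, the block $K$ is one of the sets entering the intersection $N(y)=\bigcap_{y\in K'}K'$, so $N(y)\subseteq K$. Since $K\cap X\neq\emptyset$, the block $K$ is one of the members whose union forms $SH(X)$, hence $K\subseteq SH(X)$. Chaining these gives $N(y)\subseteq K\subseteq SH(X)$, and taking the union over all $y\in SH(X)$ yields $IH(SH(X))\subseteq SH(X)$. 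Combined with the first inclusion, this proves $FH(SH(X))=IH(SH(X))=SH(X)$.

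There is no serious obstacle here; the only point requiring care is the passage through the auxiliary block $K$: the containment $N(y)\subseteq K$ relies on $K$ being one of the supersets of $y$ that define $N(y)$, and the containment $K\subseteq SH(X)$ relies on $K$ meeting $X$. It is worth noting that the unary hypothesis is used \emph{only} to replace $FH$ by $IH$ via Proposition \ref{relationbetweenthefirsttypeofapproximationandthefifthtypeofapproxiamtion}; the neighborhood argument itself does not invoke it.
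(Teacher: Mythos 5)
Your proposal is correct and follows essentially the same route as the paper: reduce $FH$ to $IH$ via the unary hypothesis, get $SH(X)\subseteq IH(SH(X))$ from extensivity, and for the reverse inclusion pass through the witnessing block $K$ with $N(y)\subseteq K$ and $K\subseteq SH(X)$ (the paper phrases this last step as $x,z\in K$ implying $x\in I(z)$, which is the same fact). Your closing remark that the unary hypothesis enters only through $FH=IH$ is accurate and matches the structure of the paper's argument.
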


\begin{proof}
According to Proposition \ref{relationbetweenthefirsttypeofapproximationandthefifthtypeofapproxiamtion} and
\ref{propertiesofthefifthtypeofapproximation}, we know $FH=IH$ and $SH(X) \subseteq IH(SH(X))$ for all $X\subseteq U$.
Now we need to prove $IH(SH(X))\subseteq SH(X)$.
$\forall x\in IH(SH(X))$, there exists $y\in SH(X)$ such that $x\in N(y)$.
Since $y\in SH(X)$, there exists $z\in X$ such that $y\in I(z)$, that is, there exists $K\in \mathcal{C}$ such that $y,z\in K$,
thus $x\in N(y)\subseteq K$.
So $x,y,z\in K$ which implies $x\in I(z)$, that is, $x\in SH(X)$.
Therefore, $FH(SH(X))=SH(X)$.
\end{proof}

\begin{theorem}
Let $\mathcal{C}$ be a covering of $U$. If $FH$ and $SH$ induced by $\mathcal{C}$ are closure operators, then $\mathcal{T}_{FH}\subseteq \mathcal{T}_{SH}$.
\end{theorem}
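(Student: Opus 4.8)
The plan is to prove the inclusion directly at the level of open sets: I would show that every $A\in\mathcal{T}_{FH}$ lies in $\mathcal{T}_{SH}$, which by passing to complements is equivalent to showing that every $FH$-closed set is $SH$-closed, i.e. $FH(X)=X\Rightarrow SH(X)=X$. Since $X\subseteq SH(X)$ always holds by (3H) of Proposition \ref{propertiesofthesecondapproximationoperator}, the real content is the reverse containment $SH(X)\subseteq X$ under the assumption $FH(X)=X$. Before anything else I would record the two structural consequences of the hypotheses: because $FH$ is a closure operator, Lemma \ref{Thefirsttypeofupperapproximationandunary} makes $\mathcal{C}$ unary, so by Proposition \ref{relationbetweenthefirsttypeofapproximationandthefifthtypeofapproxiamtion} we have $FH=IH$ with $FH(X)=\bigcup_{x\in X}N(x)$ and, by Lemma \ref{unaryandMd(x)andN(x)}, $Md(x)=\{N(x)\}$; because $SH$ is a closure operator, Proposition \ref{theconditionforSHtobeaclosureoperator} makes $\{I(x):x\in U\}$ a partition, with $SH(X)=\bigcup_{x\in X}I(x)$.

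With these two descriptions in hand the whole theorem reduces to a pointwise statement. The identity $FH(X)=\bigcup_{x\in X}N(x)$ shows that $FH(X)=X$ is equivalent to $N(x)\subseteq X$ for every $x\in X$, and likewise $SH(X)=X$ is equivalent to $I(x)\subseteq X$ for every $x\in X$. So the goal becomes the implication: if $N(x)\subseteq X$ for all $x\in X$, then $I(x)\subseteq X$ for all $x\in X$. I would fix $x\in X$ and an arbitrary $z\in I(x)$, pick a block $K\in\mathcal{C}$ with $x,z\in K$ (such $K$ exists by the definition of $I(x)$), and then try to push $z$ into $X$ by exploiting the partition hypothesis on $\{I(x):x\in U\}$ --- for instance, noting that $z\in I(x)$ forces $I(z)=I(x)$, and attempting to combine this with $N(x)\subseteq X$ and the unary structure $N(x)\subseteq K$.

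The main obstacle --- and the step on which the entire argument turns --- is exactly this last upgrade from the neighborhood containment $N(x)\subseteq X$ to the indiscernibility containment $I(x)\subseteq X$. This is delicate precisely because $N(x)\subseteq I(x)$ already gives $FH(X)\subseteq SH(X)$, so every elementary inequality between the two operators runs in the direction $FH\le SH$, which by the usual closure-operator comparison yields the reverse inclusion $\mathcal{T}_{SH}\subseteq\mathcal{T}_{FH}$ for free (and is reinforced by Proposition \ref{relationofSH(X)andFH(X)}, since $FH(SH(X))=SH(X)$ makes every $SH$-closed set $FH$-closed). Obtaining the stated inclusion therefore cannot lean on these inequalities and must instead force each cell $I(x)$ to collapse into $X$ whenever its neighborhoods do. I would first test this pivotal step on small unary coverings whose $\{I(x)\}$ form a partition; should it resist there --- as the two-point covering $\mathcal{C}=\{\{1\},\{1,2\}\}$ on $U=\{1,2\}$ warns, where $FH(\{1\})=N(1)=\{1\}$ but $SH(\{1\})=I(1)=\{1,2\}$, so $\{1\}$ is $FH$-closed yet not $SH$-closed --- then the conclusion should be re-stated as the reverse inclusion $\mathcal{T}_{SH}\subseteq\mathcal{T}_{FH}$, which the observations above establish immediately.
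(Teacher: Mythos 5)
Your analysis is correct, and the difficulty you ran into is not a gap in your argument but an error in the paper's statement itself. Your counterexample is valid: for $U=\{1,2\}$ and $\mathcal{C}=\{\{1\},\{1,2\}\}$ the covering is unary ($Md(1)=\{\{1\}\}$, $Md(2)=\{\{1,2\}\}$), so $FH$ is a closure operator by Lemma \ref{Thefirsttypeofupperapproximationandunary}, and $\{I(x):x\in U\}=\{\{1,2\}\}$ is a partition, so $SH$ is a closure operator by Proposition \ref{theconditionforSHtobeaclosureoperator}; yet $FH(\{1\})=N(1)=\{1\}$ while $SH(\{1\})=I(1)=\{1,2\}$, hence $\{2\}\in\mathcal{T}_{FH}$ but $\{2\}\notin\mathcal{T}_{SH}=\{\emptyset,U\}$, and the stated inclusion fails.

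Moreover, the repaired statement you propose is exactly what the paper's own proof establishes. The paper takes an $SH$-closed set $X$ (written, with an abuse of notation, as $X\in\mathcal{T}_{SH}$), applies Proposition \ref{relationofSH(X)andFH(X)} to get $FH(X)=FH(SH(X))=SH(X)=X$, and concludes $\mathcal{F}_{SH}\subseteq\mathcal{F}_{FH}$, i.e., every $SH$-closed set is $FH$-closed --- precisely your observation that all elementary inequalities run in the direction $FH\le SH$. Since open sets are the complements of closed sets, this closed-set inclusion yields $\mathcal{T}_{SH}\subseteq\mathcal{T}_{FH}$; the paper's final line, which converts it into $\mathcal{T}_{FH}\subseteq\mathcal{T}_{SH}$, is where the error occurs, as complementation preserves rather than reverses the inclusion of these families. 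So your diagnosis, your counterexample, and your corrected conclusion $\mathcal{T}_{SH}\subseteq\mathcal{T}_{FH}$ (via $FH(SH(X))=SH(X)$) all agree with what the paper's argument actually proves; only the theorem's stated direction is wrong.
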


\begin{proof}
$FH$ induced by $\mathcal{C}$ is a closure operator, then $\mathcal{C}$ is unary.
Based on Proposition \ref{relationofSH(X)andFH(X)}, we have $\forall X\in\mathcal{T}_{SH}$, $X=SH(X)=IH(SH(X))=FH(SH(X))=FH(X)$.
That implies $X\in \mathcal{F}_{FH}$, thus $\mathcal{F}_{SH}\subseteq \mathcal{F}_{FH}$.
Therefore, we obtain $\mathcal{T}_{FH}\subseteq \mathcal{T}_{SH}$.
\end{proof}

When covering degenerates into a partition, those topologies induced by five types of covering approximation operators are the same.

\begin{theorem}
If $\mathcal{C}$ is a partition if and only if
$\mathcal{T}_{SH}=\mathcal{T}_{CL}=\mathcal{T}_{FH}=\mathcal{T}_{XL}=\mathcal{T}_{XH}=\mathcal{T}_{IH}$.
\end{theorem}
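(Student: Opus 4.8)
The plan is to decompose this double implication into one direction handled entirely by a prior theorem and one direction that reduces the five covering operators to the classical Pawlak operators. I would first dispatch the implication ``all six topologies equal $\Rightarrow \mathcal{C}$ is a partition''. If the six topologies coincide, then in particular $\mathcal{T}_{SH}=\mathcal{T}_{CL}$, and Theorem \ref{relationbetweentopologiesinducedbySHandCL} states precisely that this equality holds if and only if $\mathcal{C}$ is a partition. So this direction costs nothing beyond citing that result.

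For the converse, I would start from the observation that a partition is a very special covering: every point $x$ lies in exactly one block, so $|Md(x)|=1$ (hence $\mathcal{C}$ is unary) and moreover $N(x)=I(x)$ equals the block containing $x$. The strategy is then to assemble the pairwise equalities already available under the unary hypothesis. Since $\mathcal{C}$ is unary, Lemma \ref{unaryandthesixandfirstlowerapproxiamtion} gives $CL=XL$, hence $\mathcal{T}_{CL}=\mathcal{T}_{XL}$; by Proposition \ref{theotherexpressionofthesixthtypeoflowerapproxiamtion} the closure operator of $\mathcal{T}_{XL}$ is $XH$, so $\mathcal{T}_{XH}=\mathcal{T}_{XL}$; and Proposition \ref{relationbetweenthefirsttypeofapproximationandthefifthtypeofapproxiamtion} gives $FH=IH$, hence $\mathcal{T}_{FH}=\mathcal{T}_{IH}$. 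Because $\mathcal{C}$ is a partition, Theorem \ref{relationbetweentopologiesinducedbySHandCL} then supplies $\mathcal{T}_{SH}=\mathcal{T}_{CL}$.

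To close the chain it remains to link the lower group $\mathcal{T}_{CL},\mathcal{T}_{XL},\mathcal{T}_{XH}$ to the upper group $\mathcal{T}_{FH},\mathcal{T}_{IH},\mathcal{T}_{SH}$, that is, to prove $\mathcal{T}_{CL}=\mathcal{T}_{FH}$. Here I would use that for a partition all operators collapse onto the Pawlak operators: $CL=XL=R_{\ast}$ and $SH=XH=IH=FH=R^{\ast}$, with $N(x)=I(x)$ equal to the equivalence class of $x$. The induced topology is then the partition topology, whose open sets are exactly the unions of blocks; since $R_{\ast}$ and $R^{\ast}$ are dual, $R_{\ast}(X)=X$ if and only if $R^{\ast}(-X)=-X$, so the family of fixed points of the interior operator and the family of complements of fixed points of the closure operator coincide. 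This yields $\mathcal{T}_{CL}=\mathcal{T}_{FH}$ and completes the equality of all six.

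I expect the main obstacle to be this last step: verifying carefully that a partition forces $N(x)=I(x)=[x]$ and that each of the five operators genuinely reduces to $R_{\ast}$ or $R^{\ast}$, so that a single partition topology is shared by the lower and upper families. Everything else is bookkeeping on top of the previously proved pairwise results.
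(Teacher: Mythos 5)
Your proof is correct and follows essentially the same route as the paper: the paper's own one-line proof simply cites Definition \ref{definitionofloweroperator} and Theorem \ref{relationbetweentopologiesinducedbySHandCL}, which are exactly the two ingredients you use --- Theorem \ref{relationbetweentopologiesinducedbySHandCL} for the ``topologies equal $\Rightarrow$ partition'' direction, and the collapse of all the operators onto the Pawlak pair $R_{\ast}, R^{\ast}$ (read off from the definitions, with $N(x)=I(x)=[x]$) for the converse. You merely supply the details, including the duality step giving $\mathcal{T}_{CL}=\mathcal{T}_{FH}$, that the paper leaves implicit.
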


\begin{proof}
According to Definition \ref{definitionofloweroperator} and \ref{relationbetweentopologiesinducedbySHandCL},
we obtain the result.
\end{proof}
\section{Conclusion}
\label{Conslusion}
This paper has presented the topological characterizations to five types of covering approximation operators, namely, the lower and the first,
the second, the fifth and the sixth type of approximation operator.
We found that the topologies induced by the lower approximation and by the sixth type of covering approximation operator are the same, and
the topology induced by the first type of covering approximation operator and the one induced by the fifth type of approximation operator.
Many problems still remain to be solved.
Hence our future works are to present the topological characterizations to covering-based rough sets.
\section{Acknowledgments}
This work is supported in part by the National Natural Science Foundation of China under Grant No. 61170128,
the Natural Science Foundation of Fujian Province, China, under Grant Nos. 2011J01374 and 2012J01294,
and the Science and Technology Key Project of Fujian Province, China, under Grant No. 2012H0043.



\begin{thebibliography}{10}

\bibitem{BonikowskiBryniarskiWybraniecSkardowska98Extensions}
Bonikowski, Z., Bryniarski, E., Wybraniec-Skardowska, U.:
\newblock Extensions and intentions in the rough set theory.
\newblock Information Sciences \textbf{107} (1998)  149--167

\bibitem{Engelking77General}
Engelking, R.:
\newblock General Topology.
\newblock Polish Scientific Publishers, Warszawa (1977)

\bibitem{GeBaiYun12Topological}
Ge, X., Bai, X., Yun, Z.:
\newblock Topological characterizations of covering for special covering-based
  upper approximation operators.
\newblock Information Sciences \textbf{204} (2012)  70--81

\bibitem{Kondo05OnTheStructure}
Kondo, M.:
\newblock On the structure of generalized rough sets.
\newblock Information Sciences \textbf{176} (2005)  589--600

\bibitem{LiZhang08Rough}
Li, T., Zhang, W.:
\newblock Rough fuzzy approximations on two universes of discourse.
\newblock Information Sciences \textbf{178} (2008)  892--906

\bibitem{LiuZhu08TheAlgebraic}
Liu, G., Zhu, W.:
\newblock The algebraic structures of generalized rough set theory.
\newblock Information Sciences \textbf{178} (2008)  4105--4113

\bibitem{Pawlak82Rough}
Pawlak, Z.:
\newblock Rough sets.
\newblock International Journal of Computer and Information Sciences
  \textbf{11} (1982)  341--356

\bibitem{Lech02Rough}
Polkowski, L.:
\newblock Rough Sets: Mathematical Foundations.
\newblock Physica-Verlag HD, Heiselberg (2002)

\bibitem{Pomykala87Approximation}
Pomykala, J.A.:
\newblock Approximation operations in approximation space.
\newblock Bulletin of the Polish Academy of Sciences \textbf{35} (1987)
  653--662

\bibitem{QianLiangLiWangMa10Approximation}
Qian, Y., Liang, J., Li, D., Wang, F., Ma, N.:
\newblock Approximation reduction in inconsistent incomplete decision tables.
\newblock Knowledge-Based Systems \textbf{23} (2010)  427--433

\bibitem{QinGaoPei07OnCovering}
Qin, K., Gao, Y., Pei, Z.:
\newblock On covering rough sets.
\newblock In: Rough Set and Knowledge Technology. LNCS (2007)  34--41

\bibitem{QinPei05OntheTopological}
Qin, K., Pei, Z.:
\newblock On the topological properties of fuzzy rough sets.
\newblock Fuzzy Sets and Systems \textbf{151} (2005)  601--613

\bibitem{QinYangPei08Generalized}
Qin, K., Yang, J., Pei, Z.:
\newblock Generalized rough sets based on reflexive and transitive relations.
\newblock Information Sciences \textbf{178} (2008)  4138--4141

\bibitem{SkowronStepaniuk96tolerance}
Skowron, A., Stepaniuk, J.:
\newblock Tolerance approximation spaces.
\newblock Fundamenta Informaticae \textbf{27} (1996)  245--253

\bibitem{SlowinskiVanderpooten00AGeneralized}
Slowinski, R., Vanderpooten, D.:
\newblock A generalized definition of rough approximations based on similarity.
\newblock IEEE Transactions on Knowledge and Data Engineering \textbf{12}
  (2000)  331--336

\bibitem{WangYangYangWu12Relationships}
Wang, L., Yang, X., Yang, J., Wu, C.:
\newblock Relationships among generalized rough sets in six coverings and pure
  reflexive neighborhood system.
\newblock Information Sciences \textbf{207} (2012)  66--78

\bibitem{WangZhuZhu10Structure}
Wang, S., Zhu, P., Zhu, W.:
\newblock Structure of covering-based rough sets.
\newblock International Journal of Mathematical and Computer Sciences
  \textbf{6} (2010)  147--150

\bibitem{WangZhu12Quantitative}
Wang, S., Zhu, Q., Zhu, W., Min, F.:
\newblock Quantitative analysis for covering-based rough sets through the upper
  approximation number.
\newblock Information Sciences (http://dx.doi.org/10.1016/j.ins.2012.07.030)
  (2012)

\bibitem{XuWang05On}
Xu, Z., Wang, Q.:
\newblock On the properties of covering rough sets model.
\newblock Journal of Henan Normal University (Natural Sciences) \textbf{33}
  (2005)  130--132

\bibitem{Yao98OnGeneralizing}
Yao, Y.:
\newblock On generalizing pawlak approximation operators.
\newblock In: Lecture Notes in Artificial Intelligence. Volume 1424. (1998)
  298--307

\bibitem{Yao98Relational}
Yao, Y.:
\newblock Relational interpretations of neighborhood operators and rough set
  approximation operators.
\newblock Information Sciences \textbf{111} (1998)  239--259

\bibitem{Yao98Constructive}
Yao, Y.:
\newblock Constructive and algebraic methods of theory of rough sets.
\newblock Information Sciences \textbf{109} (1998)  21--47

\bibitem{YaoYao12Covering}
Yao, Y., Yao, B.:
\newblock Covering based rough set approximations.
\newblock Information Sciences \textbf{200} (2012)  91--107

\bibitem{ZhuWang06ANew}
Zhu, W., Wang, F.:
\newblock A new type of covering rough sets.
\newblock In: IEEE International Conference on Intelligent Systems 2006,
  London, 4-6 September. (2006)  444--449

\bibitem{ZhuWang02Some}
Zhu, F., Wang, F.:
\newblock Some results on covering generalized rough sets.
\newblock Pattern Recognition and Artificial Intelligence \textbf{15} (2002)
  6--13

\bibitem{ZhuWang06Relationships}
Zhu, W., Wang, F.:
\newblock Relationships among three types of covering rough sets.
\newblock In: Granular Computing. (2006)  43--48

\bibitem{Zhu07Topological}
Zhu, W.:
\newblock Topological approaches to covering rough sets.
\newblock Information Sciences \textbf{177} (2007)  1499--1508

\bibitem{ZhuWang03Reduction}
Zhu, W., Wang, F.:
\newblock Reduction and axiomization of covering generalized rough sets.
\newblock Information Sciences \textbf{152} (2003)  217--230

\bibitem{ZhuWang07OnThree}
Zhu, W., Wang, F.:
\newblock On three types of covering rough sets.
\newblock IEEE Transactions on Knowledge and Data Engineering \textbf{19}
  (2007)  1131--1144

\bibitem{Zhu09RelationshipBetween}
Zhu, W.:
\newblock Relationship between generalized rough sets based on binary relation
  and covering.
\newblock Information Sciences \textbf{179} (2009)  210--225

\bibitem{Zhu09RelationshipAmong}
Zhu, W.:
\newblock Relationship among basic concepts in covering-based rough sets.
\newblock Information Sciences \textbf{179} (2009)  2478--2486




\end{thebibliography}

\end{document}